\newcommand{\hypothesis}[2]{%
\begin{tcolorbox}[colback=green!10!white,leftrule=2.5mm,size=title]
\textbf{#1}: #2
\end{tcolorbox}
\vspace{-0.1cm}%
}
\tikzset{
    >=stealth',
    punkt/.style={
           rectangle,
           rounded corners,
           draw=black, very thick,
           text width=6.5em,
           minimum height=2em,
           text centered},
    pil/.style={
           ->,
           thick,
           shorten <=2pt,
           shorten >=2pt},
    el/.style={
        inner sep=2pt,
        align=left,
        right=0.15cm}
    elv/.style={
        inner sep=2pt,
        align=left,
        above=0.15cm}
}
\def\pars{{\boldsymbol{\theta}}}
\def\moms{{\mathbf{m}}}
\def\rawm{{\mathbf{v}}}
\def\rands{{\mathbf{R}}}
\DeclareMathOperator*{\argmin}{argmin}
\newcommand{\cmark}{\ding{51}}%
\newcommand{\xmark}{\ding{55}}%
\newtheorem{proposition}{Proposition}%
\newcommand{\experiment}[1]{\textbf{\underline{#1}}}
\newcommand{\removelatexerror}{\let\@latex@error\@gobble}
\icmltitlerunning{How Good is the Bayes Posterior in Deep Neural Networks Really?}
\begin{document}

\twocolumn[
\icmltitle{How Good is the Bayes Posterior in Deep Neural Networks Really?}

\icmlsetsymbol{equal}{*}
\icmlsetsymbol{atgoogle}{+}

\begin{icmlauthorlist}
\icmlauthor{Florian Wenzel}{equal,google}
\icmlauthor{Kevin Roth}{equal,atgoogle,ethz}
\icmlauthor{Bastiaan S. Veeling}{equal,atgoogle,uva,google}
\icmlauthor{Jakub \'Swi\k{a}tkowski}{warsaw,atgoogle}
\icmlauthor{Linh Tran}{imperial,atgoogle}
\icmlauthor{Stephan Mandt}{uci,atgoogle}
\icmlauthor{Jasper Snoek}{google}
\icmlauthor{Tim Salimans}{google}
\icmlauthor{Rodolphe Jenatton}{google}
\icmlauthor{Sebastian Nowozin}{msr,atgoogle}
\end{icmlauthorlist}

\icmlaffiliation{ethz}{ETH Zurich}
\icmlaffiliation{uva}{University of Amsterdam}
\icmlaffiliation{warsaw}{University of Warsaw}
\icmlaffiliation{imperial}{Imperial College London}
\icmlaffiliation{uci}{University of California, Irvine}
\icmlaffiliation{msr}{Microsoft Research}
\icmlaffiliation{google}{Google Research}

\icmlcorrespondingauthor{Florian Wenzel}{\mbox{florianwenzel@google.com}}

\icmlkeywords{Bayesian Deep Learning}

\vskip 0.3in
]

\newcommand{\workAtGoogle}{\textsuperscript{+}Work done while at Google }
\printAffiliationsAndNotice{\icmlEqualContribution\workAtGoogle} %

\begin{abstract}
During the past five years the Bayesian deep learning community has developed increasingly accurate and efficient approximate inference procedures that allow for Bayesian inference in deep neural networks.
However, despite this algorithmic progress and the promise of improved uncertainty quantification and sample efficiency there are---as of early 2020---no publicized deployments of Bayesian neural networks in industrial practice.
In this work we cast doubt on the current understanding of Bayes posteriors in popular deep neural networks:
we demonstrate through careful MCMC sampling that the posterior predictive induced by the Bayes posterior yields systematically worse predictions compared to simpler methods including point estimates obtained from SGD.
Furthermore, we demonstrate that predictive performance is improved significantly through the use of a ``cold posterior'' that overcounts evidence.
Such cold posteriors sharply deviate from the Bayesian paradigm but are commonly used as heuristic in Bayesian deep learning papers.
We put forward several hypotheses that could explain cold posteriors and evaluate the hypotheses through experiments.
Our work questions the goal of accurate posterior approximations in Bayesian deep learning:
If the true Bayes posterior is poor, what is the use of more accurate approximations?
Instead, we argue that it is timely to focus on understanding the origin of the improved performance of cold posteriors.

\textsc{Code:}~\url{https://github.com/google-research/google-research/tree/master/cold_posterior_bnn}
\end{abstract}

\section{Introduction}
In supervised deep learning we use a training dataset
$\mathcal{D} = \{(x_i,y_i)\}_{i=1,\dots,n}$ and a probabilistic model
$p(y|x,\pars)$ to minimize the regularized cross-entropy objective,
\begin{equation}
L(\pars) := -\frac{1}{n} \sum_{i=1}^n \log p(y_i|x_i,\pars) + \Omega(\pars),
\label{eqn:dnnloss}
\end{equation}
where $\Omega(\pars)$ is a regularizer over model parameters.
We approximately optimize~(\ref{eqn:dnnloss}) using variants of stochastic gradient descent (SGD),~\citep{sutskever2013momentum}.
Beside being efficient, the SGD minibatch noise also has generalization benefits~\citep{masters2018batchsize,mandt2017stochastic}.

\begin{figure}[!t]
\center{\includegraphics[width=\columnwidth]{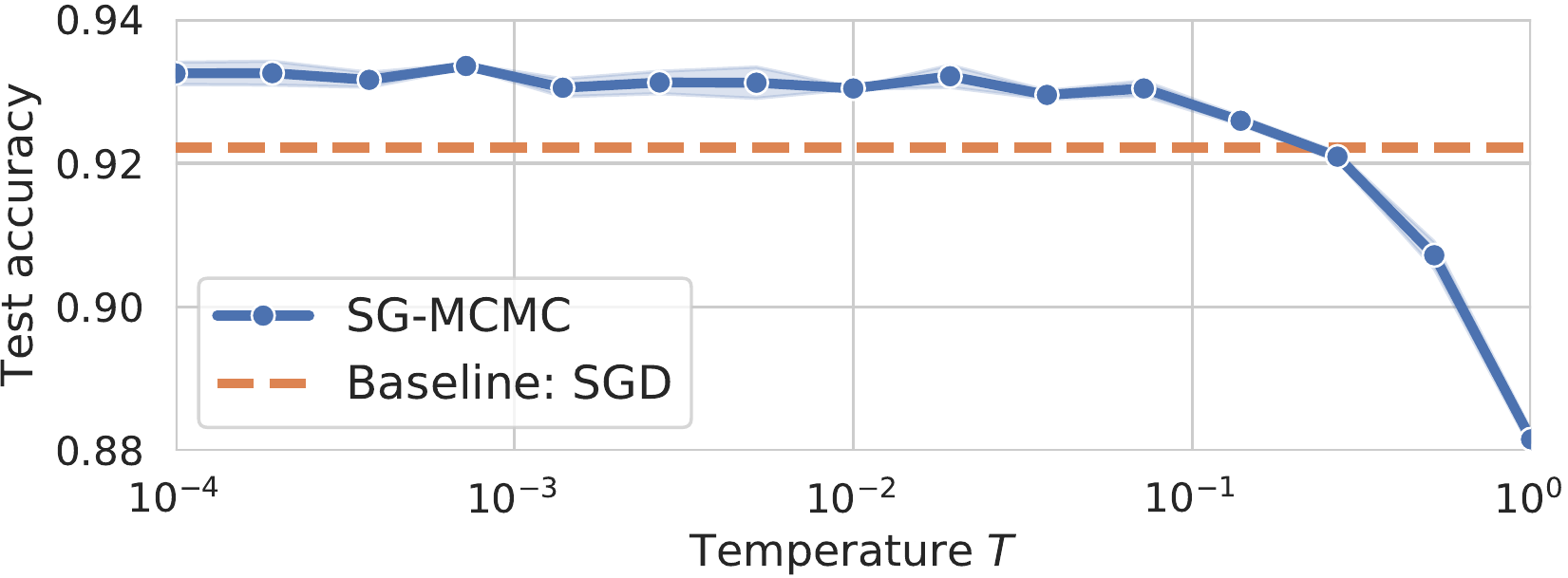}}%
\vspace{-0.35cm}%
\caption{The ``\textbf{cold posterior}'' effect: for a ResNet-20 on CIFAR-10 we can improve the generalization performance significantly by cooling the posterior with a temperature $T \ll 1$, deviating from the Bayes posterior
$p(\pars|\mathcal{D}) \propto \exp(-U(\pars)/T)$ at $T=1$.}%
\label{fig:resnet-tempdep-acc}%
\vspace{-0.35cm}%
\end{figure}

\subsection{Bayesian Deep Learning}
In Bayesian deep learning we do not optimize for a \emph{single} likely model but instead want to discover \emph{all} likely models.
To this end we approximate the \emph{posterior distribution} over model parameters,
$p(\pars|\mathcal{D}) \propto \exp(-U(\pars)/T)$, where $U(\pars)$
is the \emph{posterior energy function},
\begin{equation}
U(\pars) := - \sum_{i=1}^n \log p(y_i|x_i,\pars) - \log p(\pars),
\label{eqn:U}
\end{equation}
and $T$ is a \emph{temperature}.
Here $p(\pars)$ is a \emph{proper} prior density function, for example a Gaussian density.
If we scale $U(\pars)$ by $1/n$ and set $\Omega(\pars)=-\frac{1}{n} \log p(\pars)$ we recover $L(\pars)$ in~(\ref{eqn:dnnloss}).
Therefore $\exp(-U(\pars))$ simply gives high probability to models which have low loss $L(\pars)$.
Given $p(\pars|\mathcal{D})$ we \emph{predict} on a new instance $x$ by averaging over all likely models,
\begin{equation}
    p(y|x,\mathcal{D}) = \int p(y|x,\pars) \, p(\pars|\mathcal{D}) \,\textrm{d}\pars,
    \label{eqn:postpred}
\end{equation}
where~(\ref{eqn:postpred}) is also known as \emph{posterior predictive} or \emph{Bayes ensemble}.
Solving the integral~(\ref{eqn:postpred}) exactly is not possible.
Instead, we approximate the integral using a sample approximation,
$p(y|x,\mathcal{D}) \approx \frac{1}{S} \sum_{s=1}^S p(y|x,\pars^{(s)})$,
where $\pars^{(s)}$, $s=1,\dots,S$, is approximately sampled from $p(\pars|\mathcal{D})$.

\newcommand{\phenomenon}{Cold Posteriors}
The remainder of this paper studies a surprising effect shown in~Figure~\ref{fig:resnet-tempdep-acc}, the ``\emph{\phenomenon}'' effect:
for deep neural networks the Bayes posterior (at temperature $T=1$) works poorly but by cooling the posterior using a temperature $T < 1$ we can significantly improve the prediction performance.
\begin{tcolorbox}[colback=black!10!white,size=title]
\textbf{\phenomenon}: among all temperized posteriors the best posterior predictive performance on holdout data is achieved at temperature $T < 1$.
\end{tcolorbox}%
\vspace{-0.3cm}%

\subsection{Why Should Bayes ($T=1$) be Better?}\label{sec:bayes-better}
Why would we expect that predictions made by the \emph{ensemble model}~(\ref{eqn:postpred}) could improve over predictions made at a single well-chosen parameter?
There are three reasons:
1. \emph{Theory}: for several models where the predictive performance can be analyzed it is known that the posterior predictive~(\ref{eqn:postpred}) can dominate common point-wise estimators based on the likelihood,~\citep{komaki1996predictive}, even in the case of misspecification,~\citep{fushiki2005bootstrapprediction,ramamoorthi2015misspecification};
2. \emph{Classical empirical evidence}: for classical statistical models, averaged predictions~(\ref{eqn:postpred}) have been observed to be more robust in practice,~\citep{geisser1993predictiveinference}; and
3. \emph{Model averaging}: recent deep learning models based on deterministic model averages,~\citep{lakshminarayanan2017deepensembles,ovadia2019modeluncertainty}, have shown good predictive performance.

Note that a large body of work in the area of Bayesian deep learning in the last five years is motivated by the assertion that predicting using~(\ref{eqn:postpred}) is desirable.
We will confront this assertion through a simple experiment to show that our understanding of the Bayes posterior in deep models is limited.  Our work makes the following \textbf{contributions}:
\begin{compactitem}
\item We demonstrate for two models and tasks (ResNet-20 on CIFAR-10 and CNN-LSTM on IMDB) that the Bayes posterior predictive has poor performance compared to SGD-trained models.
\item We put forth and systematically examine hypotheses that could explain the observed behaviour.
\item We introduce two new diagnostic tools for assessing the approximation quality of stochastic gradient Markov chain Monte Carlo methods (SG-MCMC) and demonstrate that the posterior is accurately simulated by existing SG-MCMC methods.
\end{compactitem}

\section{Cold Posteriors Perform Better}\label{sec:demo}
We now examine the quality of the posterior predictive for two simple deep neural networks.
We will describe details of the models, priors, and approximate inference methods in Section~\ref{sec:bdl-practice} and Appendix~\ref{sec:resnet-details} to~\ref{sec:cnnlstm-details}.
In particular, we will study the accuracy of our approximate inference and the influence of the prior in great detail in Section~\ref{sec:inference-accuracy} and Section~\ref{sec:prior}, respectively.
Here we show that temperized Bayes ensembles obtained via low temperatures $T < 1$ outperform the true Bayes posterior at temperature $T=1$.

\subsection{Deep Learning Models: ResNet-20 and LSTM}

\begin{figure}[!t]
\center{\includegraphics[width=\columnwidth]{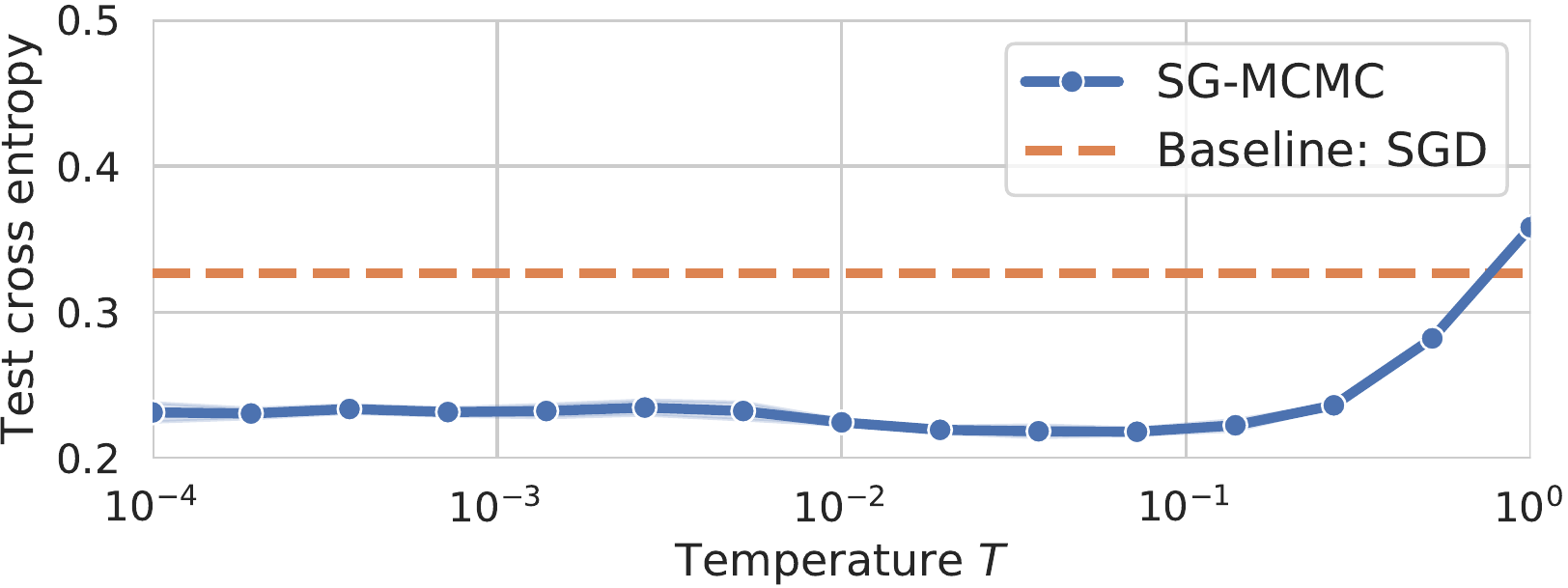}}%
\vspace{-0.4cm}%
\caption{Predictive performance on the CIFAR-10 test set for a cooled ResNet-20 Bayes posterior.
The SGD baseline is separately tuned for the same model (Appendix~\ref{sec:resnet-sgd-details}).}%
\label{fig:resnet-tempdep-ce}%
\vspace{-0.3cm}%
\end{figure}

\textbf{ResNet-20 on CIFAR-10.}
Figure~\ref{fig:resnet-tempdep-acc} and~\ref{fig:resnet-tempdep-ce} show the test accuracy and test cross-entropy of a Bayes prediction~(\ref{eqn:postpred}) for a ResNet-20 on the CIFAR-10 classification task.\footnote{A similar plot is Figure~3 in~\citep{baldock2019bayesiannn} and another is in the appendix of~\citep{zhang2019cyclicalsgmcmc}.}
We can clearly see that both accuracy and cross-entropy are significantly improved for a temperature $T < 1/10$ and that this trend is consistent.
Also, surprisingly this trend holds all the way to small $T = 10^{-4}$: the test performance obtained from an ensemble of models at temperature $T=10^{-4}$ is superior to the one obtained from $T=1$ and better than the performance of a single model trained with SGD. In Appendix~\ref{sec:tempplots-appendix} we show that the uncertainty metrics Brier score~\cite{brier1950verification} and expected calibration error (ECE)~\cite{naeini2015obtaining} are also improved by cold posteriors.

\begin{figure}[!t]
\center{\includegraphics[width=\columnwidth]{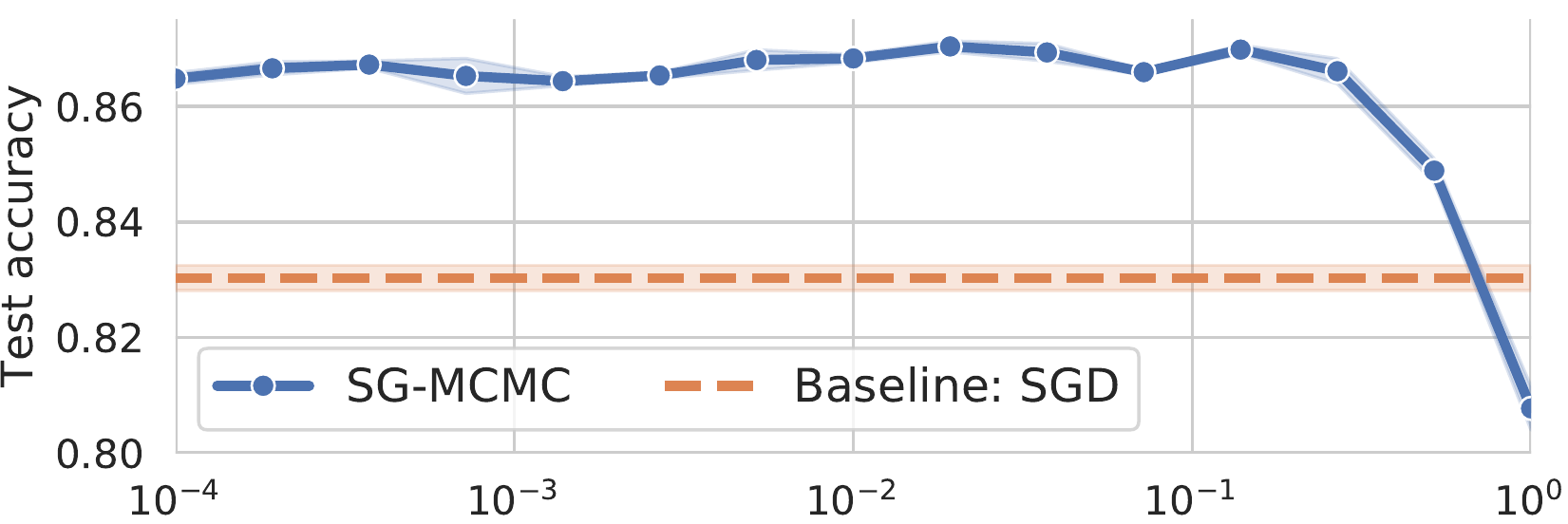}}%
\vspace{-0.2cm}%
\center{\includegraphics[width=\columnwidth]{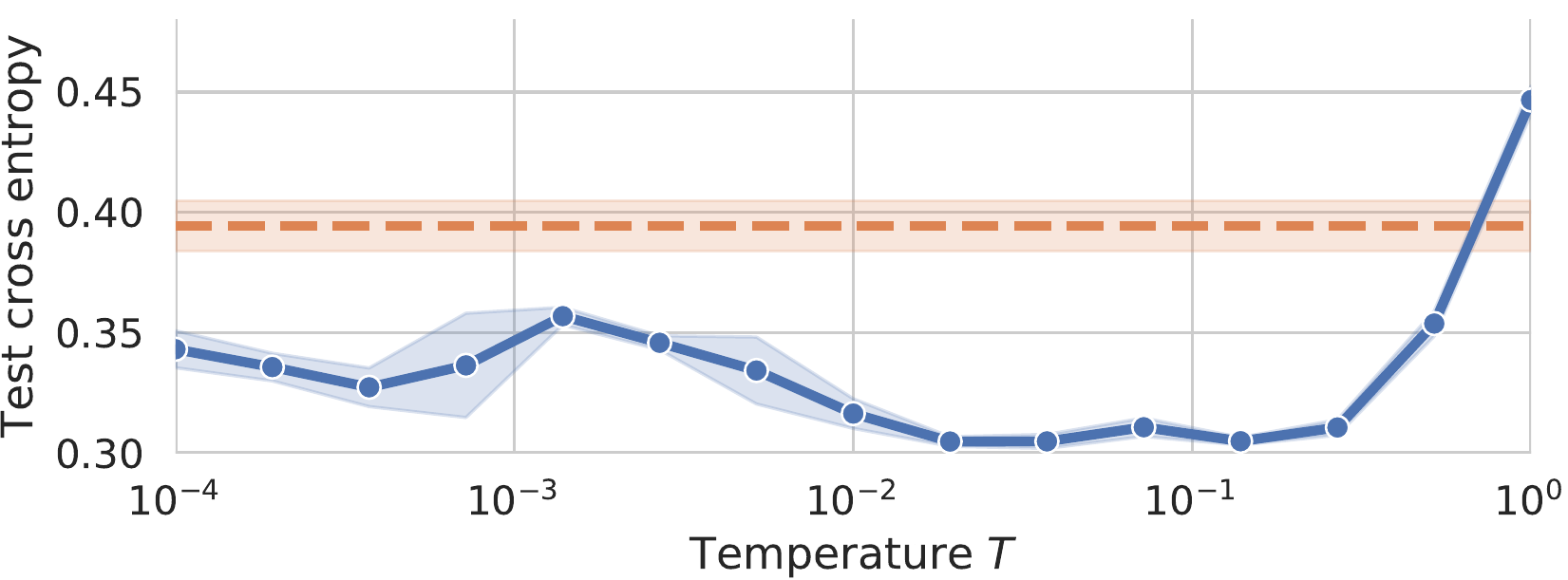}}%
\vspace{-0.4cm}%
\caption{Predictive performance on the IMDB sentiment task test set for a tempered CNN-LSTM Bayes posterior.  Error bars are $\pm$ one standard error over three runs.  See Appendix~\ref{sec:cnnlstm-sgd-details}.}%
\label{fig:imdb-tempdep}%
\vspace{-0.35cm}%
\end{figure}

\textbf{CNN-LSTM on IMDB text classification.}
Figure~\ref{fig:imdb-tempdep} shows the test accuracy and test cross-entropy of the tempered prediction~(\ref{eqn:postpred}) for a CNN-LSTM model on the IMDB sentiment classification task.
The optimal predictive performance is again achieved for a tempered posterior with a temperature range of approximately $0.01 < T < 0.2$.

\subsection{Why is a Temperature of $T < 1$ a Problem?}
There are two reasons why cold posteriors are problematic.
\emph{First}, $T < 1$ corresponds to artificially sharpening the posterior, which can be interpreted as overcounting the data by a factor of $1/T$ and a rescaling\footnote{E.g., using a Normal prior with temperature $T$ results in a Normal distribution with scaled variance by a factor of $T$.} of the prior as $p(\theta)^\frac{1}{T}$. This is equivalent to a Bayes posterior obtained from a dataset consisting of $1/T$ replications of the original data, giving too strong evidence to individual models. For $T=0$, all posterior probability mass is concentrated on the set of maximum a posteriori (MAP) point estimates.
\emph{Second}, $T=1$ corresponds to the true Bayes posterior and performance gains for $T < 1$ point to a deeper and potentially resolvable problem with the prior, likelihood, or inference procedure.

\subsection{Confirmation from the Literature}
Should the strong performance of tempering the posterior with $T \ll 1$ surprise us?
It certainly is an observation that needs to be explained, but it is not new:
if we comb the literature of Bayesian inference in deep neural networks we find broader evidence for this phenomenon.

\paragraph{Related work that uses $T < 1$ posteriors in SG-MCMC.}
The following table lists work that uses SG-MCMC on deep neural networks and tempers the posterior.\footnote{%
For~\citep{li2016preconditionedsgld} the tempering with $T=1/\sqrt{n}$ arises due to an implementation mistake.
For~\citep{heek2019sgmcmc} we communicated with the authors, and tempering arises due to overcounting data by a factor of $5$, approximately justified by data augmentation, corresponding to $T=1/5$.
For~\citep{zhang2019cyclicalsgmcmc} the original implementation contains inadvertent tempering, however, the authors added a study of tempering in a revision.
}
\begin{center}%
\vspace{-0.25cm}%
\scalebox{0.9}{%
\begin{tabular}{ll}\toprule
Reference & Temperature $T$\\ \midrule
\citep{li2016preconditionedsgld} & $1/\sqrt{n}$\\
\citep{leimkuhler2019partitionedlangevin} & $T < 10^{-3}$\\
\citep{heek2019sgmcmc} & $T=1/5$\\
\citep{zhang2019cyclicalsgmcmc} & $T=1/\sqrt{50000}$\\
\bottomrule
\end{tabular}%
}%
\vspace{-0.25cm}%
\end{center}%

\paragraph{Related work that uses $T < 1$ posteriors in Variational Bayes.}
In the variational Bayes approach to Bayesian neural networks,~\citep{blundell2015weightuncertainty,hinton1993mdl,mackay1995ensemblelearning,barber1998ensemblelearning} we optimize the parameters $\mathbf{\tau}$ of a variational distribution $q(\pars|\mathbf{\tau})$ by maximizing the evidence lower bound (ELBO),
\begin{equation}
    \mathbb{E}_{\pars \sim q(\pars|\mathbf{\tau})}\!\!\left[\sum_{i=1}^n \log p(y_i|x_i,\pars)\right]\!
    - \lambda D_{\textrm{KL}}(q(\pars|\mathbf{\tau}) \| p(\pars)).\!\!\!
    \label{eqn:vb-bnn}
\end{equation}
For $\lambda=1$ this directly minimizes $D_{\textrm{KL}}(q(\pars|\mathbf{\tau})\,\|\,p(\pars|\mathcal{D}))$ and thus for sufficiently rich variational families will closely approximate the true Bayes posterior $p(\pars|\mathcal{D})$.
However, in practice researchers discovered that using values $\lambda < 1$ provides better predictive performance, with common values shown in the following table.\footnote{For~\citep{osawa2019practicalbdl} scaling with $\lambda$ arises due to their use of a ``data augmentation factor'' $\rho \in \{5, 10\}$.}
\begin{center}%
\vspace{-0.1cm}%
\scalebox{0.9}{%
\begin{tabular}{ll}\toprule
Reference & KL term weight $\lambda$ in~(\ref{eqn:vb-bnn})\\ \midrule
\citep{zhang2018noisynaturalgradient} & $\lambda \in \{1/2, 1/10\}$ \\
\citep{bae2018kfacvb} & tuning of $\lambda$, unspecified \\
\citep{osawa2019practicalbdl} & $\lambda \in \{1/5, 1/10\}$\\
\citep{ashukha2019uncertaintyestimation} & $\lambda$ from $10^{-5}$ to $10^{-3}$\\
\bottomrule
\vspace{-0.25cm}%
\end{tabular}%
}%
\end{center}
In Appendix~\ref{sec:tempered-elbo} we show that the KL-weighted ELBO~(\ref{eqn:vb-bnn}) arises from tempering the likelihood part of the posterior.

From the above list we can see that the cold posterior problem has left a trail in the literature, and in fact we are not aware of \emph{any} published work demonstrating well-performing Bayesian deep learning at temperature $T=1$.
We now give details on how we perform accurate Bayesian posterior inference in deep learning models.

\section{Bayesian Deep Learning in Practice}\label{sec:bdl-practice}
In this section we describe how we achieve efficient and accurate simulation of Bayesian neural network posteriors.
This section does not contain any major novel contribution but instead combines existing work.

\subsection{Posterior Simulation using Langevin Dynamics}\label{sec:langevin-dynamics}
To generate approximate parameter samples $\pars \sim p(\pars\,|\,\mathcal{D})$ we
consider \emph{Langevin dynamics} over parameters $\pars \in \mathbb{R}^d$ and momenta
$\moms \in \mathbb{R}^d$, defined by the Langevin stochastic differential equation (SDE),
\begin{align}
\textrm{d}\, \pars &= \mathbf{M}^{-1}\, \moms \,\textrm{d}t,\label{eqn:langevin1}\\
\textrm{d}\, \moms &= -\nabla_{\pars} U(\pars)\,\textrm{d}t
    -\gamma \moms\,\textrm{d}t
    + \sqrt{2 \gamma T} \, \mathbf{M}^{1/2} \,\textrm{d}\mathbf{W}.\label{eqn:langevin2}
\end{align}
Here $U(\pars)$ is the \emph{posterior energy} defined in~(\ref{eqn:U}), and $T > 0$ is the \emph{temperature}.
We use $\mathbf{W}$ to denote a standard multivariate Wiener process, which we can loosely understand as a generalized Gaussian distribution~\citep{sarkka2019appliedsde,leimkuhler2016molecular}.
The \emph{mass matrix} $\mathbf{M}$ is a preconditioner, and if we use no preconditioner then
$\mathbf{M}=I$, such that all $\mathbf{M}$-related terms vanish from the equations.
The \emph{friction} parameter $\gamma > 0$ controls both the strength of coupling between the moments $\moms$ and parameters $\pars$ as well as the amount of injected noise~\citep{langevin1908,leimkuhler2016molecular}.
For any friction $\gamma > 0$ the SDE~(\ref{eqn:langevin1}--\ref{eqn:langevin2}) has the same limiting distribution, but the choice of friction \emph{does} affect the speed of convergence to this distribution.
Simulating the continuous Langevin SDE~(\ref{eqn:langevin1}--\ref{eqn:langevin2}) produces a trajectory distributed according to $\exp(-U(\pars)/T)$ and the Bayes posterior is recovered for $T=1$.

\subsection{Stochastic Gradient MCMC (SG-MCMC)}\label{sec:sgmcmc}
Bayesian inference now corresponds to simulating the above SDE~(\ref{eqn:langevin1}--\ref{eqn:langevin2}) and this requires numerical discretization.
For efficiency \emph{stochastic gradient Markov chain Monte Carlo} (SG-MCMC) methods further approximate $\nabla_{\pars} U(\pars)$ with a minibatch gradient~\citep{welling2011bayesian,chen2014stochastichmc}.
For a minibatch $B \subset \{1,2,\dots,n\}$ we
first compute the minibatch average gradient $\tilde{G}(\pars)$,
\begin{equation}
\nabla_{\pars} \tilde{G}(\pars) := -\frac{1}{|B|} \sum_{i \in B} \nabla_{\pars} \log p(y_i|x_i,\pars)
- \frac{1}{n} \nabla_{\pars} \log p(\pars),
\label{eqn:Ggrad}
\end{equation}
and approximate $\nabla_{\pars} U(\pars)$ with the unbiased estimate $\nabla_{\pars} \tilde{U}(\pars) = n \nabla_{\pars} \tilde{G}(\pars)$. %
Here $|B|$ is the minibatch size and $n$ is the training set size;
in particular, note that the log prior scales with $1/n$ regardless of the batch size.

The SDE~(\ref{eqn:langevin1}--\ref{eqn:langevin2}) is defined in continuous time ($\textrm{d}t$), and in order to
solve the dynamics numerically we have to discretize the time domain~\citep{sarkka2019appliedsde}.
In this work we use a simple first-order symplectic Euler discretization,~\citep{leimkuhler2016molecular}, as first proposed for~(\ref{eqn:langevin1}--\ref{eqn:langevin2}) by~\cite{chen2014stochastichmc}.
Recent work has used more sophisticated discretizations,~\citep{chen2015convergencesgmcmc,shang2015covariancethermostat,heber2019tati,heek2019sgmcmc}.
Applying the symplectic Euler scheme to~(\ref{eqn:langevin1}--\ref{eqn:langevin2}) gives the discrete time update equations,
\begin{align}
\moms^{(t)} &= (1-h\gamma) \, \moms^{(t-1)} - h n \nabla_{\pars} \tilde{G}(\pars^{(t-1)})\label{eqn:symeuler-moms1}\\
& \quad + \sqrt{2 \gamma h T} \, \mathbf{M}^{1/2} \, \rands^{(t)},\label{eqn:symeuler-moms2}\\
\pars^{(t)} &= \pars^{(t-1)} + h \, \mathbf{M}^{-1} \moms^{(t)},\label{eqn:symeuler-pars}
\end{align}
where $\rands^{(t)} \sim \mathcal{N}_d(0, I_d)$ is a standard Normal vector.

In (\ref{eqn:symeuler-moms1}--\ref{eqn:symeuler-pars}), the parameterization is in terms of step size $h$ and friction $\gamma$.
These quantities are different from typical SGD parameters.
In Appendix~\ref{sec:sgmcmc-deeplearning-parameterization} we establish an exact correspondence between the SGD learning rate $\ell$ and momentum decay parameters $\beta$ and SG-MCMC parameters.
For the symplectic Euler discretization of Langevin dynamics, we derive this relationship as $h := \sqrt{\ell/n}$, and $\gamma := (1-\beta) \sqrt{n/\ell}$, where $n$ is the total training set size.

\subsection{Accurate SG-MCMC Simulation}\label{sec:accurate-sgmcmc}
In practice there remain two sources of error when following the dynamics~(\ref{eqn:symeuler-moms1}--\ref{eqn:symeuler-pars}):
\begin{compactitem}
\item \emph{Minibatch noise}: $\nabla_{\pars} \tilde{U}(\pars)$ is an unbiased estimate of $\nabla_{\pars} U(\pars)$ but contains additional estimation variance.
\item \emph{Discretization error}: we incur error by following a continuous-time path~(\ref{eqn:langevin1}--\ref{eqn:langevin2}) using discrete steps~(\ref{eqn:symeuler-moms1}--\ref{eqn:symeuler-pars}).
\end{compactitem}
We use two methods to reduce these errors: \emph{preconditioning} and \emph{cyclical time stepping}.

\textbf{Layerwise Preconditioning.}
Preconditioning through a choice of matrix $\mathbf{M}$ is a common way to improve the behavior of optimization methods.
\citet{li2016preconditionedsgld} and~\citet{ma2015complete} proposed preconditioning for SG-MCMC methods, and in the context of molecular dynamics the use of a matrix $\mathbf{M}$ has a long tradition as well,~\citep{leimkuhler2016molecular}.
Li's proposal is an adaptive preconditioner inspired by RMSprop,~\citep{tieleman2012rmsprop}.
Unfortunately, using the discretized Langevin dynamics with a preconditioner $\mathbf{M}(\pars)$ that depends on $\pars$ compromises the correctness of the dynamics.\footnote{\citet{li2016preconditionedsgld} derives the required correction term, which however is expensive to compute and omitted in practice.}
We propose a simpler preconditioner that limits the frequency of adaptating $\mathbf{M}$:
after a number of iterations we estimate a new preconditioner $\mathbf{M}$ using a small number of batches, say 32, but without updating any model parameters.
This preconditioner then remains fixed for a number of iterations, for example, the number of iterations it takes to visit the training set once, i.e. one epoch.
We found this strategy to be highly effective at improving simulation accuracy.
For details, please see Appendix~\ref{sec:precond}.

\textbf{Cyclical time stepping.}
The second method to improve simulation accuracy is to decrease the discretization step size $h$.
\citet{chen2015convergencesgmcmc} studied the consequence of both minibatch noise and discretization error on simulation accuracy and showed that the overall simulation error goes to zero for $h \searrow 0$.
While lowering the step size $h$ to a small value would also make the method slow, recently \citet{zhang2019cyclicalsgmcmc} propose to perform \emph{cycles} of iterations $t=1,2,\dots$ with a high-to-low step size schedule $h_0 \, C(t)$ described by an initial step size $h_0$ and a function $C(t)$ that starts at $C(1)=1$ and has $C(L) = 0$ for a cycle length of $L$ iterations.
Such cycles retain fast simulation speed in the beginning while accepting simulation error.
Towards the end of each cycle however, a small step size ensures an accurate simulation.
We use the cosine schedule from~\citep{zhang2019cyclicalsgmcmc} for $C(t)$, see Appendix~\ref{sec:model-details}.

We integrate these two techniques together into a practical SG-MCMC procedure, Algorithm~\ref{alg:symeuler}.
When no preconditioning and no cosine schedule is used ($\mathbf{M}=I$ and $C(t)=1$ in all iterations) and $T(t)=0$ this algorithm is equivalent to \emph{Tensorflow}'s SGD with momentum (Appendix~\ref{sec:sgd}).

\begin{figure}[!tb]
\vspace{-0.2cm}%
\centering
\begin{minipage}{\columnwidth}%
\begingroup
\removelatexerror
\begin{algorithm2e}[H]
\DontPrintSemicolon
\SetKwComment{tcn}{}{}%
\SetKwFunction{FnSymEulerCanonical}{SymEulerSGMCMC\!}
\SetKw{KwYield}{yield}
\SetKwProg{Fn}{Function}{}{}
\Fn{\FnSymEulerCanonical{$\tilde{G}$, $\pars^{(0)}$, $\ell$, $\beta$,\! $n$,\! $T$}}{
  \KwInput{
    $\tilde{G}: \Theta \to \mathbb{R}$ mean energy function estimate;
    $\pars^{(0)} \in \mathbb{R}^d$ initial parameter;
    $\ell > 0$ learning rate;
    $\beta \in [0,1)$ momentum decay;
    $n$ total training set size;
    $T(t) \geq 0$ temperature schedule
  }
  \KwOutput{Sequence $\pars^{(t)}$, $t=1,2,\dots$}
  $h_0 \leftarrow \sqrt{\ell / n}$ \tcp*{SDE time step}
  $\gamma \leftarrow (1-\beta) \sqrt{n/\ell}$ \tcp*{friction}
  Sample $\moms^{(0)} \sim \mathcal{N}_d(0, I_d)$\;%
  $\mathbf{M} \leftarrow I$ \tcp*{Initial $\mathbf{M}$}
  \For{ $t=1,2,\dots$ }{
    \If{ new epoch }{
        $\moms_c \leftarrow \mathbf{M}^{-1/2} \, \moms^{(t-1)}$\;%
        $\mathbf{M} \leftarrow \texttt{EstimateM}(\tilde{G},\pars^{(t-1)})$\;
        $\moms^{(t-1)} \leftarrow \mathbf{M}^{1/2} \, \moms_c$\;%
    }
    $h \leftarrow C(t) \, h_0$ \tcp*{Cyclic modulation}
    Sample $\rands^{(t)} \sim \mathcal{N}_d(0, I_d)$ \tcp*{noise}
    $\moms^{(t)} \leftarrow (1-h\gamma) \, \moms^{(t-1)} - h n \nabla_{\pars} \tilde{G}(\pars^{(t-1)}) + \!\sqrt{2 \gamma h T(t)} \, \mathbf{M}^{1/2} \, \rands^{(t)}$\tcn*{}\label{line:symeuler-moms}
    $\pars^{(t)} \leftarrow \pars^{(t-1)} + h \, \mathbf{M}^{-1} \moms^{(t)}$\label{line:symeuler-pars}\;%
    \If{ end of cycle }{
        \KwYield{$\pars^{(t)}$} \tcp*{Parameter sample}
    }
  }
}
\caption{Symplectic Euler Langevin scheme.}%
\label{alg:symeuler}%
\end{algorithm2e}%
\endgroup
\end{minipage}%
\vspace{-0.5cm}%
\end{figure}%

Coming back to the \phenomenon~effect, what could explain the poor performance at temperature $T=1$?
With our Bayesian hearts, there are only three possible areas to examine: the inference, the prior, or the likelihood function.

\section{Inference: Is it Accurate?}\label{sec:inference-accuracy}
Both the Bayes posterior and the cooled posteriors are all intractable.
Moreover, it is plausible that the high-dimensional posterior landscape of a deep network may lead to difficult-to-simulate SDE dynamics~(\ref{eqn:langevin1}--\ref{eqn:langevin2}).
Our approximate SG-MCMC inference method further has to deal with minibatch noise and produces only a finite sample approximation to the predictive integral~(\ref{eqn:postpred}).
Taken together, could the \phenomenon~effect arise from a poor inference accuracy?
\subsection{Hypothesis: Inaccurate SDE Simulation}
\hypothesis{Inaccurate SDE Simulation Hypothesis}{the SDE~(\ref{eqn:langevin1}--\ref{eqn:langevin2}) is poorly simulated.}
To gain confidence that our SG-MCMC method simulates the posterior accurately, we introduce diagnostics that previously have not been used in the SG-MCMC context:
\begin{compactitem}
\item \textbf{Kinetic temperatures} (Appendix~\ref{sec:kinetic-temp}): we report per-variable statistics derived from the moments $\moms$.  For these so called \emph{kinetic temperatures} we know the exact sampling distribution under Langevin dynamics and compute their 99\% confidence intervals.
\item \textbf{Configurational temperatures} (Appendix~\ref{sec:conf-temp}): we report per-variable statistics derived from $\langle \pars, \nabla_\pars U(\pars)\rangle$.  For these \emph{configurational temperatures} we know the expected value under Langevin dynamics.
\end{compactitem}
We propose to use these diagnostics to assess simulation accuracy of SG-MCMC methods.
We introduce the diagnostics and our new results in detail in Appendix~\ref{sec:diagnostics}.

\experiment{Inference Diagnostics Experiment:}
In Appendix~\ref{sec:simablation} we report a detailed study of simulation accuracy for both models.
This study reports accurate simulation for both models when both preconditioning and cyclic time stepping are used.
We can therefore with reasonably high confidence rule out a poor simulation of the SDE.
All remaining experiments in this paper also pass the simulation accuracy diagnostics.

\begin{figure}[!t]
\vspace{-0.2cm}%
\center{\includegraphics[width=\columnwidth]{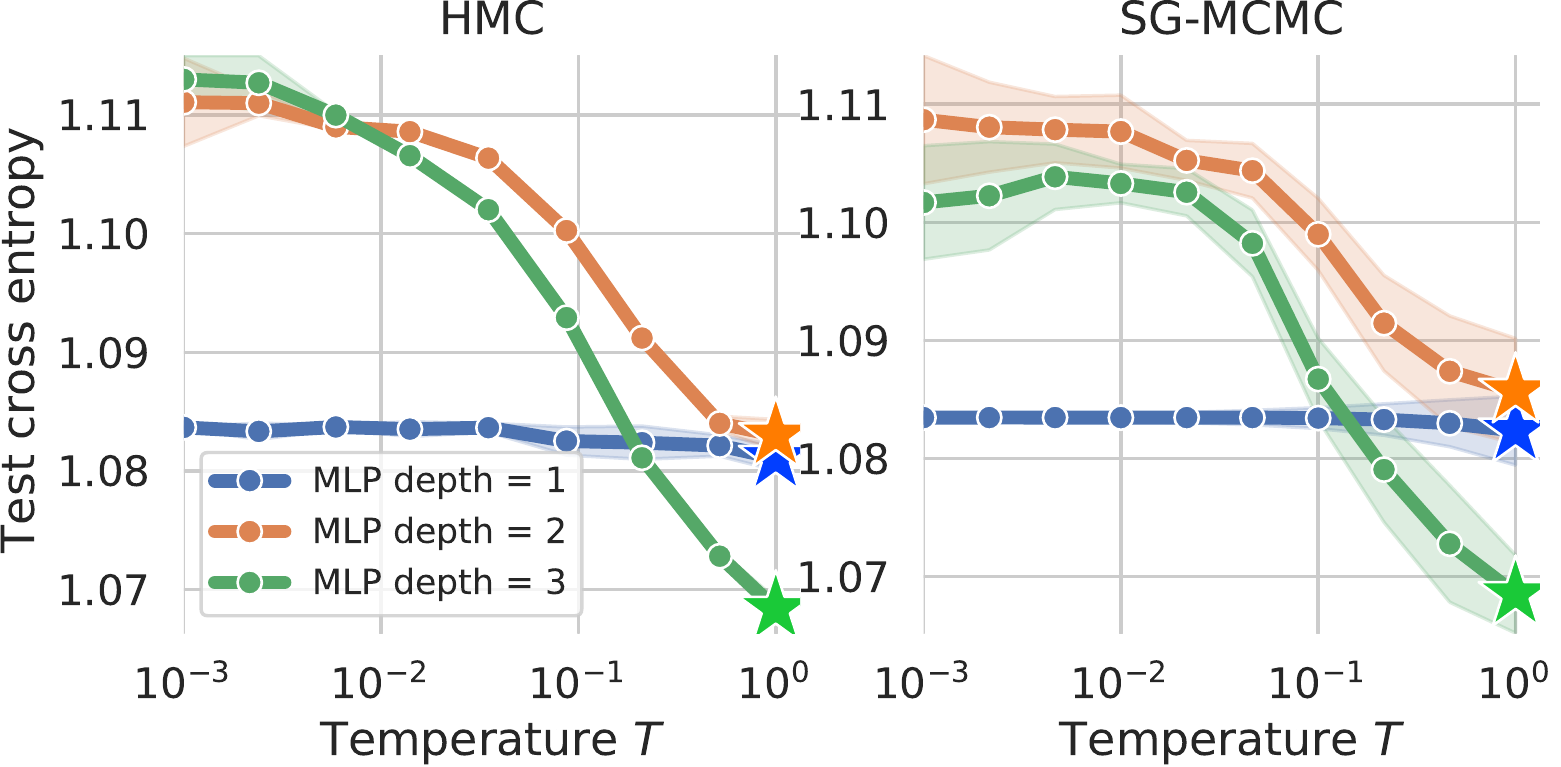}}%
\vspace{-0.3cm}%
\caption{HMC (left) agrees closely with SG-MCMC (right) for synthetic data on multilayer perceptrons.  A star indicates the optimal temperature for each model: for the synthetic data sampled from the prior there are no cold posteriors and both sampling methods perform best at $T=1$.}%
\label{fig:sgmcmc_vs_hmc}%
\vspace{-0.25cm}%
\end{figure}

\subsection{Hypothesis: Biased SG-MCMC}\label{sec:hyp_biased_sgmcmc}

\hypothesis{Biased SG-MCMC Hypothesis}{Lack of accept/reject Metropolis-Hastings corrections in SG-MCMC introduces bias.}
In Markov chain Monte Carlo it is common to use an additional accept-reject step that corrects for bias in the sampling procedure.  For MCMC applied to deep learning this correction step is too expensive and therefore omitted in SG-MCMC methods, which is valid for small time steps only,~\citep{chen2015convergencesgmcmc}.
If accept-reject is computationally feasible the resulting procedure is called \emph{Hamiltonian Monte Carlo} (HMC)~\citep{neal2011mcmc, betancourt2015hamiltonian,duane1987hybrid,hoffman2014no}.
Because it provides unbiased simulation, we can consider HMC the \emph{gold standard},~\citep{neal1995bayesianneuralnetworks}.
We now compare gold standard HMC against SG-MCMC on a small example where comparison is feasible.
We provide details of our HMC setup in Appendix~\ref{sec:practical_usage_hmc}.

\experiment{HMC Experiment:}
we construct a simple setup using a multilayer perceptron (MLP) where by construction $T=1$ is optimal;
such Bayes optimality must hold in expectation if the data is generated by the prior and model that we use for inference,~\citep{berger1985statisticaldecisiontheory}.
Thus, we can ensure that if the cold posterior effect is observed it must be due to a problem in our inference method.
We perform all inference without minibatching ($|B|=n$) and test MLPs of varying number of one to three layers, ten hidden units each, and using the ReLU activation.
As HMC implementation we use \texttt{tfp.mcmc.HamiltonianMonteCarlo} from \emph{Tensorflow Probability}~\citep{dillon2017tensorflow,lao2020tfpmcmc}:
Details for our data and HMC are in Appendix~\ref{sec:generate_data_from_mlp}--\ref{sec:practical_usage_hmc}. 

In Figure~\ref{fig:sgmcmc_vs_hmc} the SG-MCMC results agree very well with the HMC results with optimal predictions at $T=1$, i.e. no cold posteriors are present.
For the cases tested we conclude that SG-MCMC is almost as accurate as HMC and the lack of accept-reject correction cannot explain cold posteriors. 
Appendix~\ref{sec:practical_usage_hmc} further shows that SG-MCMC and HMC are in good agreement when inspecting the KL divergence of their resulting predictive distributions.

\subsection{Hypothesis: Stochastic Gradient Noise}
\hypothesis{Minibatch Noise Hypothesis}{gradient noise from minibatching causes inaccurate sampling at $T=1$.}

Gradient noise due to minibatching can be heavy-tailed and non-Gaussian even for large batch sizes,~\citep{simsekli2019minibatchnoise}.
Our SG-MCMC method is only justified if the effect of noise will diminish for small time steps.
We therefore study the influence of batch size on predictive performance through the following experiment.

\experiment{Batchsize Experiment:}
we repeat the original ResNet-20/CIFAR-10 experiment at different temperatures for batch sizes in $\{32,64,128,256\}$ and study the variation of the predictive performance as a function of batch size.
Figure~\ref{fig:resnet-batchsize} and Figure~\ref{fig:cnnlstm-batchsize} show that while there is a small variation between different batch sizes $T < 1$ remains optimal for all batch sizes.
Therefore minibatch noise alone cannot explain the observed poor performance at $T=1$.

For both ResNet and CNN-LSTM the best cross-entropy is achieved by the smallest batch size of 32 and 16, respectively.
The smallest batch size has the \emph{largest} gradient noise.
We can interpret this noise as an additional heat source that increases the effective simulation temperature.
However, the noise distribution arising from minibatching is anisotropic, \citep{zhu2019anisotropicsgdnoise}, and this could perhaps aid generalization.  We will not study this hypothesis further here.

\begin{figure}[!t]
\vspace{-0.2cm}%
\center{\includegraphics[width=\columnwidth]{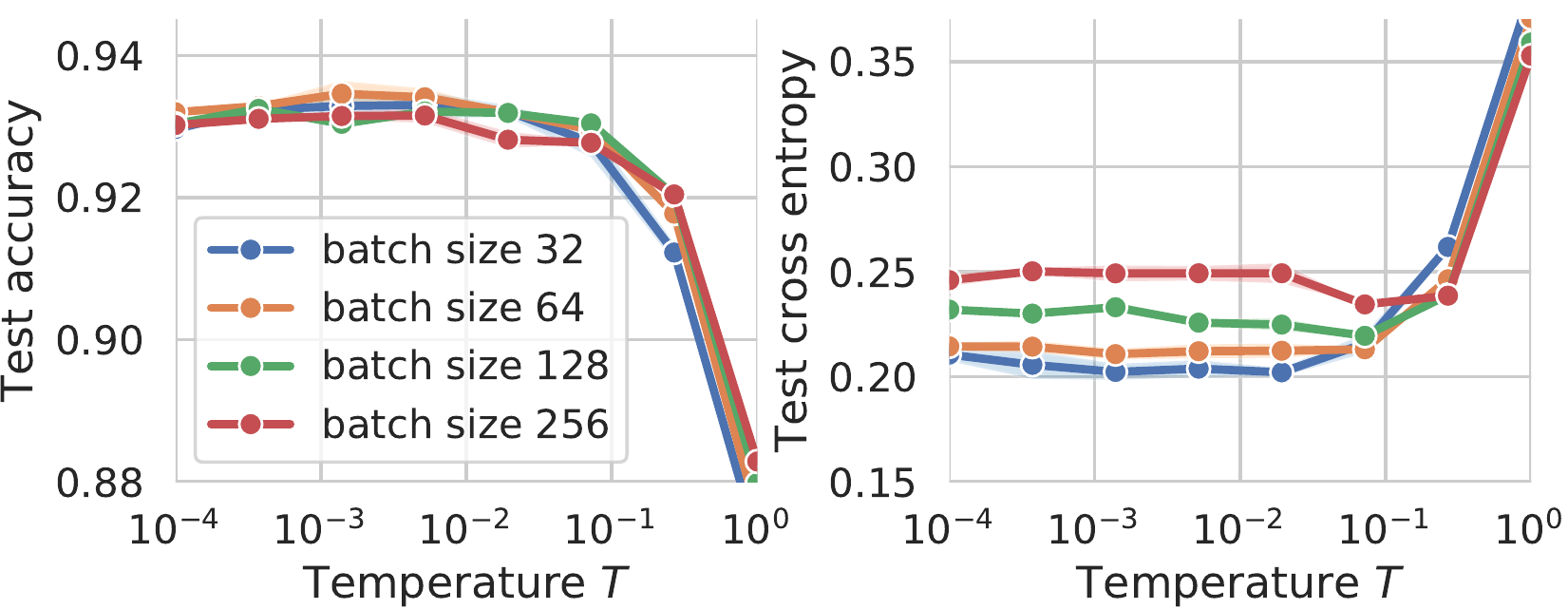}}%
\vspace{-0.3cm}%
\caption{Batch size dependence of the ResNet-20/CIFAR-10 ensemble performance, reporting mean and standard error (3 runs): for all batch sizes the optimal predictions are obtained for $T < 1$.}%
\label{fig:resnet-batchsize}%
\vspace{-0.35cm}%
\end{figure}

\begin{figure}[!t]
\vspace{-0.2cm}%
\center{\includegraphics[width=\columnwidth]{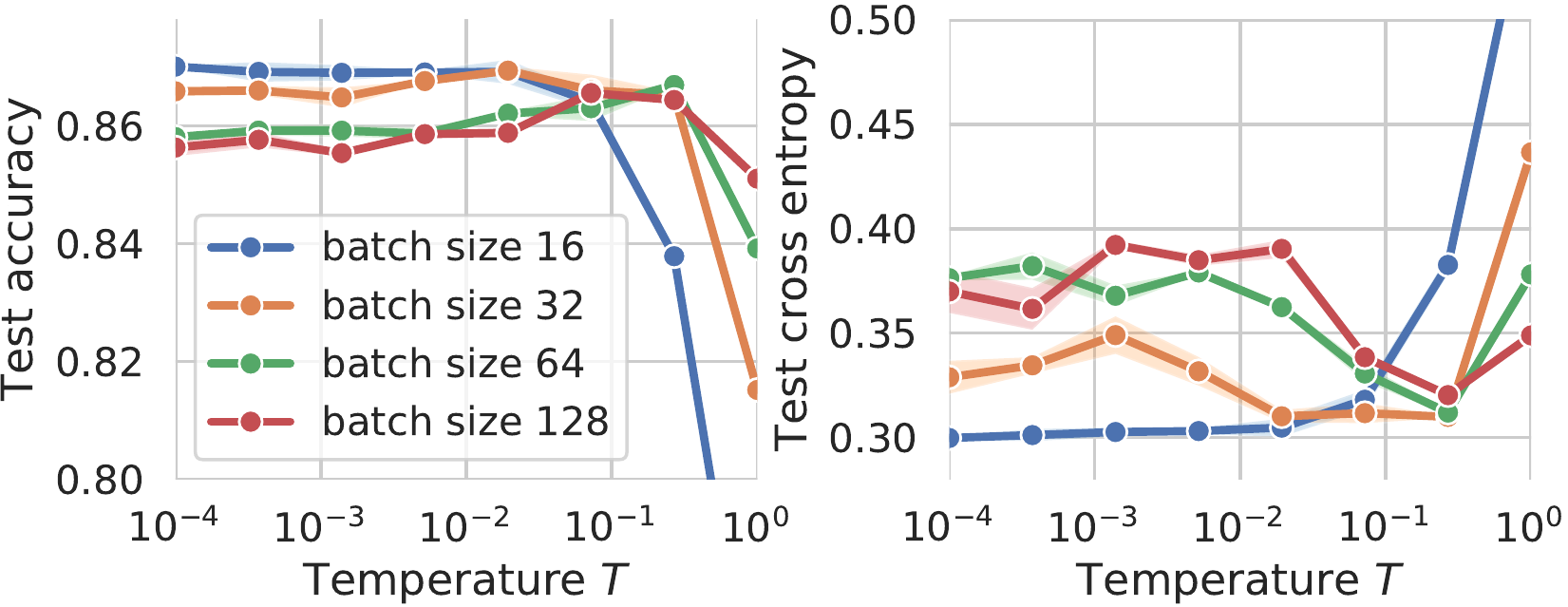}}%
\vspace{-0.3cm}%
\caption{Batch size dependence of the CNN-LSTM/IMDB ensemble performance, reporting mean and standard error (3 runs):
for all batch sizes, the optimal performance is achieved at $T < 1$.}%
\label{fig:cnnlstm-batchsize}%
\vspace{-0.35cm}%
\end{figure}

\subsection{Hypothesis: Bias-Variance Trade-off}
\hypothesis{Bias-variance Tradeoff Hypothesis}{For $T=1$ the posterior is diverse and there is high variance between model predictions.  For $T \ll 1$ we sample nearby modes and reduce prediction variance but increase bias; the variance dominates the error and reducing variance ($T \ll 1$) improves predictive performance.}

If this hypothesis were true then simply collecting more ensemble members, $S \to \infty$, would reduce the variance to arbitrary small values and thus fix the poor predictive performance we observe at $T=1$.  Doing so would require running our SG-MCMC schemes for longer---potentially for much longer.
We study this question in detail in Appendix~\ref{sec:biasvar-appendix} and conclude by an asymptotic analysis that the amount of variance cannot explain cold posteriors.

\section{Why Could the Bayes Posterior be Poor?}\label{sec:why_true_posterior_could_be_poor}
With some confidence in our approximate inference procedure what are the remaining possibilities that could explain the cold posterior effect?
The remaining two places to look at are the likelihood function and the prior.

\subsection{Problems in the Likelihood Function?}\label{sec:likelihood}
For Bayesian deep learning we use the same likelihood function $p(y|x,\pars)$ as we use for SGD.
Therefore, because the same likelihood function works well for SGD it appears an unlikely candidate to explain the cold posterior effect.
However, current deep learning models use a number of techniques---such as data augmentation, dropout, and batch normalization---that are not formal likelihood functions.
This observations brings us to the following hypothesis.

\hypothesis{Dirty Likelihood Hypothesis}{Deep learning practices that violate the likelihood principle (batch normalization, dropout, data augmentation) cause deviation from the Bayes posterior.}

In Appendix~\ref{sec:dirtylikelihood} we give a theory of ``\emph{Jensen posteriors}'' which describes the likelihood-like functions arising from modern deep learning techniques.
We report an experiment (Appendix~\ref{sec:frn-experiment}) that---while slightly inconclusive---demonstrates that cold posteriors remain when a clean likelihood is used in a suitably modified ResNet model; the CNN-LSTM model already had a clean likelihood function.

\begin{figure*}[!t]%
\vspace{-0.2cm}%
\centering%
\begin{minipage}{0.65\textwidth}%
\includegraphics[width=0.49\textwidth]{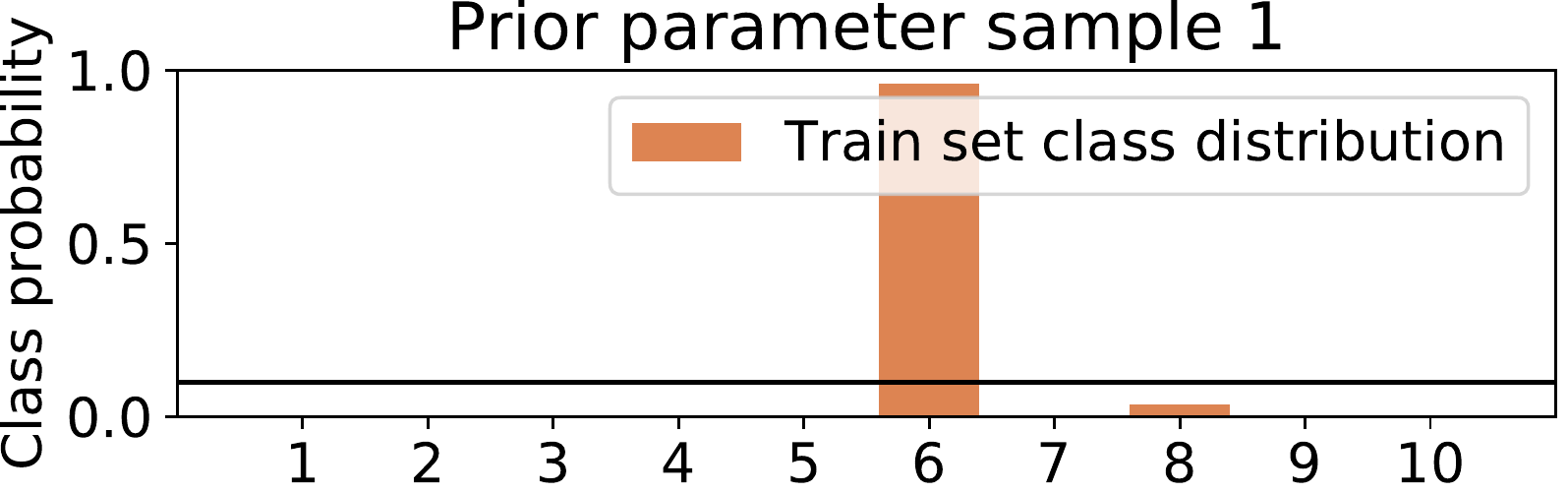}%
\hfill%
\includegraphics[width=0.49\textwidth]{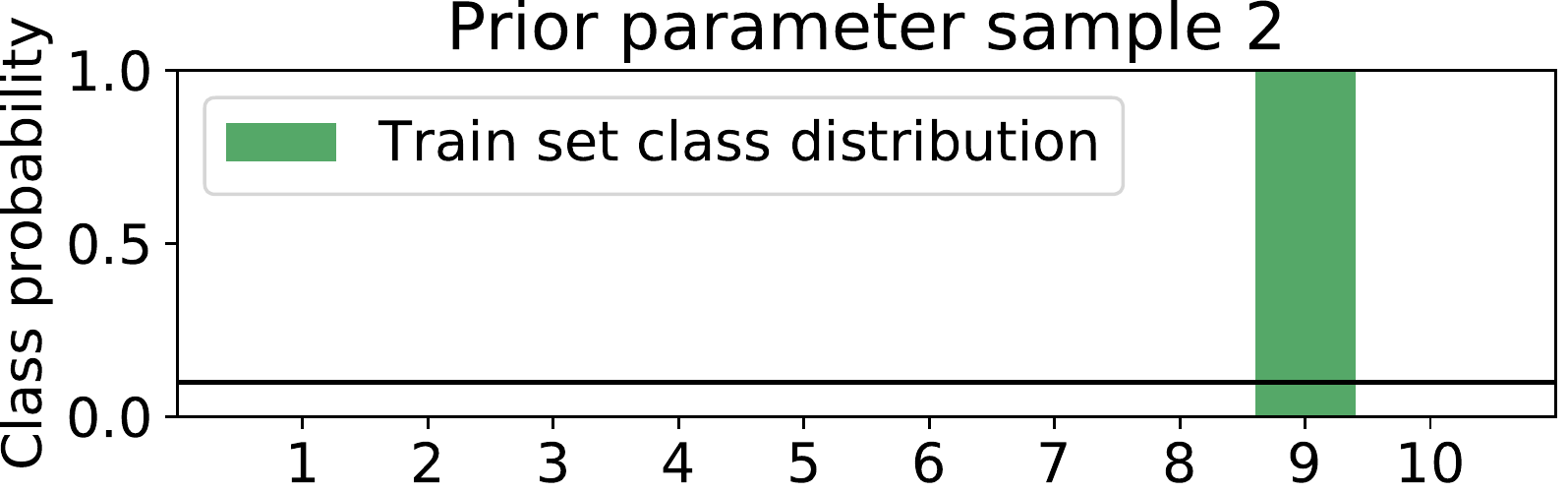}%
\vspace{-0.25cm}%
\captionof{figure}{ResNet-20/CIFAR-10 typical prior predictive distributions for 10 classes under a $\mathcal{N}(0,I)$ prior averaged over the entire training set, $\mathbb{E}_{x \sim p(x)}[p(y|x,\pars^{(i)})]$.  Each plot is for one sample $\pars^{(i)} \sim \mathcal{N}(0,I)$ from the prior.
Given a sample $\pars^{(i)}$ the average training data class distribution is highly concentrated around the same classes for all $x$.}%
\label{fig:resnet-priorsamples}%
\end{minipage}%
\hfill%
\begin{minipage}{0.32\textwidth}%
\includegraphics[width=\textwidth]{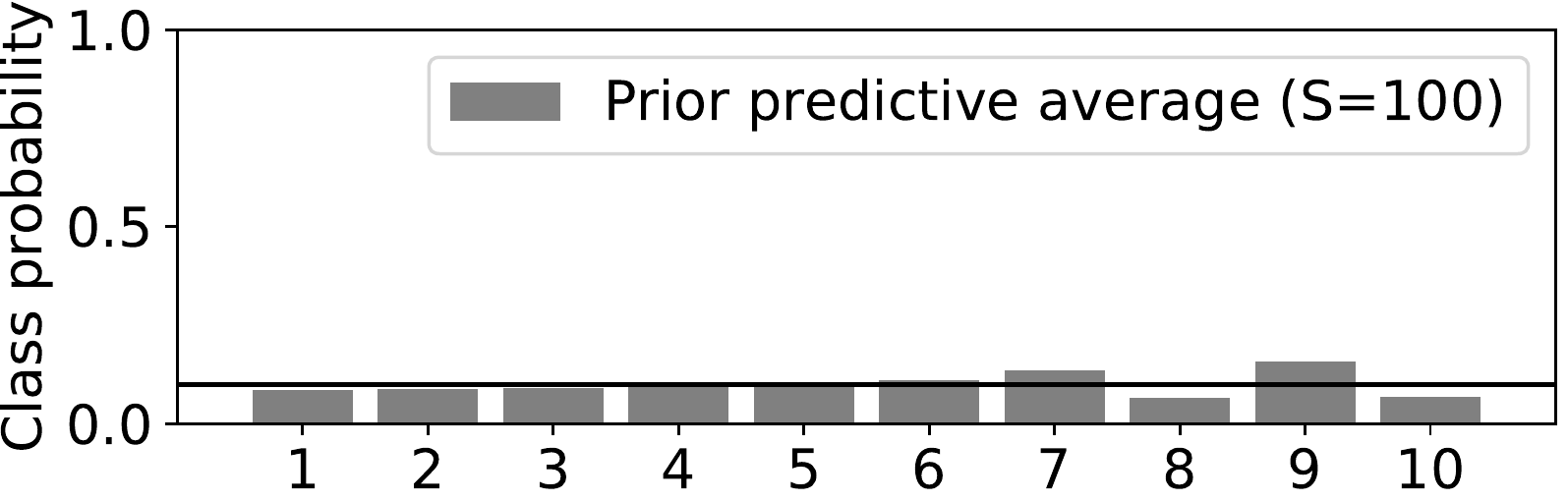}%
\vspace{-0.25cm}%
\captionof{figure}{ResNet-20/CIFAR-10 prior predictive $\mathbb{E}_{x \sim p(x)}[\mathbb{E}_{\pars \sim p(\pars)}[p(y|x,\pars)]]$ over 10 classes, estimated using $S=100$ prior samples $\pars^{(i)}$ and all training images.}%
\label{fig:resnet-priorpredictive}%
\end{minipage}%
\vspace{-0.4cm}%
\end{figure*}

\subsection{Problems with the Prior $p(\pars)$?}\label{sec:prior}
So far we have used a simple Normal prior, $p(\pars)=\mathcal{N}(0,I)$, as was done in prior work~\citep{zhang2019cyclicalsgmcmc,heek2019sgmcmc,ding2014bayesian,li2016preconditionedsgld,zhang2018noisynaturalgradient}.
But is this a good prior?

One could hope, that perhaps with an informed and structured model architecture, a simple prior could be sufficient in placing prior beliefs on suitable functions, as argued by~\citet{wilson2019bayesian}.
While plausible, we are mildly cautious because there are known examples where innocent looking priors have turned out to be unintentionally highly informative.\footnote{A shocking example in the Dirichlet-Multinomial model is given by~\citet{nemenman2002entropy}.  Importantly the unintended effect of the prior was not recognized when the model was originally proposed by~\citet{wolpert1995estimatingentropy}.}
Therefore, with the cold posterior effect having a track record in the literature, perhaps $p(\pars)=\mathcal{N}(0,I)$ could have similarly unintended effects of placing large prior mass on undesirable functions.
This leads us to the next hypothesis.

\hypothesis{Bad Prior Hypothesis}{The current priors used for BNN parameters are inadequate, unintentionally informative, and their effect becomes stronger with increasing model depths and capacity.}

To study the quality of our prior, we study typical functions obtained by sampling from the prior, as is good practice in model criticism,~\citep{gelman2013bayesiandataanalysis}.

\experiment{Prior Predictive Experiment:}
for our ResNet-20 model we generate samples $\pars^{(i)} \sim p(\pars) = \mathcal{N}(0,I)$
and look at the induced predictive distribution
$\mathbb{E}_{x \sim p(x)}[p(y|x,\pars^{(i)})]$ for each parameter sample, using the real CIFAR-10 training images.
From Figure~\ref{fig:resnet-priorsamples} we see that typical prior draws produce concentrated class distributions, indicating that the $\mathcal{N}(0,I)$ distribution is a poor prior for the ResNet-20 likelihood.
From Figure~\ref{fig:resnet-priorpredictive} we can see that the average predictions obtained from such concentrated functions remain close to the uniform class distribution.
Taken together, from a subjective Bayesian view $p(\pars)=\mathcal{N}(0,I)$ is a \emph{poor prior}: typical functions produced by this prior place a high probability the same few classes for all $x$.
In Appendix~\ref{sec:prior-predictive-gpgaussian} we carry out another prior predictive study using He-scaling priors,~\citep{he2015prelu}, which leads to similar results.

\experiment{Prior Variance $\sigma$ Scaling Experiment:}
in the previous experiment we found that the standard Normal prior is poor. Can the Normal prior $p(\pars)=\mathcal{N}(0,\sigma)$ be fixed by using a more appropriate variance $\sigma$? For our ResNet-20 model we employ Normal priors of varying variances. Figure~\ref{fig:priorvariance} shows that the cold posterior effect is present for all variances considered. Further investigations for known scaling laws in deep networks is given in Appendix~\ref{sec:prior-predictive-gpgaussian}. The cold posterior effect cannot be resolved by using the right scaling of the Normal prior.

\experiment{Training Set Size $n$ Scaling Experiment:}
the posterior energy $U(\pars)$ in~(\ref{eqn:U}) sums over all $n$ data log-likelihoods but adds $\log p(\pars)$ only once.
This means that the influence of $\log p(\pars)$ vanishes at a rate of $1/n$ and thus the prior will exert its strongest influence for small $n$.
We now study what happens for small $n$ by comparing the Bayes predictive under a $\mathcal{N}(0,I)$ prior against performing SGD maximum a posteriori (MAP) estimation on the \emph{same} log-posterior.\footnote{For SGD we minimize $U(\pars)/n$.}

Figure~\ref{fig:resnetcifar10-trainsize} and Figure~\ref{fig:cnnlstm-trainsize} show the predictive performance for ResNet-20 on CIFAR-10 and CNN-LSTM on IMDB, respectively.
These results differ markedly between the two models and datasets:
for ResNet-20 / CIFAR-10 the Bayes posterior at $T=1$ degrades gracefully for small $n$, whereas SGD suffers large losses in test cross-entropy for small $n$.
For CNN-LSTM / IMDB predictions from the Bayes posterior at $T=1$ deteriorate quickly in both test accuracy and cross entropy.
In all these runs SG-MCMC and SGD/MAP work with the same $U(\pars)$ and the difference is between integration and optimization.
The results are inconclusive but somewhat implicate the prior in the cold posterior effect:
as $n$ becomes small there is an increasing difference between the cross-entropy achieved by the Bayes prediction and the SGD estimate, for large $n$ the SGD estimate performs better.

\begin{figure}[!t]
\center{\includegraphics[width=\columnwidth]{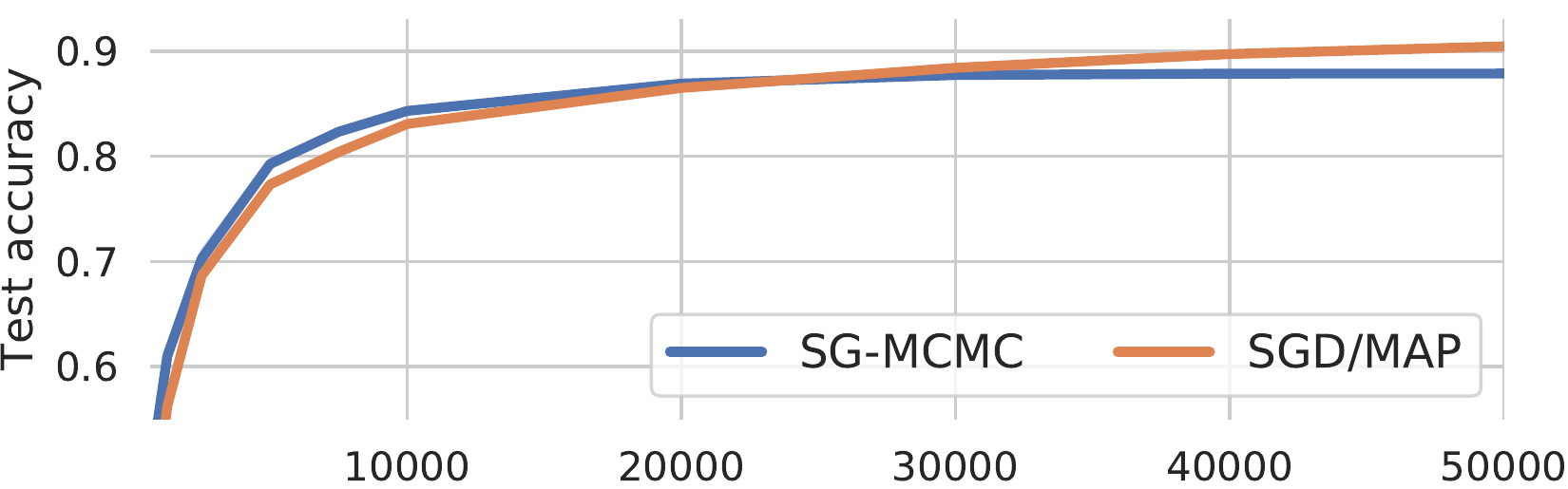}}%
\center{\includegraphics[width=\columnwidth]{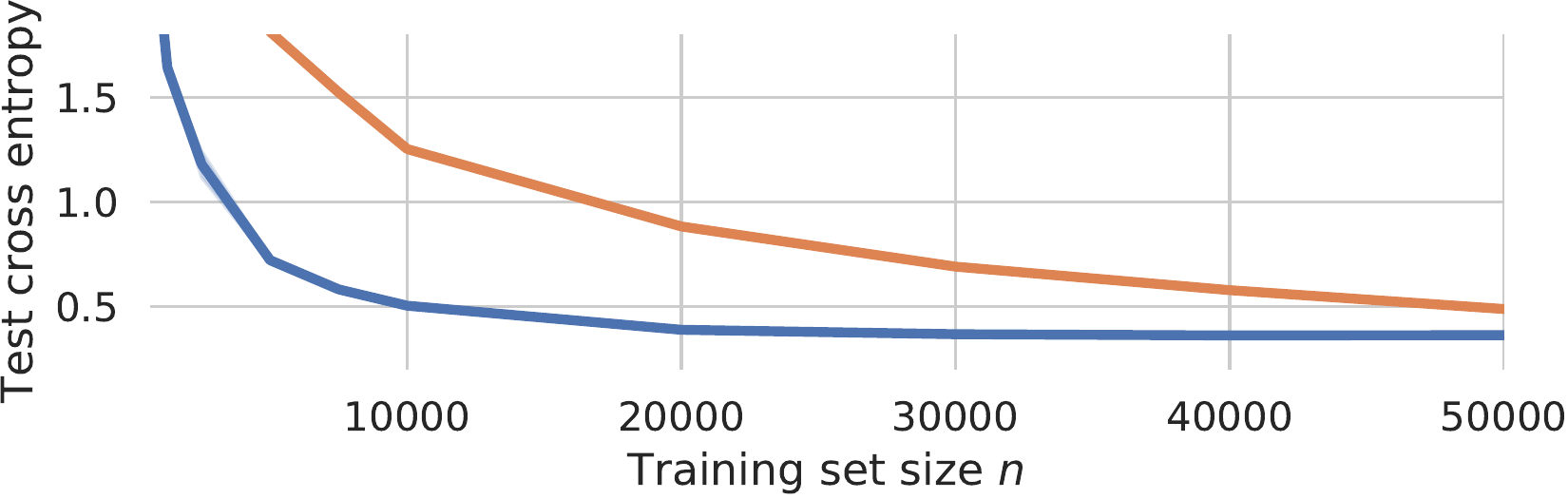}}%
\vspace{-0.3cm}%
\caption{ResNet-20/CIFAR-10 predictive performance as a function of training set size $n$.
The Bayes posterior ($T=1$) degrades gracefully as $n$ decreases, whereas SGD/MAP performs worse.}%
\label{fig:resnetcifar10-trainsize}%
\vspace{-0.3cm}%
\end{figure}

\begin{figure}[!t]
\center{\includegraphics[width=\columnwidth]{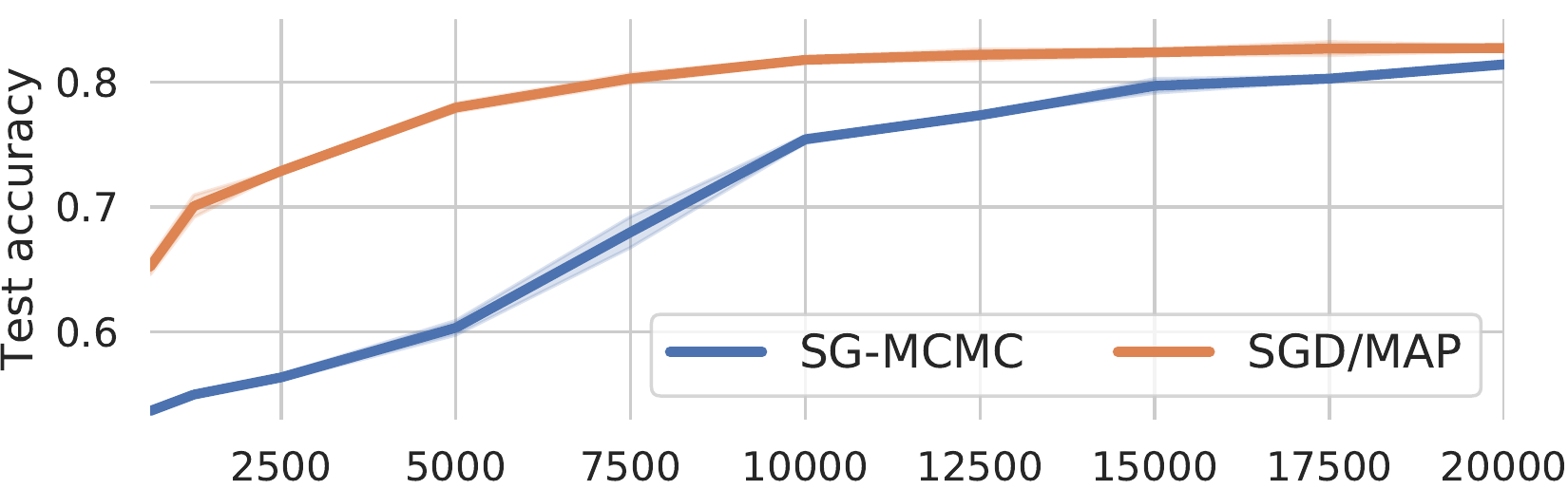}}%
\center{\includegraphics[width=\columnwidth]{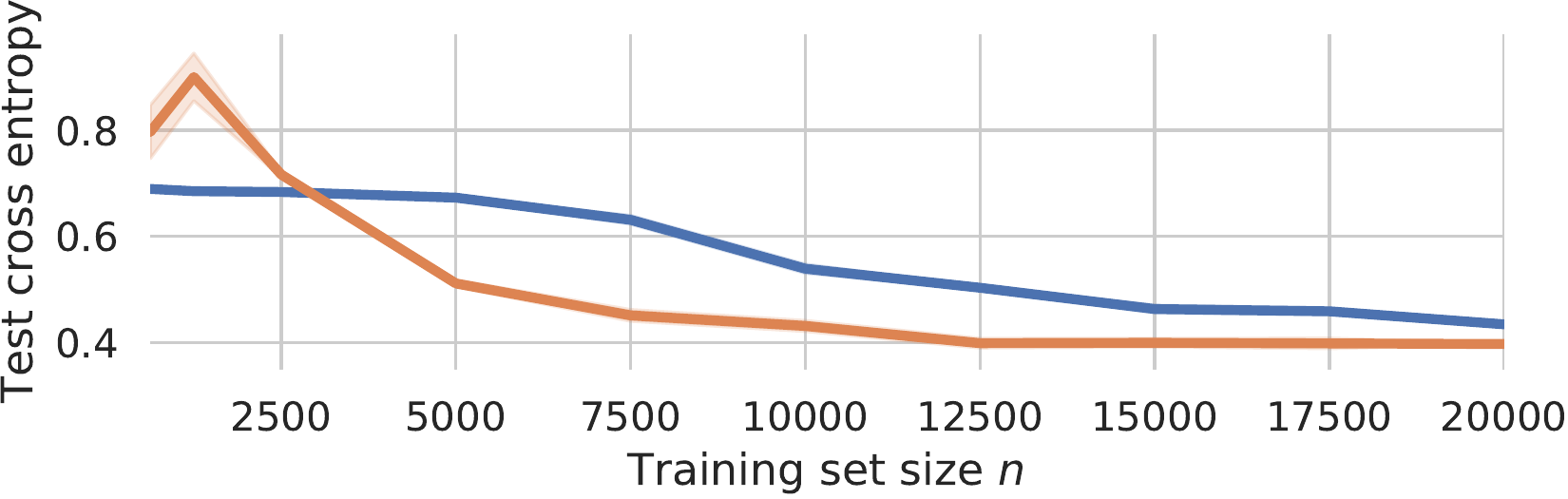}}%
\vspace{-0.3cm}%
\caption{CNN-LSTM/IMDB predictive performance as a function of training set size $n$.
The Bayes posterior ($T=1$) suffers more than the SGD performance, indicating a problematic prior.
}%
\label{fig:cnnlstm-trainsize}%
\vspace{-0.25cm}%
\end{figure}

\experiment{Capacity Experiment:}
we consider a MLP using a $\mathcal{N}(0,I)$ prior and study the relation of the network capacity to the cold posterior effect.
We train MLPs of varying depth (number of layers) and width (number of units per layer) at different temperatures on CIFAR-10.
Figure~\ref{fig:mlp-capacity} shows that for increasing capacity the cold posterior effect becomes more prominent.
This indicates a connection between model capacity and strength of the cold posterior effect.

\begin{figure}[!t]
\center{\includegraphics[width=0.5145\columnwidth]{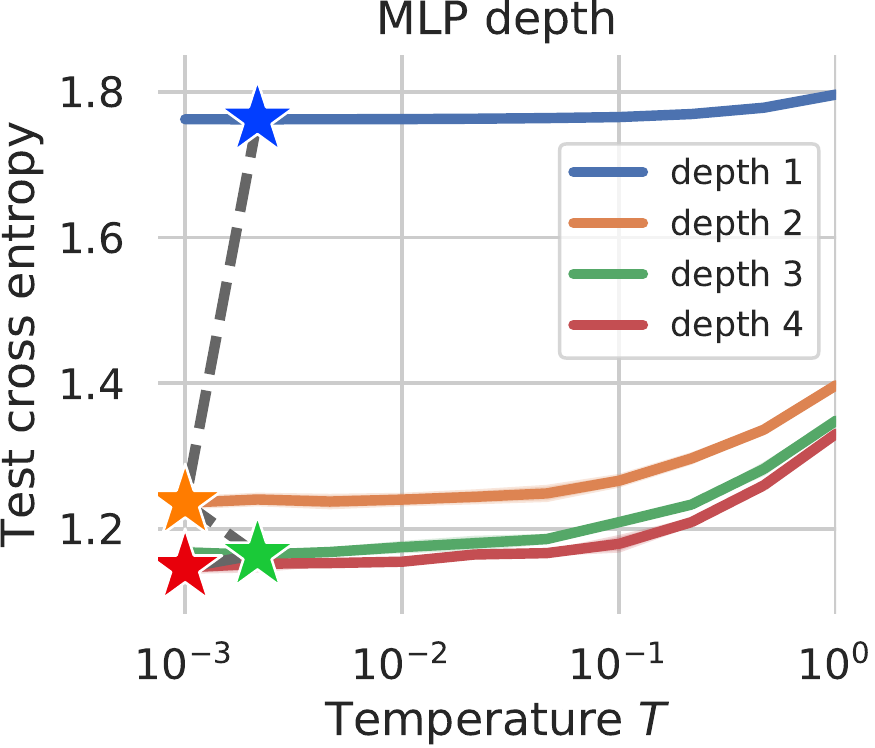}%
\includegraphics[width=0.4855\columnwidth]{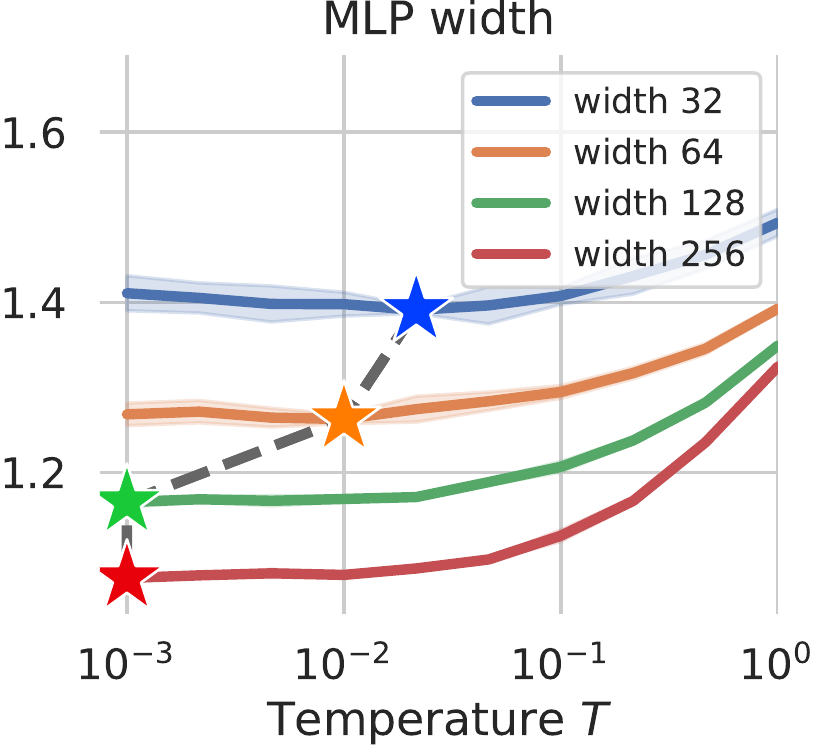}%
}%
\vspace{-0.3cm}%
\caption{MLP of different capacities (depth and width) on CIFAR-10. Left: we fix the width to 128 and vary the depth. Right: we fix the depth to 3 and vary the width.
Increasing capacity lowers the optimal temperature.}%
\label{fig:mlp-capacity}%
\end{figure}

\subsection{Inductive Bias due to SGD?}

\hypothesis{Implicit Initialization Prior in SGD}{The inductive bias from initialization is strong and beneficial for SGD but harmed by SG-MCMC sampling.}
Optimizing neural networks via SGD with a suitable initialization is known to have a beneficial inductive bias leading to good local optima,~\citep{masters2018batchsize,mandt2017stochastic}. Does SG-MCMC perform worse due to decreasing the influence of that bias?
We address this question by the following experiment.
We first run SGD until convergence, then switch over to SG-MCMC sampling for 500 epochs (10 cycles), and finally switch back to SGD again. Figure~\ref{fig:temperature_010} shows that SGD initialized by the last model of the SG-MCMC sampling dynamics recovers the same performance as vanilla SGD. This indicates that the beneficial initialization bias for SGD is not destroyed by SG-MCMC. Details can be found in Appendix~\ref{sec:implic_init-appendix}.

\begin{figure}[!t]
\vspace{-0.25cm}%
\center{\includegraphics[width=\columnwidth]{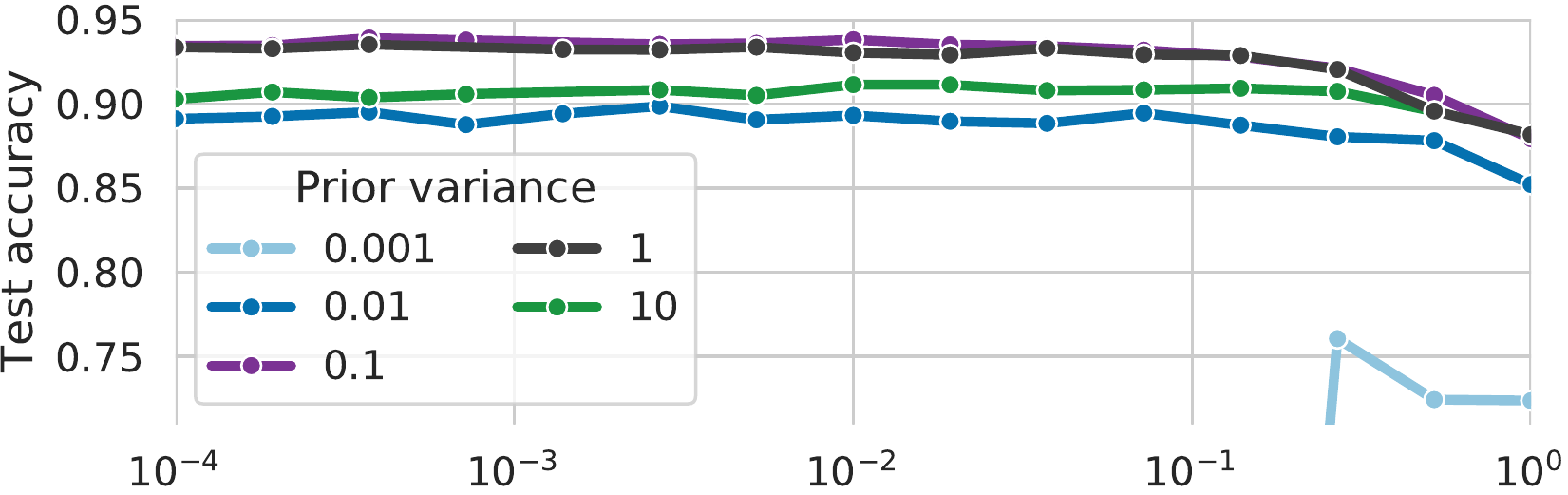}}%
\center{\includegraphics[width=\columnwidth]{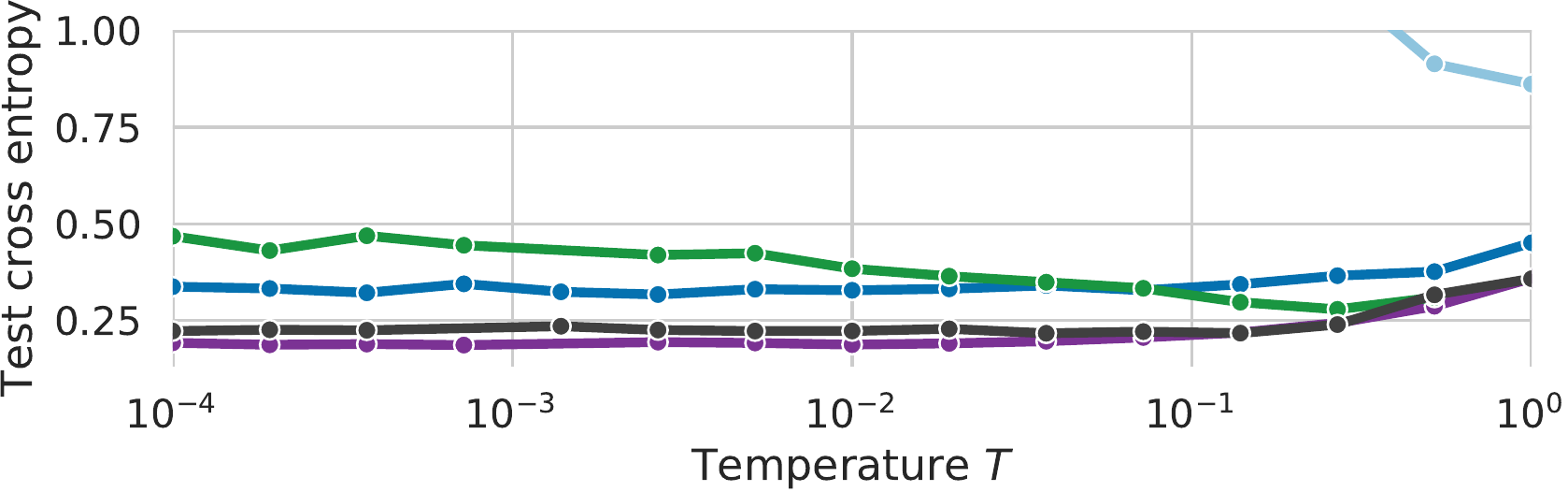}}%
\vspace{-0.45cm}%
\caption{ResNet-20/CIFAR-10 predictive performance as a function of temperature $T$ for different priors $p(\pars)=\mathcal{N}(0,\sigma)$. The cold posterior effect is present for all choices of the prior variance $\sigma$. For all models the optimal temperature is significantly smaller than one and for $\sigma=0.001$ the performance is poor for all temperatures. There is no ``simple'' fix of the prior.}%
\label{fig:priorvariance}%
\vspace{-0.25cm}%
\end{figure}

\section{Alternative Explanations?}
Are there other explanations we have not studied in this work?

\paragraph{Masegosa Posteriors.}
One exciting avenue of future exploration was provided to us after submitting this work: a compelling analysis of the failure to predict well under the Bayes posterior is given by~\citet{masegosa2019misspecification}.
In his analysis he first follows~\citet{germain2016pacbayesian} in identifying the Bayes posterior as a solution of a loose PAC-Bayes generalization bound on the predictive cross-entropy.
He then uses recent results demonstrating improved Jensen inequalities,~\citep{liao2019sharpeningjensen}, to derive alternative posteriors.
These alternative posteriors are \emph{not} Bayes posteriors and in fact explicitly encourage diversity among ensemble member predictions.
Moreover, the alternative posteriors can be shown to dominate the predictive performance achieved by the Bayes posterior when the model is misspecified.
We believe that these new ``Masegosa-posteriors'', while not explaining cold posteriors fully, may provide a more desirable approximation target than the Bayes posterior.  In addition, the Masegosa-posterior is compatible with both variational and SG-MCMC type algorithms.

\paragraph{Tempered observation model?}
In~\citep[Section 8.3]{wilson2020bayesianperspective} it is claimed that cold posteriors in one model correspond to untempered ($T=1$) Bayes posteriors in a modified model by a simple change of the likelihood function.  If this were the case, this would resolve the cold posterior problem and in fact point to a systematic way how to improve the Bayes posterior in many models.
However, the argument in~\citep{wilson2020bayesianperspective} is wrong, which we demonstrate and discuss in detail in Appendix~\ref{sec:templikelhood}.

\section{Related Work on Tempered Posteriors}

Statisticians have studied \emph{tempered} or \emph{fractional} posteriors for $T > 1$.
Motivated by the behavior of Bayesian inference in \emph{misspecified} models~\citep{grunwald2017inconsistency,jansen2013misspecification} develop the \emph{SafeBayes} approach and \citet{bhattacharya2019fractionalposteriors} develops \emph{fractional posteriors} with the goal of slowing posterior concentration.
The use of multiple temperatures $T > 1$ is also common in Monte Carlo simulation in the presence of rough energy landscapes, e.g.~\citep{earl2005paralleltempering,sugita1999replicaexchange,swendsen1986replicamontecarlo}.  However, the purpose of such tempering is to aid in accurate sampling at a desired target temperature, but not in changing the target distribution. \citep{DBLP:conf/aistats/MandtMARB16} studies temperature as a latent variable in the context of variational inference and shows that models often select temperatures different from one. 

\begin{figure}[!t]
\vspace{-0.25cm}%
\center{\includegraphics[width=\columnwidth]{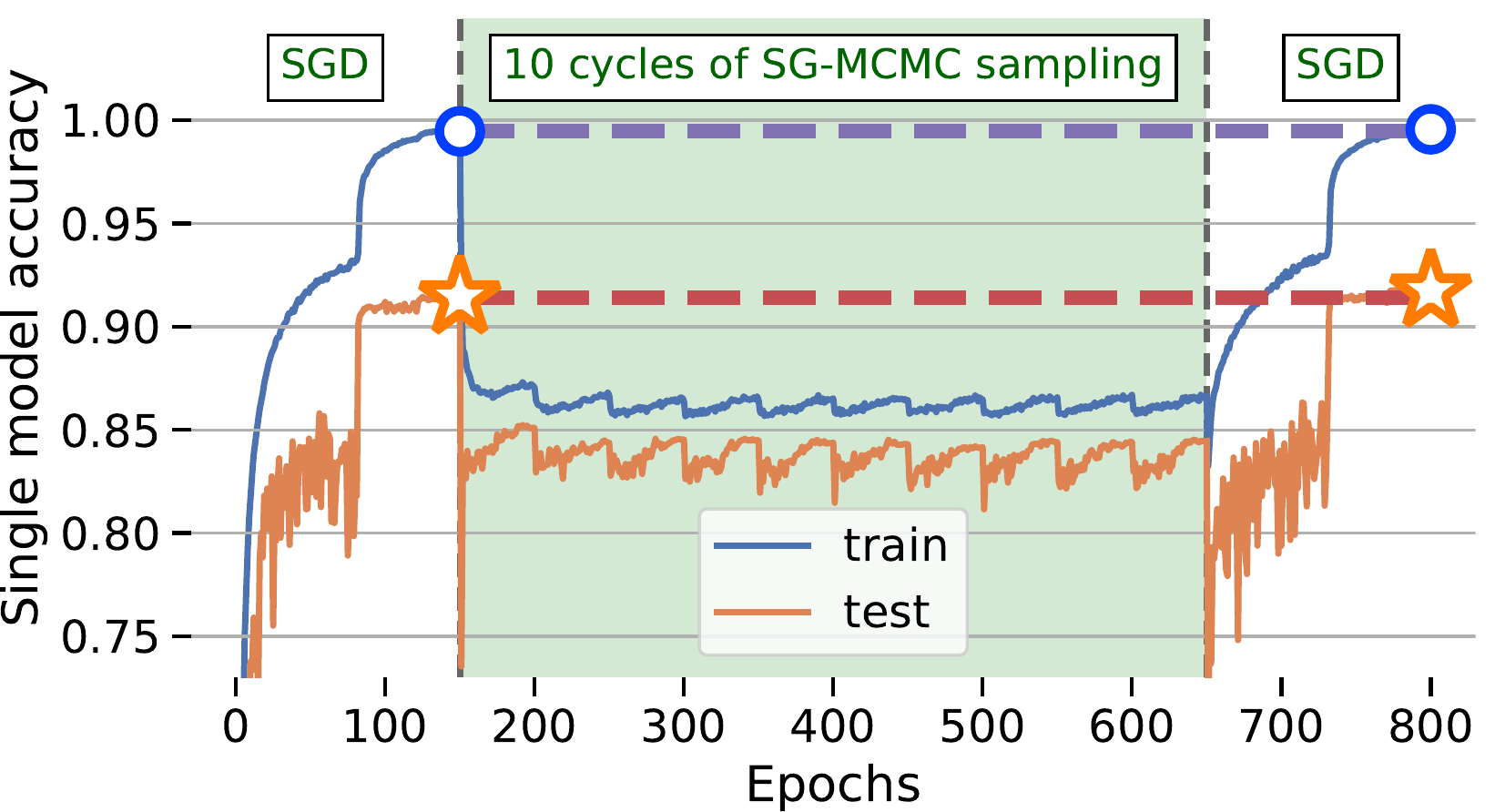}}%
\vspace{-0.45cm}%
\caption{Do the SG-MCMC dynamics harm a beneficial initialization bias used by SGD?
We first train a ResNet-20 on CIFAR-10 via SGD, then switch over to SG-MCMC sampling and finally switch back to SGD optimization.
We report the single-model test accuracy of SGD and the SG-MCMC chain as function of epochs. SGD recovers from being initialized by the SG-MCMC state.}%
\label{fig:temperature_010}%
\vspace{-0.25cm}%
\end{figure}

\section{Conclusion}
Our work has raised the question of cold posteriors but we did not fully resolve nor fix the cause for the cold posterior phenomenon.
Yet our experiments suggest the following.

\textbf{SG-MCMC is accurate enough:}
our experiments (Section~\ref{sec:inference-accuracy}--\ref{sec:why_true_posterior_could_be_poor}) and novel diagnostics (Appendix~\ref{sec:diagnostics}) indicate that current SG-MCMC methods are robust, scalable, and accurate enough to provide good approximations to parameter posteriors in deep nets.

\textbf{Cold posteriors work:}
while we do not fully understand cold posteriors, tempered SG-MCMC ensembles provide a way to train ensemble models with improved predictions compared to individual models.
However, taking into account the added computation from evaluating ensembles, there may be more practical methods,~\citep{lakshminarayanan2017deepensembles,wen2019batchensemble,ashukha2019uncertaintyestimation}.

\textbf{More work on priors for deep nets is needed:}
the experiments in Section~\ref{sec:prior} implicate the prior $p(\pars)$ in the cold posterior effect, although the prior may not be the only cause.
Our investigations
fail to produce a ``simple'' fix based on scaling the prior variance appropriately.
Future work on suitable priors for Bayesian neural networks is needed, building on recent advances,~\citep{sun2019functionalvb,pearce2019expressive,flam2017mapping,hafner2018noise}.

\paragraph{Acknowledgements.}
We would like to thank Dustin Tran for reading multiple drafts and providing detailed feedback on the work.
We also thank the four anonymous ICML 2020 reviewers for their detailed and helpful feedback.

\setcounter{section}{0}
\renewcommand{\thesection}{\Alph{section}}

\newpage
\section{Model Details}\label{sec:model-details}
We now give details regarding the models we use in all our experiments.
We use Tensorflow version 2.1 and carry out all experiments on Nvidia P100 accelerators.

\subsection{ResNet-20 CIFAR-10 Model}\label{sec:resnet-details}

We use the CIFAR-10 dataset from~\citep{krizhevsky2009cifar},
in ``version 3.0.0'' provided in Tensorflow Datasets.\footnote{See~\url{https://www.tensorflow.org/datasets/catalog/cifar10}}
We use the Tensorflow Datasets training/testing split of 50,000 and 10,000 images, respectively.

We use the ResNet-20 model from~\url{https://keras.io/examples/cifar10_resnet/} as a starting point.
For our SGD baseline we use the exact same setup as in the Keras example (200 epochs, learning rate schedule, SGD with Nesterov acceleration).
Notably the Keras example uses bias terms in all convolution layers, whereas some other implementations do not.

The Keras example page reports a reference test accuracy of 92.16 percent for the CIFAR-10 model, compared to our 92.22 percent accuracy.
This is consistent with the larger literature, collected for example at~\url{https://github.com/google/edward2/tree/master/baselines/cifar10}, with even higher accuracy achieved for variations of the ResNet model such as using wide layers, removing bias terms in the convolution layers, or additional regularization.

In this paper we study the phenomenon of poor $T=1$ posteriors obtained by SG-MCMC and therefore use an accurate simulation and sampling setup at the cost of runtime.
In order to obtain accurate simulations we use the following settings for SG-MCMC in every experiment, except where noted otherwise:

\begin{compactitem}
\item Number of epochs: 1500
\item Initial learning rate: $\ell=0.1$
\item Momentum decay: $\beta = 0.98$
\item Batch size: $|B| = 128$
\item Sampling start: begin at epoch $150$
\item Cycle length: $50$
\item Cycle schedule: cosine
\item Prior: $p(\pars) = \mathcal{N}(0,I)$
\end{compactitem}

For experiments on CIFAR-10 we use data augmentation as follows:
\begin{compactitem}
\item random left/right flipping of the input image;
\item border-padding by zero values, four pixels in horizontal and vertical direction, followed by a random cropping of the image to its original size.
\end{compactitem}

We visualize the cyclic schedule used in our ResNet-20 CIFAR-10 experiments in Figure~\ref{fig:cyclical-Ct-Tt-cifar}.

\begin{figure}[!t]
\center{\includegraphics[width=\columnwidth]{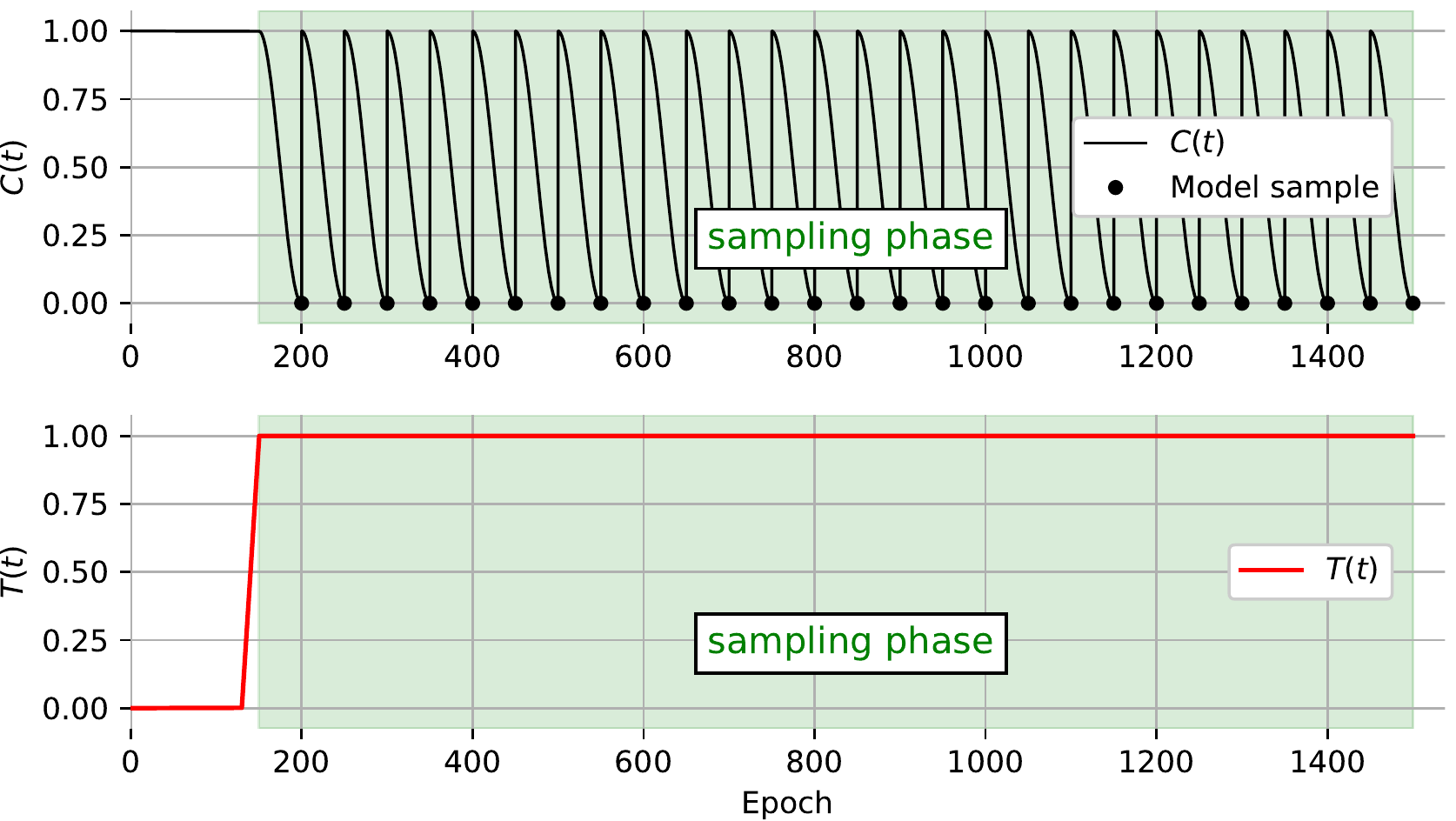}}%
\caption{Cyclical time stepping $C(t)$, and temperature ramp-up $T(t)$, as proposed by~\citet{zhang2019cyclicalsgmcmc} and used in Algorithm~\ref{alg:symeuler}, for our ResNet-20 CIFAR-10 model (Section~\ref{sec:resnet-details}).
We sample one model at the end of each cycle when the inference accuracy is best, obtaining an ensemble of 27 models.}%
\label{fig:cyclical-Ct-Tt-cifar}%
\end{figure}

\subsection{ResNet-20 CIFAR-10 SGD Baseline}\label{sec:resnet-sgd-details}
For the SGD baseline we follow the best practice from the existing Keras example which was tuned for generalization performance.
In particular we use:

\begin{compactitem}
\item Number of epochs: 200
\item Initial learning rate: $\ell=0.1$
\item Momentum term: $0.9$
\item L2 regularization coefficient: $0.002$
\item Batch size: $128$
\item Optimizer: SGD with Nesterov momentum
\item Learning rate schedule (epoch, $\ell$-multiplier):
$(80, 0.1)$, $(120,0.01)$, $(160,0.001)$, $(180,0.0005)$.
\end{compactitem}

Data augmentation is the same as described in Section~\ref{sec:resnet-details}.
We report the final validation performance and over the $200$ epochs do not observe any overfitting.

\subsection{CNN-LSTM IMDB Model}\label{sec:cnnlstm-details}

We use the IMDB sentiment classification text dataset provided by the \texttt{tensorflow.keras.datasets} API in Tensorflow version 2.1.
We use 20,000 words and a maximum sequence length of 100 tokens.
We use 20,000 training sequences and 25,000 testing sequences.

We use the CNN-LSTM example\footnote{Available at \url{https://github.com/keras-team/keras/blob/master/examples/imdb_cnn_lstm.py}} as a starting point.
For our SGD baseline we use the Keras model but add a prior $p(\theta) = \mathcal{N}(0,I)$ as used for the Bayesian posterior.
We then use the Tensorflow SGD implementation to optimize the resulting $U(\theta)$ function.
For SGD the model overfits and we therefore report the best end-of-epoch test accuracy and test cross-entropy achieved.

For all experiments, except where explicitly noted otherwise, we use the following parameters:

\begin{compactitem}
\item Number of epochs: 500
\item Initial learning rate: $\ell=0.1$
\item Momentum decay: $\beta = 0.98$
\item Batch size: $|B| = 32$
\item Sampling start: begin at epoch $50$
\item Cycle length: $25$
\item Cycle schedule: cosine
\item Prior: $p(\pars) = \mathcal{N}(0,I)$
\end{compactitem}

We visualize the cyclic schedule used in our CNN-LSTM IMDB experiments in Figure~\ref{fig:cyclical-Ct-Tt-imdb}.

\begin{figure}[!t]
\center{\includegraphics[width=\columnwidth]{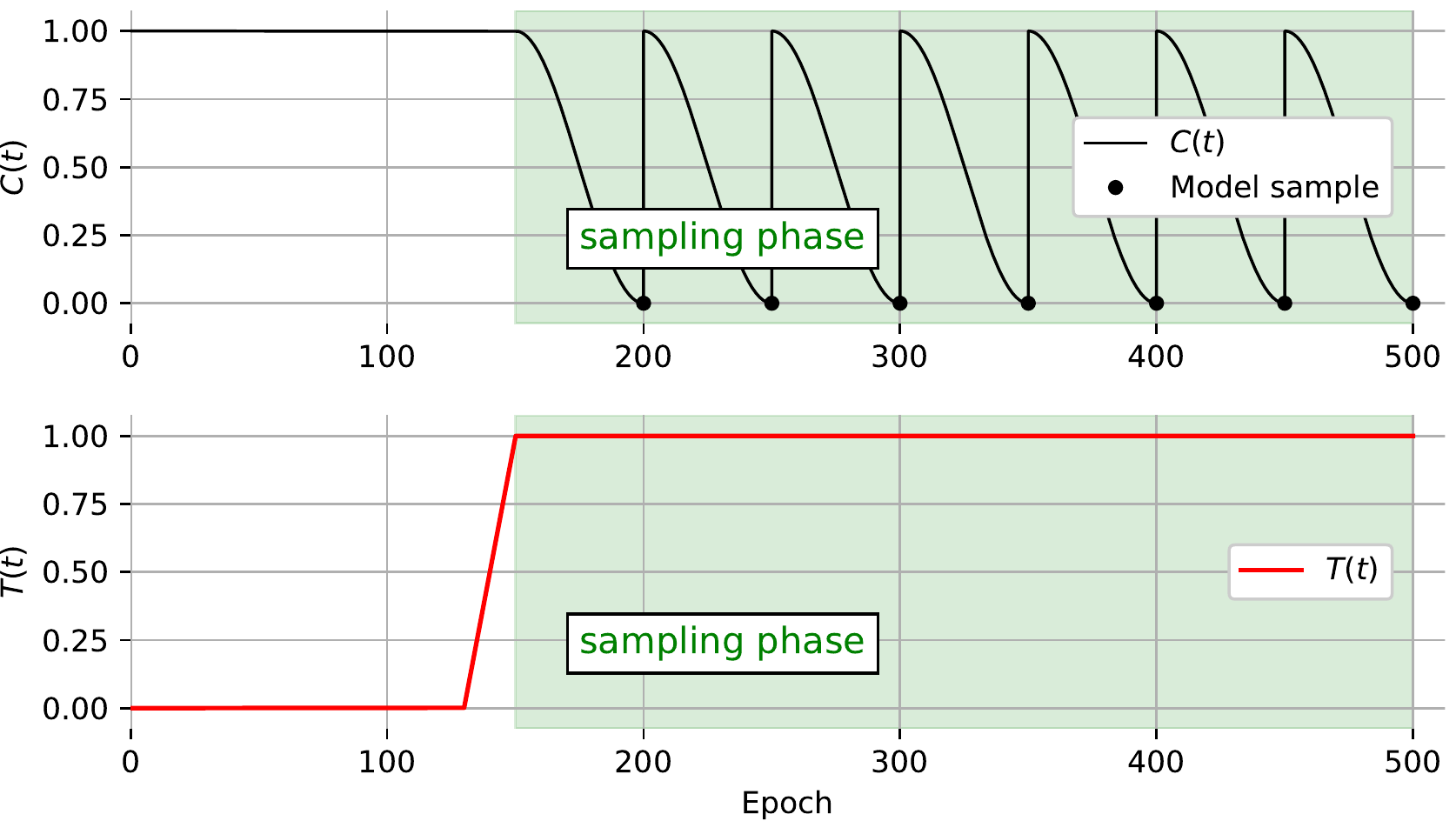}}%
\caption{Cyclical time stepping $C(t)$, and temperature ramp-up $T(t)$ for our CNN-LSTM IMDB model (Section~\ref{sec:cnnlstm-details}).
We sample one model at the end of each cycle when the inference accuracy is best, obtaining an ensemble of 7 models.}%
\label{fig:cyclical-Ct-Tt-imdb}%
\end{figure}

\subsection{CNN-LSTM IMDB SGD Baseline}\label{sec:cnnlstm-sgd-details}
The SGD baseline follows the Keras example settings:

\begin{compactitem}
\item Number of epochs: 50
\item Initial learning rate: $\ell=0.1$
\item Momentum term: $0.98$
\item Regularization: MAP with $\mathcal{N}(0,I)$ prior
\item Batch size: $32$
\item Optimizer: SGD with Nesterov momentum
\item Learning rate schedule: None
\end{compactitem}

We report the optimal test set performance from all end-of-epoch test evaluations.
This is necessary because there is significant overfitting after the first ten epochs.

\section{Deep Learning Parameterization of SG-MCMC Methods}\label{sec:sgmcmc-deeplearning-parameterization}
We derive the bijection between (learning rate $\ell$, momentum decay $\beta$) and (timestep $h$, friction $\gamma$) by considering the \emph{instantaneous gradient effect} $\alpha$ on the parameter, i.e. the amount by which the current gradient at time $t$ affects the current gradient update update at time $t$.
We set $\alpha=\ell/n$, where $\ell$ is the familiar learning rate parameter used in SGD and the factor
$1/n$ is to convert $\nabla_{\pars} U$ to $\nabla_{\pars} G$,
as $\nabla_{\pars} G = \nabla_{\pars} U / n$ is the familiar minibatch mean gradient.
Likewise, the \emph{momentum decay} is the factor $\beta < 1$ by which the momentum vector $\moms^{(t)}$ is shrunk in each discretized time step.
Having determined $\alpha$ and $\beta$ we can derive two non-linear equations that depend on the particular time discretization used; for the symplectic Euler Langevin scheme these are
\begin{equation}
h^2 = \alpha \quad\left(= \frac{\ell}{n}\right),%
\qquad\textrm{and}\qquad%
1-h\gamma = \beta.
\end{equation}
Solving these equations for $h$ and $\gamma$ simultaneously, given $\ell$, $n$, and $\beta$ yields the bijection
\begin{eqnarray}
h & = & \sqrt{\ell/n},\label{eqn:biject-h}\\
\gamma & = & (1-\beta) \sqrt{n/\ell}.\label{eqn:biject-gamma}
\end{eqnarray}

\section{Connection to Stochastic Gradient Descent (SGD)}\label{sec:sgd}
We now give a precise connection between stochastic gradient descent (SGD) and the symplectic Euler SG-MCMC method, Algorithm~\ref{alg:symeuler} from the main paper.

Algorithm~\ref{alg:sgd} gives the stochastic gradient descent (SGD) with momentum algorithm as implemented in
\emph{Tensorflow}'s version 2.1 optimization methods, \texttt{tensorflow.keras.optimizers.SGD}
and~\texttt{tensorflow.train.MomentumOptimizer}, \citep{abadi2016tensorflow}.

\begin{algorithm2e}[t]
\DontPrintSemicolon
\SetKwFunction{FnSGD}{SGD}
\SetKw{KwYield}{yield}
\SetKwProg{Fn}{Function}{}{}
\Fn{\FnSGD{$\tilde{G}$, $\pars^{(0)}$, $\ell$, $\beta$}}{
  \KwInput{
    $\tilde{G}: \Theta \to \mathbb{R}$ average batch loss function, cf equation~(\ref{eqn:Ggrad});
    $\pars^{(0)} \in \mathbb{R}^d$ initial parameter;
    $\ell > 0$ learning rate parameter;
    $\beta \in [0,1)$ momentum decay parameter.
  }
  \KwOutput{Parameter sequence $\pars^{(t)}$, at step $t=1,2,\dots$}
  $\moms^{(0)} \leftarrow \mathbf{0}$ \tcp*{Initialize momentum}
  \For{ $t=1,2,\dots$ }{
    $\moms^{(t)} \leftarrow \beta \, \moms^{(t-1)} - \ell \, \nabla_{\pars} \tilde{G}(\pars^{(t-1)})$
    \label{line:sgd-mom}\tcp*{Update momentum}
    $\pars^{(t)} \leftarrow \pars^{(t-1)} + \moms^{(t)}$ \label{line:sgd-pars}\tcp*{Update parameters}
    \KwYield{$\pars^{(t)}$} \tcp*{Parameter at step $t$}
  }%
}%
\caption{Stochastic Gradient Descent with Momentum (SGD) in Tensorflow.}%
\label{alg:sgd}%
\end{algorithm2e}

Starting with Algorithm~\ref{alg:sgd} we first perform an equivalent substitution of the moments,
\begin{eqnarray}
\tilde{\moms}^{(t)} & := & \sqrt{\frac{n}{\ell}} \, \moms^{(t)},\qquad\textrm{respectively,}\\
\moms^{(t)} & := & \sqrt{\frac{\ell}{n}} \, \tilde{\moms}^{(t)},
\end{eqnarray}
we obtain the update from line~\ref{line:sgd-mom} in Algorithm~\ref{alg:sgd},
\begin{equation}
\sqrt{\frac{\ell}{n}} \, \tilde\moms^{(t)} \leftarrow \beta \sqrt{\frac{\ell}{n}}\, \tilde\moms^{(t-1)} - \ell \, \nabla_{\pars} \tilde{G}(\pars^{(t-1)}.
\label{eqn:sgd-deriv1}
\end{equation}
Multiplying both sides of~(\ref{eqn:sgd-deriv1}) by $\sqrt{n} / \sqrt{\ell}$ we obtain an equivalent form of Algorithm~\ref{alg:sgd} with lines~\ref{line:sgd-mom}~and~\ref{line:sgd-pars} replaced by
\begin{eqnarray}
\tilde{\moms}^{(t)} & \leftarrow & \beta \, \tilde{\moms}^{(t-1)} - \sqrt{\ell n} \, \nabla_{\pars} \tilde{G}(\pars^{(t-1)}),\label{eqn:sgd-sqrt-moms}\\
\pars^{(t)} & \leftarrow & \pars^{(t-1)} + \sqrt{\frac{\ell}{n}}\, \tilde{\moms}^{(t)}.\label{eqn:sgd-sqrt-pars}
\end{eqnarray}
From the bijection~(\ref{eqn:biject-h}--\ref{eqn:biject-gamma}) we have
$h = \sqrt{\ell/n}$ and $\gamma = (1-\beta)\sqrt{n/\ell}$.
Solving for $\beta$ gives
\begin{equation}
    \beta = 1 - \gamma \sqrt{\frac{\ell}{n}} = 1 - \gamma h.\label{eqn:beta}
\end{equation}
We also have
\begin{equation}
    \sqrt{\ell n} = \sqrt{\frac{\ell}{n} n^2} = n \sqrt{\frac{\ell}{n}} = h \, n.\label{eqn:sqrt-ell-n}
\end{equation}
Substituting~(\ref{eqn:beta}) and~(\ref{eqn:sqrt-ell-n}) into~(\ref{eqn:sgd-sqrt-moms}) and~(\ref{eqn:sgd-sqrt-pars}) gives the equivalent updates
\begin{eqnarray}
\tilde{\moms}^{(t)} & \leftarrow & (1-\gamma h) \, \tilde{\moms}^{(t-1)} - h \, n \, \nabla_{\pars} \tilde{G}(\pars^{(t-1)}),\label{eqn:sgd-sqrt-moms-repar}\\
\pars^{(t)} & \leftarrow & \pars^{(t-1)} + h\, \tilde{\moms}^{(t)}.\label{eqn:sgd-sqrt-pars-repar}
\end{eqnarray}

These equivalent changes produce Algorithm~\ref{alg:sgde}.
Algorithm~\ref{alg:sgd} and Algorithm~\ref{alg:sgde} generate equivalent trajectories
$\pars^{(t)}$, $t=1,2,\dots$, but differ in the scaling of their momenta, $\moms^{(t)}$ and $\tilde{\moms}^{(t)}$.

\begin{algorithm2e}[t]
\DontPrintSemicolon
\SetKwFunction{FnSGDE}{SGDEquivalent}
\SetKw{KwYield}{yield}
\SetKwProg{Fn}{Function}{}{}
\Fn{\FnSGDE{$\tilde{G}$, $\pars^{(0)}$, $\ell$, $\beta$}}{
  \KwInput{
    $\tilde{G}: \Theta \to \mathbb{R}$ average batch loss function, cf equation~(\ref{eqn:Ggrad});
    $\pars^{(0)} \in \mathbb{R}^d$ initial parameter;
    $h > 0$ discretization step size parameter;
    $\gamma > 0$ friction parameter.
  }
  \KwOutput{Parameter sequence $\pars^{(t)}$, $t=1,2,\dots$, at step $t$}
  $\tilde{\moms}^{(0)} \leftarrow \mathbf{0}$ \tcp*{Initialize momentum}
  \For{ $t=1,2,\dots$ }{
    $\tilde{\moms}^{(t)} \leftarrow (1-\gamma h) \, \tilde{\moms}^{(t-1)} - h \, n \, \nabla_{\pars} \tilde{G}(\pars^{(t-1)})$\tcp*{Update momentum}\label{line:sgde-moms}
    $\pars^{(t)} \leftarrow \pars^{(t-1)} + h\, \tilde{\moms}^{(t)}$\tcp*{Update parameters}\label{line:sgde-pars}
    \KwYield{$\pars^{(t)}$} \tcp*{Parameter at step $t$}
  }%
}%
\caption{Stochastic Gradient Descent with Momentum (SGD), reparameterized.}%
\label{alg:sgde}%
\end{algorithm2e}

Comparing lines~\ref{line:sgde-moms}--\ref{line:sgde-pars} in Algorithm~\ref{alg:sgde} with
lines~\ref{line:symeuler-moms}--\ref{line:symeuler-pars} in Algorithm~\ref{alg:symeuler} from the main paper we see that when $\mathbf{M}=I$ and $C(t) = 1$ the only remaining difference between the updates is the additional noise $\sqrt{2 \gamma h T} \, \mathbf{M}^{1/2} \mathbf{R}^{(t)}$ in the SG-MCMC method.
In this \emph{precise} sense the SG-MCMC Algorithm~\ref{alg:symeuler} from the main paper is just ``SGD with noise''.

\section{Semi-Adaptive Estimation of Layerwise Preconditioner $\mathbf{M}$}\label{sec:precond}
During our experiments with deep learning models we noticed that both minibatch noise as well as gradient magnitudes tend to behave similar within a set of related parameters.
For example, for a given learning iteration, all gradients related to convolution kernel weights of the same convolution layer of a network tend to have similar magnitudes and minibatch noise variance.
At the same iteration they may be different from the magnitudes and minibatch noise variance of gradients of the parameters of another layer in the same network.

Therefore, we estimate a simple diagonal preconditioner that ties together the scale of all parameter elements that belong to the same model variable.
Moreover, we normalize the preconditioner so that the least sensitive variable always has scale one.
With such normalization, if all variables would be equally sensitive the preconditioner becomes $\mathbf{M}=I$, the identity preconditioner.

We estimate the layerwise preconditioner using Algorithm~\ref{alg:layerwiseprecond}.

\paragraph{Updating the preconditioner.}
In Langevin schemes the preconditioner couples the moment space to the parameter space.
If we use a new estimate $\mathbf{M}'$ to replace the old preconditioner $\mathbf{M}$ then we change this coupling and if left unchanged then the old moments $\moms$ would no longer have the correct distribution.\footnote{More precisely, $\mathbf{M}^{-1/2} \moms$ should always be distributed according to $\mathcal{N}(0,I)$.}
We therefore posit that upon changing the preconditioner the effect of the moments should remain the same.  To retain the full information in the current moments we set $\moms' = \mathbf{M}'^{1/2} \mathbf{M}^{-1/2} \moms$ which we can understand as
$\mathbf{M}'^{1/2} (\mathbf{M}^{-1/2} \moms)$, where the bracketed part canonicalizes the moments $\moms$ to the identity preconditioner, and $\mathbf{M}'^{1/2}$ transfers the canonical moments to the new preconditioner.

\begin{center}%
\begin{algorithm2e}[t]%
\DontPrintSemicolon
\SetKwComment{tcn}{}{}%
\SetKwFunction{FnLayerwisePreconditioning}{EstimateM}
\SetKw{KwReturn}{return}
\SetKwProg{Fn}{Function}{}{}
\Fn{\FnLayerwisePreconditioning{$\tilde{G}$, $\pars$, $K$, $\epsilon$}}{
  \KwInput{
    $\tilde{G}: \Theta \to \mathbb{R}$ mean energy function estimate;
    $(\pars_1, \dots, \pars_S) \in \mathbb{R}^{d_1 \times \dots \times d_S}$ current model parameter variables;
    $K$ number of minibatches (default $K=32$);
    $\epsilon$ regularization value (default $\epsilon = 10^{-7}$)
  }
  \KwOutput{Preconditioning matrix $\mathbf{M}$}
  \For{ $s=1,2,\dots,S$ }{
    $\rawm_s \leftarrow \textbf{0}$
  }
  \For{ $k=1,2,\dots,K$}{
    $\mathbf{g}^{(k)} \leftarrow \nabla_{\pars} \tilde{G}(\pars)$\tcp*{Noisy gradient}
      \For{ $s=1,2,\dots,S$ }{
        $\rawm_s \leftarrow \rawm_s + \mathbf{g}^{(k)}_s \cdot \mathbf{g}^{(k)}_s$
      }
  }
  \For{ $s=1,2,\dots,S$ }{
    $\sigma_s \leftarrow \sqrt{\epsilon + \frac{1}{d_s K} \sum_i \rawm_{s,i}}$\tcp*{RMSprop}
  }
  $\sigma_{\textrm{min}} \leftarrow \min_{s} \sigma_s$\tcp*{Least sensitive}
  \For{ $s=1,2,\dots,S$ }{
    $\mathbf{M}_s \leftarrow \frac{\sigma_s}{\sigma_{\textrm{min}}} \, I$
  }
  $\mathbf{M} \leftarrow \left[\begin{array}{ccc}
  \mathbf{M}_1 & \dots & 0\\
  \vdots & \ddots & \vdots \\
  0 & \dots & \mathbf{M}_S
  \end{array}\right]$\;%
  \KwReturn{$\mathbf{M}$}
}
\caption{Estimate Layerwise Preconditioner.}
\label{alg:layerwiseprecond}
\end{algorithm2e}
\end{center}

\section{Kullback-Leibler Scaling in Variational Bayesian Neural Networks}
\label{sec:tempered-elbo}

With the posterior energy $U(\pars)$ defined in the main paper we define two variants of tempered posterior energies:
\begin{compactitem}
\item Fully tempered energy: $U_F(\pars) = U(\pars)/T$, and
\item Partially tempered energy:
$U_P(\pars) = -\log p(\pars) -\frac{1}{T}\sum_{i=1}^n \log p(y_i|x_i,\pars)$.
\end{compactitem}
Note that $U_F(\pars)$ is used for all experiments in the paper and temper both the log-likelihood as well as the log-prior terms, whereas $U_P(\pars)$ only scales the log-likelihood terms while leaving the log-prior untouched.

We now show that Kullback-Leibler scaling as commonly done in variational Bayesian neural networks corresponds to approximating the partially tempered posterior,
\begin{equation}
p_P(\pars|\mathcal{D}) \propto \exp(-U_P(\pars)).
\end{equation}

For any distribution $q(\pars)$ we consider the Kullback-Leibler divergence,
\begin{align}
& D_{\textrm{KL}}(q(\pars) \,\|\, p_P(\pars|\mathcal{D}))\\
& = \mathbb{E}_{\pars \sim q(\pars)}\left[\log q(\pars) - \log p_P(\pars|\mathcal{D})\right]\\
& = \mathbb{E}_{\pars \sim q(\pars)}\left[
    \log q(\pars) - \log \frac{\exp(-U_P(\pars))}{\int \exp(-U_P(\pars')) \,\textrm{d}\pars'}
\right].\label{eqn:kl-norm}
\end{align}
The normalizing integral in~(\ref{eqn:kl-norm}) is not a function of $\pars$ and thus does not depend on $q(\pars)$, allowing us to simplify the equation further:
\begin{align}
& = \mathbb{E}_{\pars \sim q(\pars)}\left[
    \log q(\pars) - \log p(\pars) - \frac{1}{T} \sum_{i=1}^n \log p(y_i|x_i,\pars)
\right]\\
& \qquad\qquad + \underbrace{\log \int \exp(-U_P(\pars)) \,\textrm{d}\pars}_{\textrm{constant, $=: \log E_P$}}\\
& = D_{\textrm{KL}}(q(\pars) \,\|\, p(\pars)) - \frac{1}{T} \sum_{i=1}^n \log p(y_i|x_i,\pars) + \log E_P.\label{eqn:partial-kl}
\end{align}
Here we defined $E_P$ as the \emph{partial temperized evidence} which does not depend on $\pars$ and therefore becomes a constant.
The global minimizer of~(\ref{eqn:partial-kl}) over all distributions $q \in \mathcal{Q}$ is the unique distribution $p_P(\pars|\mathcal{D})$,~\citep{mackay1995ensemblelearning}.

We now consider this minimizer, substituting $\lambda := T$,
\begin{align}
& \argmin_{q \in \mathcal{Q}} \, D_{\textrm{KL}}(q(\pars) \,\|\, p_P(\pars|\mathcal{D}))\\
& = \argmin_{q \in \mathcal{Q}} \, D_{\textrm{KL}}(q(\pars) \,\|\, p(\pars)) - \frac{1}{T} \sum_{i=1}^n \log p(y_i|x_i,\pars)
\end{align}
The minimizing $q \in \mathcal{Q}$ does not depend on the overall scaling of the optimizing function.  We can therefore scale the function by a factor of $T$,
\begin{align}
& = \argmin_{q \in \mathcal{Q}} \, T D_{\textrm{KL}}(q(\pars) \,\|\, p(\pars)) - \sum_{i=1}^n \log p(y_i|x_i,\pars)
\end{align}
Substituting $\lambda := T$ yields
\begin{align}
& = \argmin_{q \in \mathcal{Q}} \, \lambda D_{\textrm{KL}}(q(\pars) \,\|\, p(\pars)) - \sum_{i=1}^n \log p(y_i|x_i,\pars).
\label{eqn:partial-nelbo}
\end{align}
The last equation,~(\ref{eqn:partial-nelbo}) is the KL-weighted negative evidence lower bound (ELBO) objective commonly used in variational Bayes for Bayesian neural networks, confer the ELBO equation~(\ref{eqn:vb-bnn}) from the main paper.

\section{Inference Bias-Variance Trade-off Hypothesis}\label{sec:biasvar-appendix}
\hypothesis{Bias-variance Tradeoff Hypothesis}{For $T=1$ the posterior is diverse and there is high variance between model predictions.  For $T \ll 1$ we sample nearby modes and reduce prediction variance but increase bias; the variance dominates the error and reducing variance ($T \ll 1$) improves predictive performance.}

We approach the hypothesis using a simple asymptotic argument.
We consider the SG-MCMC method we use, including preconditioning and cyclical time stepping.  Whereas within a cycle the Markov chain is non-homogeneous, if we consider only the end-of-cycle iterates that emit a parameter $\pars^{(t)}$, then this coarse-grained process is a homogeneous Markov chain.
For such Markov chains we can leverage generalized central limit theorems for functions of $\pars$, see e.g.~\citep{jones2004markovchainclt,haggstrom2007varianceclt}, and because of existence of limits we can consider the asymptotic behavior of the test cross-entropy performance measure $C(S)$ as we increase the ensemble size $S \to \infty$.

In particular, expectations of smooth functions of empirical means of $S$ samples have an expansion of the form,~\citep{nowozin2018jvi,schucany1971biasreduction},
\begin{equation}
\mathbb{E}[C(S)] = C(\infty) + a_1 \frac{1}{S} + a_2 \frac{1}{S^2} + \dots.
\label{eqn:jackknife-expansion}
\end{equation}

\experiment{Risk Asymptotics Experiment:}
if we can estimate $C(\infty)$ we know what performance we could achieve if we were to keep sampling.
To this end we apply a simple linear regression estimate, \citep{schucany1971biasreduction}, to the empirically observed performance estimates $\hat{C}(S)$ for different ensemble sizes $S$.
By truncation at second order, we obtain estimates for $C(\infty)$, $a_1$, and $a_2$.

In Figure~\ref{fig:resnet-asymptotics} we show the regressed test cross-entropy metric obtained by fitting~(\ref{eqn:jackknife-expansion}) to second order to all samples for $S \geq 20$ close to the asymptotic regime, and visualize the estimate $\hat{C}(\infty)$.
In Figure~\ref{fig:resnet-asymptotics-tempdep} we visualize our estimated $\hat{C}(\infty)$ as a function of the temperature $T$.
The results indicate two things:
\emph{first}, we could gain better predictive performance from running our SG-MCMC method for longer (Figure~\ref{fig:resnet-asymptotics});
but \emph{second}, the additional gain that could be obtained from longer sampling is too small to make $T=1$ superior to $T<1$ (Figure~\ref{fig:resnet-asymptotics-tempdep}).

\begin{figure}[!t]
\vspace{-0.2cm}%
\center{\includegraphics[width=\columnwidth]{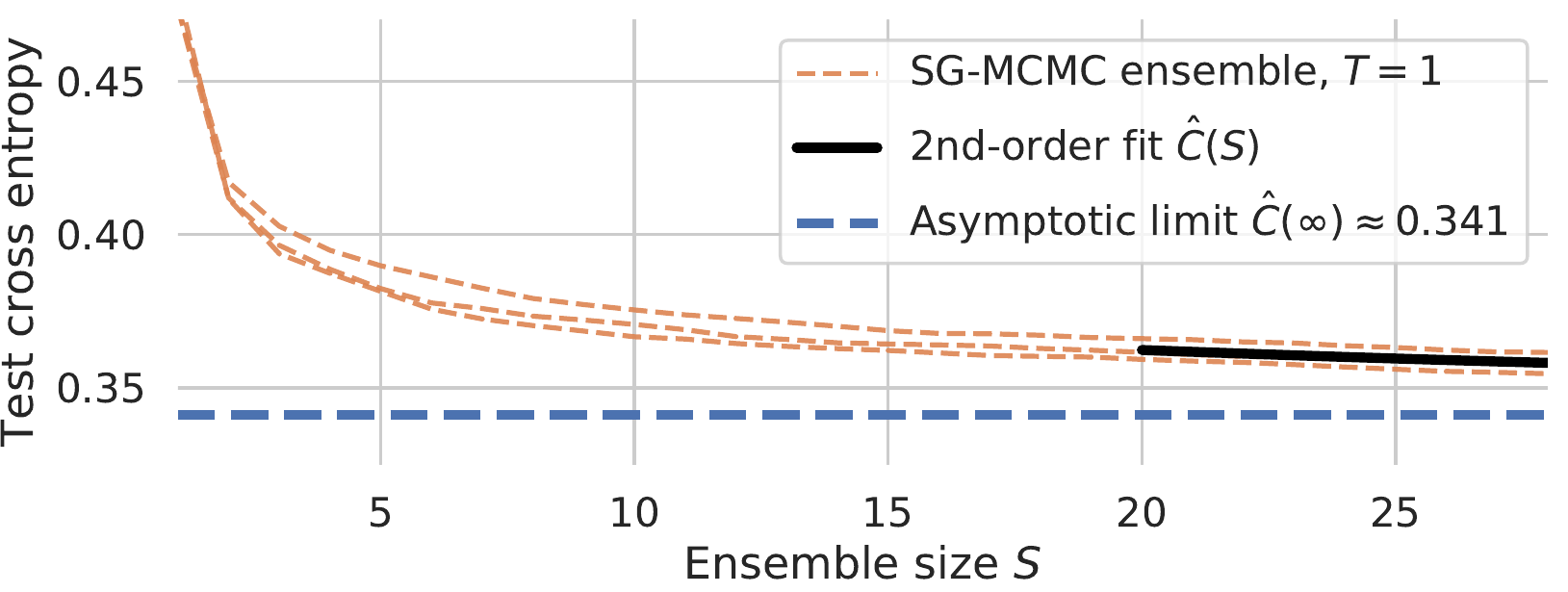}}%
\vspace{-0.3cm}%
\caption{Regressing the limiting ResNet-20/CIFAR-10 ensemble performance: at temperature $T=1$ an ensemble of size $S=\infty$ would achieve 0.341 test cross-entropy.
For SG-MCMC we show three different runs with varying seeds.}%
\label{fig:resnet-asymptotics}%
\vspace{-0.2cm}%
\end{figure}

\begin{figure}[!t]
\vspace{-0.2cm}%
\center{\includegraphics[width=\columnwidth]{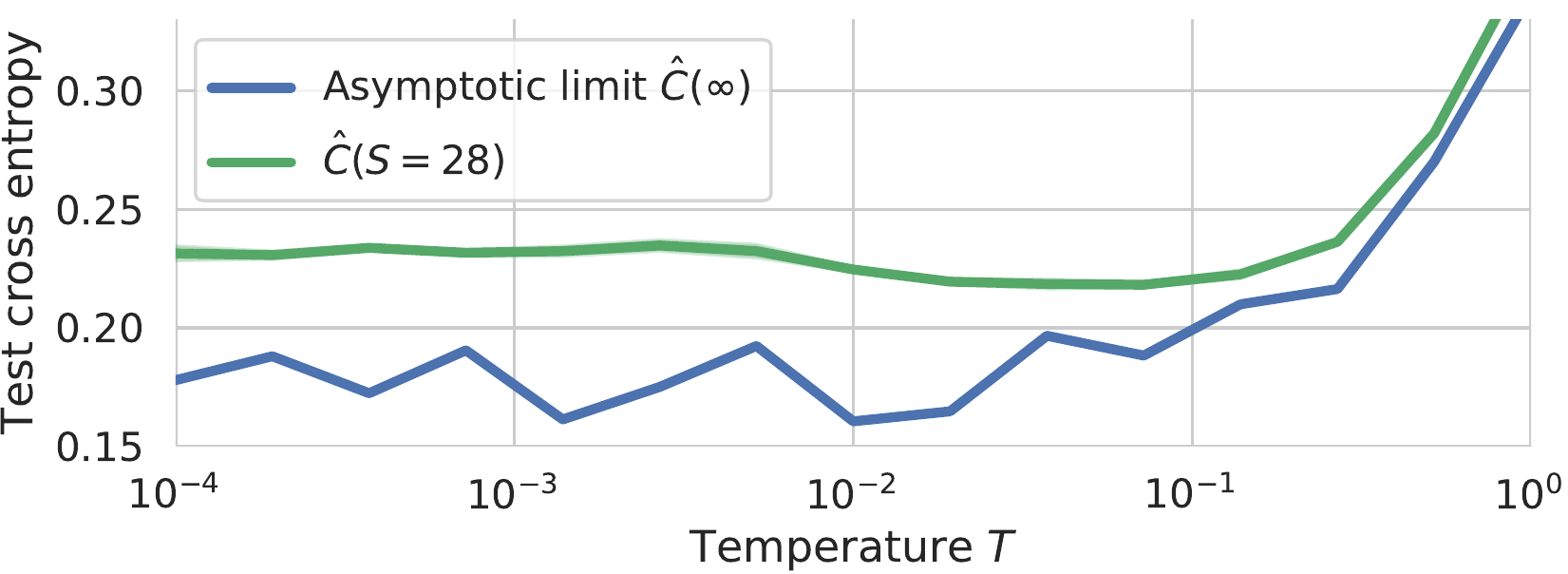}}%
\center{\includegraphics[width=\columnwidth]{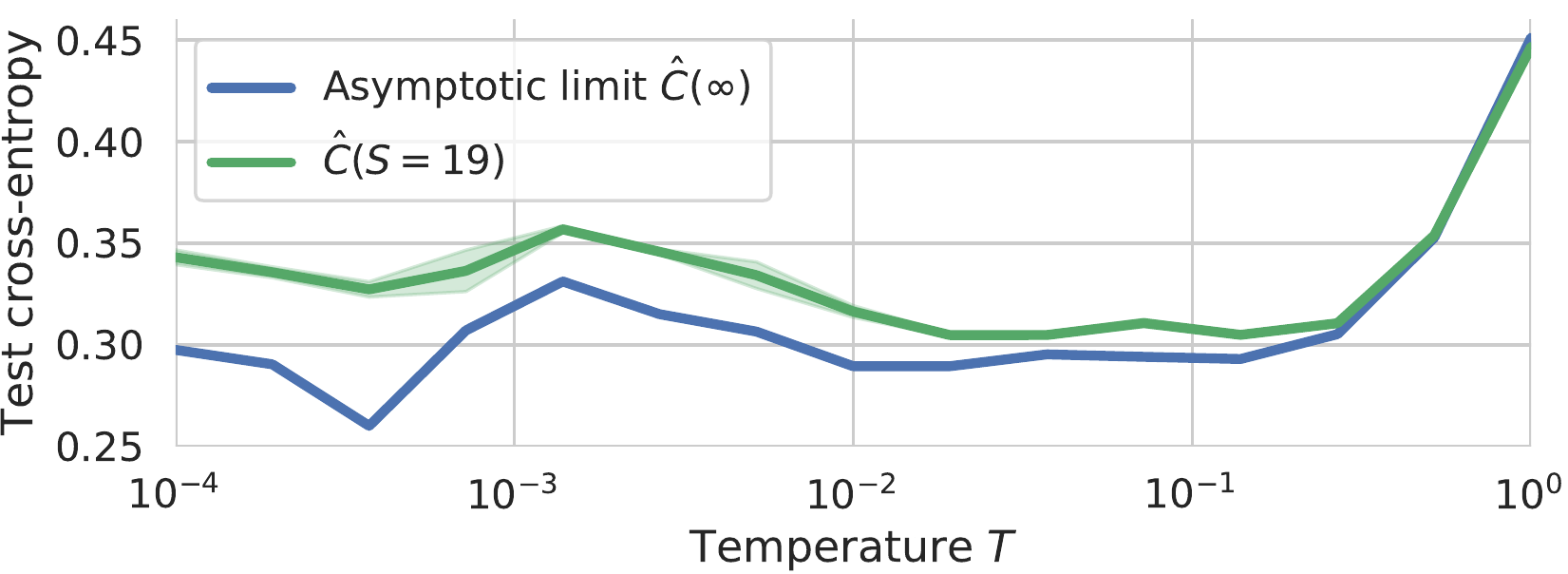}}%
\vspace{-0.3cm}%
\caption{Ensemble variance for ResNet-20/CIFAR-10 (\textbf{top}) and CNN-LSTM/IMDB (\textbf{bottom}) does not explain poor performance at $T=1$: even in the infinite limit the performance $C(\infty)$ remains poor compared to $T<1$.}
\label{fig:resnet-asymptotics-tempdep}%
\vspace{-0.2cm}%
\end{figure}

\section{Cold posteriors improve uncertainty metrics.}\label{sec:tempplots-appendix}
In the main paper we show that cold posteriors improve prediction performance in terms of accuracy and cross entropy. Figure~\ref{fig:tempplots-appendix} and Figure~\ref{fig:tempplots-appendix2} show that for both the ResNet-20 and the CNN-LSTM model, cold posteriors also improve the uncertainty metrics Brier score~~\cite{brier1950verification} and expected calibration error (ECE)~\cite{naeini2015obtaining}.

\begin{figure}[!t]
\vspace{-0.2cm}%
\center{\includegraphics[width=\columnwidth]{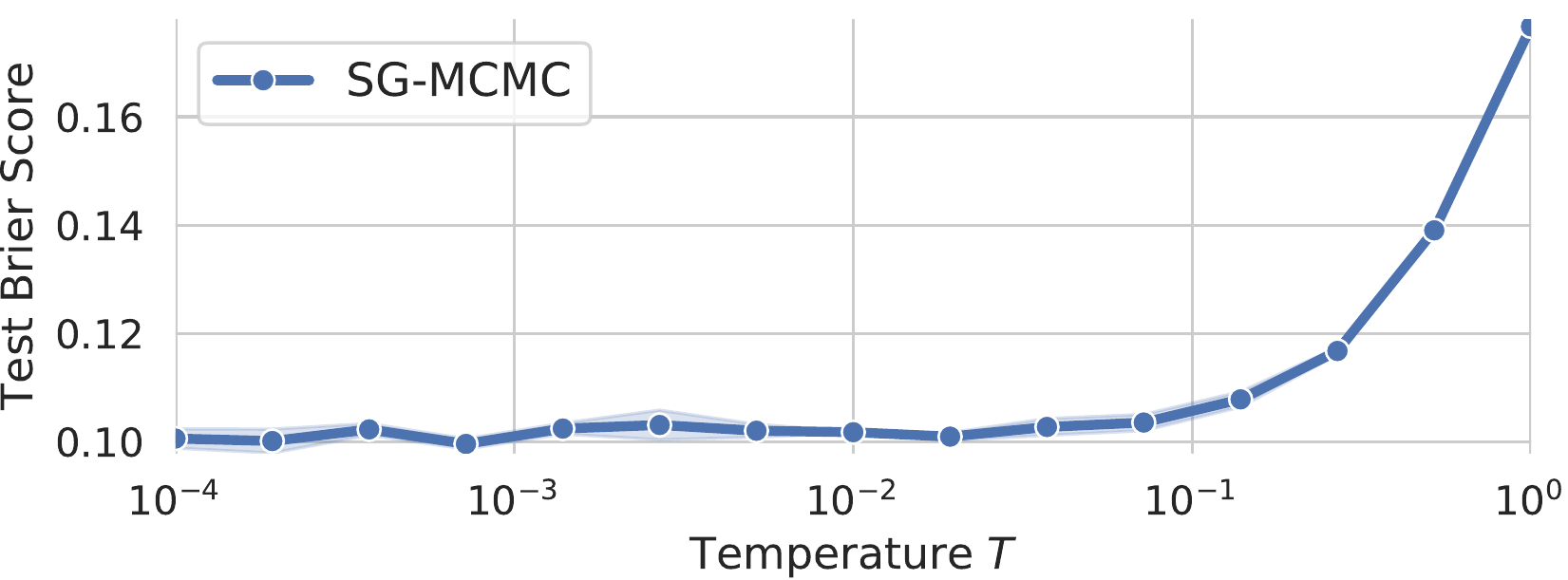}}%
\center{\includegraphics[width=\columnwidth]{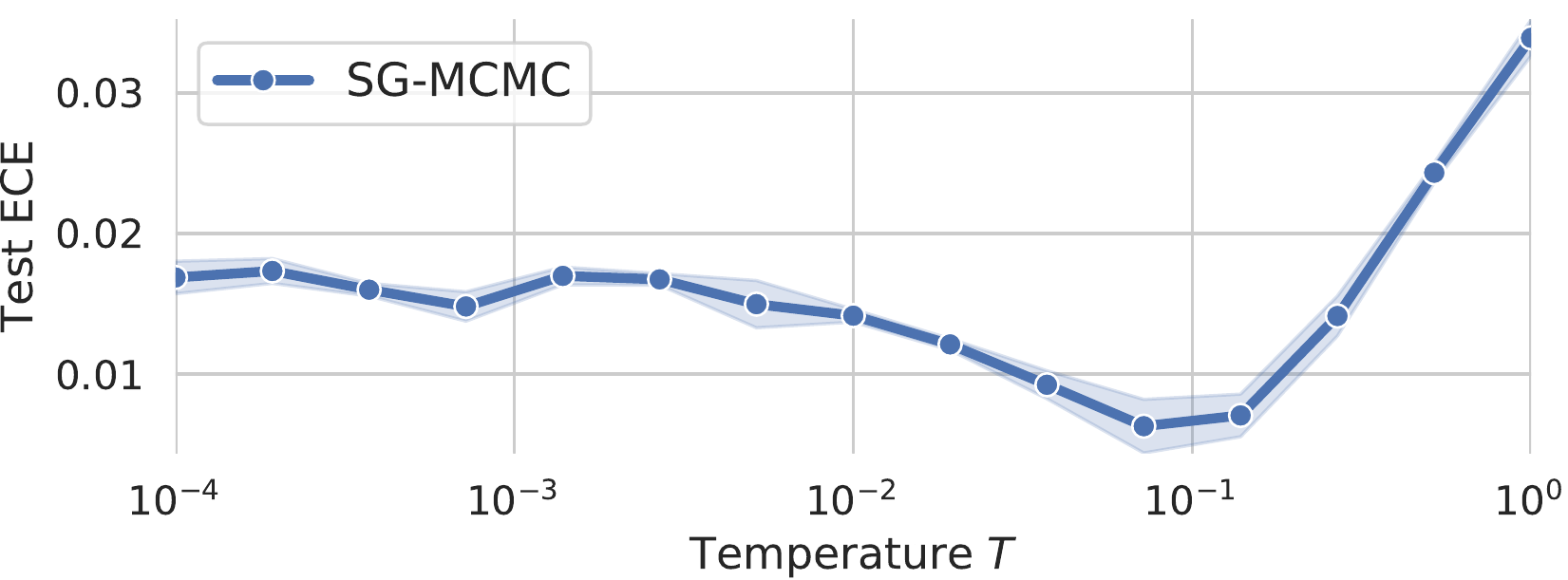}}%
\vspace{-0.3cm}%
\caption{ResNet-20/CIFAR-10: In the main paper we show that cold posteriors improve prediction performance in terms of accuracy and cross entropy (Figure~\ref{fig:resnet-tempdep-acc} and Figure~\ref{fig:resnet-tempdep-ce}). This plot shows that cold posteriors also improve the uncertainty metrics Brier score and expected calibration error (ECE) (lower is better).}
\label{fig:tempplots-appendix}%
\end{figure}

\begin{figure}[!t]
\vspace{-0.2cm}%
\center{\includegraphics[width=\columnwidth]{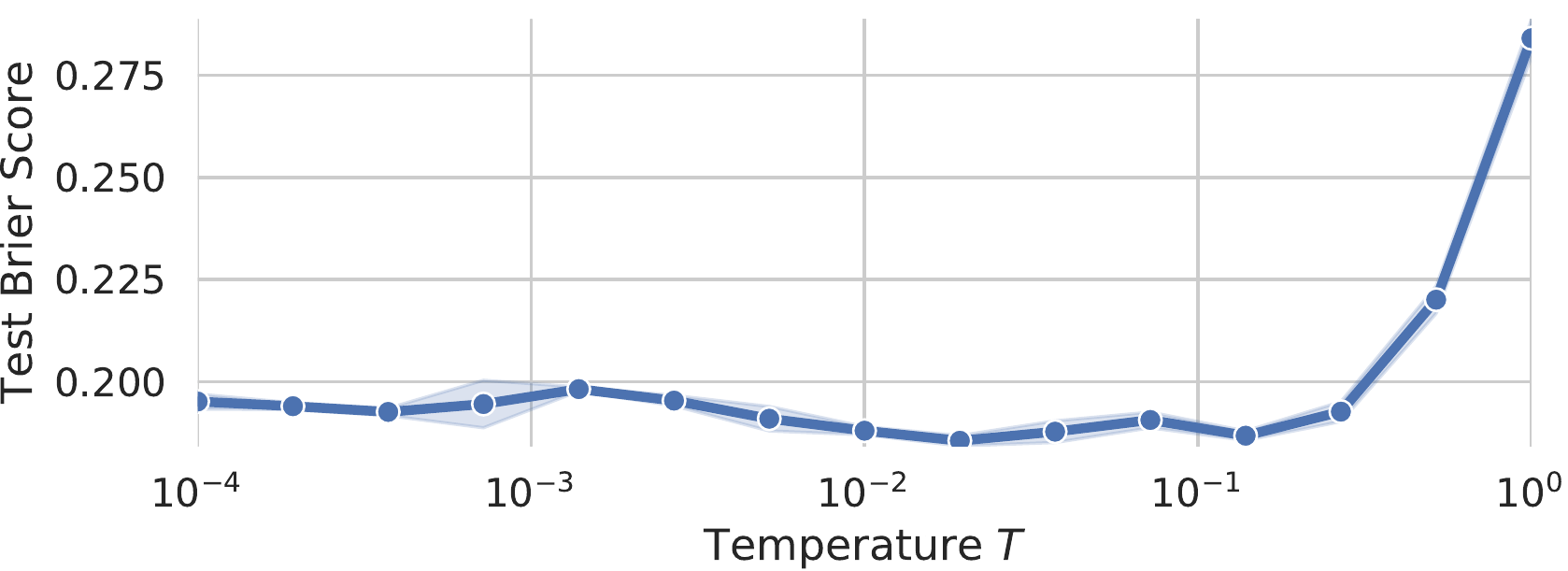}}%
\center{\includegraphics[width=\columnwidth]{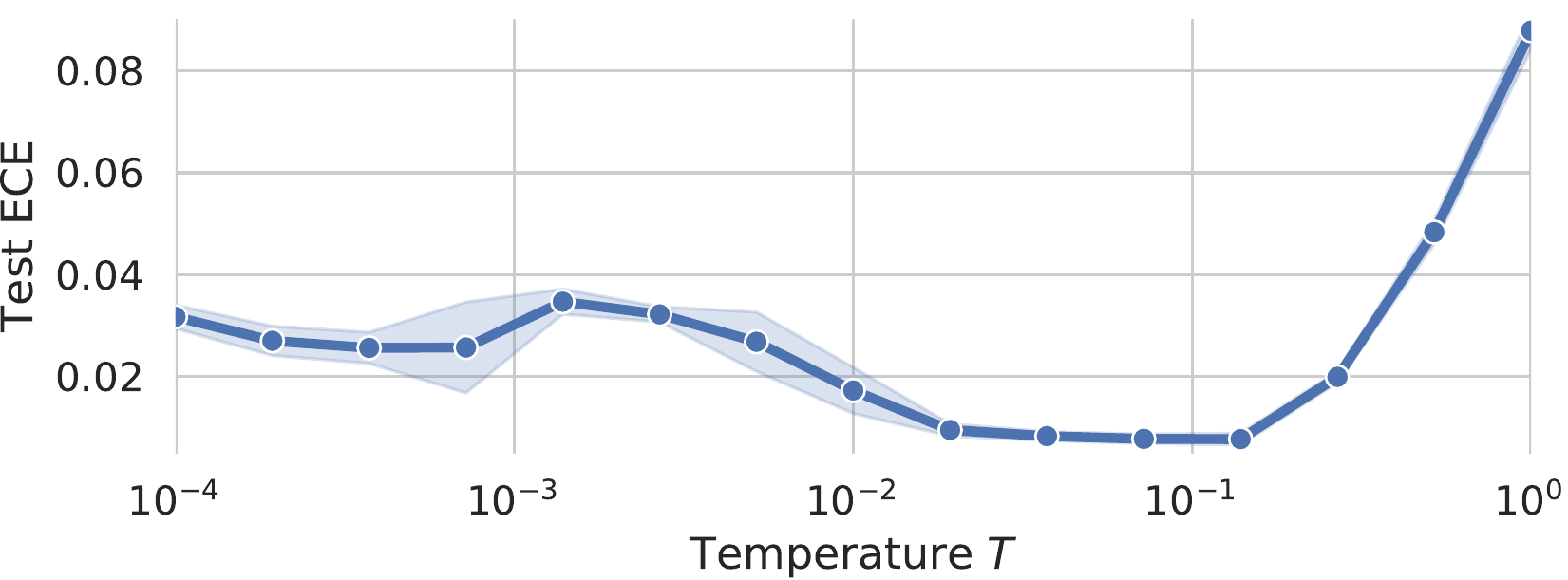}}%
\vspace{-0.3cm}%
\caption{CNN-LSTM/IMDB: 
Cold posteriors also improve the uncertainty metrics Brier score and expected calibration error (ECE) (lower is better).
The plots for accuracy and cross entropy are shown in Figure~\ref{fig:imdb-tempdep}.}
\label{fig:tempplots-appendix2}%
\end{figure}

\section{Details on the Experiment for the Implicit Initialization Prior in SGD Hypothesis}\label{sec:implic_init-appendix}
SGD and SG-MCMC are setup as described in Appendix~\ref{sec:resnet-details}. In the main paper the test accuracy as function of epochs is shown in Figure~\ref{fig:temperature_010}. In Figure~\ref{fig:ce_temperature_010} we additionally report the test cross entropy for the same experiment. SGD initialized by the last model of the SG-MCMC sampling dynamics also recovers the same performance in terms of cross entropy as vanilla SGD.

\begin{figure}%
\vspace{-0.25cm}%
\center{\includegraphics[width=\columnwidth]{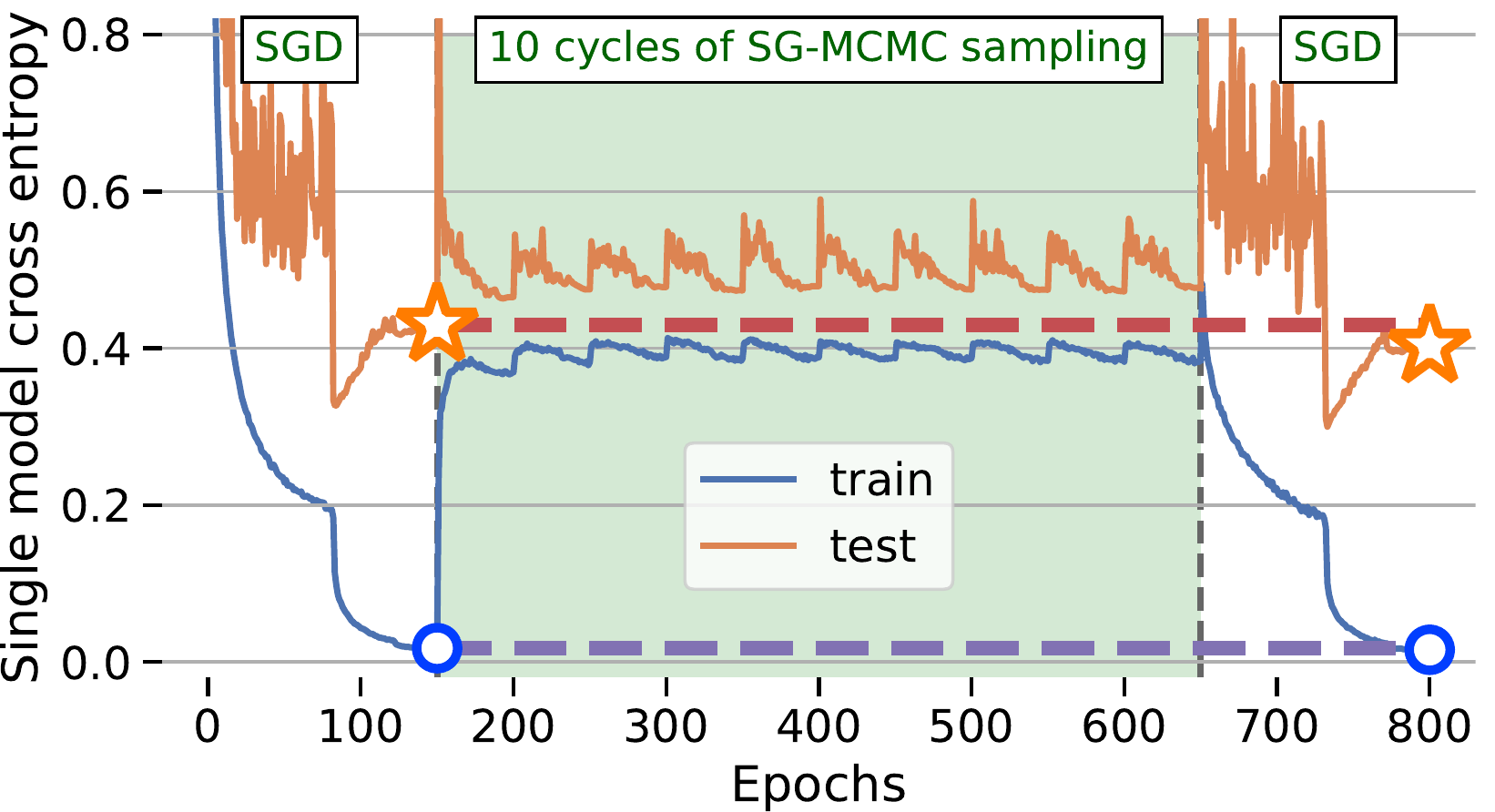}}%
\vspace{-0.3cm}%
\caption{Do the SG-MCMC dynamics harm a beneficial initialization bias used by SGD?
We first train a ResNet-20 on CIFAR-10 via SGD, then switch over to SG-MCMC sampling and finally switch back to SGD optimization.
We report the single-model test cross entropy of SGD and the SG-MCMC chain as function of epochs. SGD recovers from being initialized by the SG-MCMC state.}%
\label{fig:ce_temperature_010}%
\vspace{-0.25cm}%
\end{figure}

\section{Diagnostics: Temperatures}\label{sec:diagnostics}
The following proposition adapted from~\citep[Section 6.1.5]{leimkuhler2016molecular} provides a general way to construct temperature observables.

\begin{proposition}[Constructing Temperature Observables]\label{prop:temperature}
Given a Hamiltonian $H(\pars,\moms)$ corresponding to Langevin dynamics,
\begin{equation}
H(\pars,\moms) = \frac{1}{T} U(\pars) + \frac{1}{2} \moms^T \mathbf{M}^{-1} \moms,
\label{eqn:langevin-hamiltonian}
\end{equation}
and an arbitrary smooth vector field $B: \mathbb{R}^d \times \mathbb{R}^d \to \mathbb{R}^d \times \mathbb{R}^d$ satisfying
\begin{compactitem}
\item $0 < \mathbb{E}_{(\pars,\moms)}[\langle B(\pars,\moms), \nabla H(\pars,\moms)\rangle] < \infty$,
\item $0 < \mathbb{E}_{(\pars,\moms)}[\langle \mathbf{1}_{2d}, \nabla B(\pars,\moms)\rangle] < \infty$, and
\item $\|B(\pars,\moms) \, \exp(- H(\pars,\moms))\| < \infty$ for all $(\pars,\moms) \in \mathbb{R}^d \times \mathbb{R}^d$,
\end{compactitem}
then
\begin{equation}
T = \frac{\mathbb{E}_{(\pars,\moms)}[\langle B(\pars,\moms), \nabla H(\pars,\moms)\rangle]}{
    \mathbb{E}_{(\pars,\moms)}[\langle \mathbf{1}_{2d}, \nabla B(\pars,\moms)\rangle]}.
    \label{eqn:temperature-observable}
\end{equation}
\end{proposition}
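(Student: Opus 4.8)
The identity~(\ref{eqn:temperature-observable}) is an instance of the standard integration-by-parts (``equipartition'') relation for a canonical measure, applied to an arbitrary admissible test vector field $B$. As recalled in Section~\ref{sec:langevin-dynamics}, the Langevin dynamics associated with $H$ leaves invariant the canonical distribution
\begin{equation}
\rho(\pars,\moms) = \tfrac{1}{Z}\exp\left(-\tfrac{1}{T}H(\pars,\moms)\right), \qquad Z := \int \exp\left(-\tfrac{1}{T}H(\pars,\moms)\right)\,\textrm{d}\pars\,\textrm{d}\moms,
\end{equation}
and the expectations $\mathbb{E}_{(\pars,\moms)}[\,\cdot\,]$ in the statement are taken with respect to $\rho$. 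Writing $z=(\pars,\moms)\in\mathbb{R}^{2d}$, the only structural fact I will use is $\nabla\rho(z) = -\tfrac1T\,\rho(z)\,\nabla H(z)$. The plan is: (i) rewrite the denominator of~(\ref{eqn:temperature-observable}) as a boundary-free integral against $\nabla\rho$ using the divergence theorem; (ii) substitute $\nabla\rho = -\tfrac1T\rho\,\nabla H$ so that it becomes $1/T$ times the numerator; and (iii) invoke the three hypotheses to check that every integral converges, that the boundary term at infinity vanishes, and that the denominator is strictly positive, so that the ratio is well defined and equals $T$.

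\textbf{Main steps.} Let $\Omega_R := \{z\in\mathbb{R}^{2d} : \|z\|\le R\}$ with outward unit normal $\nu$. The first step is to apply the divergence theorem to the vector field $z\mapsto B(z)\rho(z)$, using $\langle\mathbf{1}_{2d},\nabla B\rangle = \nabla\cdot B$:
\begin{equation}
\int_{\Omega_R} \langle \mathbf{1}_{2d}, \nabla B(z)\rangle\,\rho(z)\,\textrm{d}z = -\int_{\Omega_R} \langle B(z), \nabla\rho(z)\rangle\,\textrm{d}z + \oint_{\partial\Omega_R} \rho(z)\,\langle B(z),\nu(z)\rangle\,\textrm{d}S(z).
\end{equation}
Next, substituting $\nabla\rho = -\tfrac1T\rho\,\nabla H$ into the first term on the right and letting $R\to\infty$,
\begin{equation}
\mathbb{E}_{(\pars,\moms)}\left[\langle\mathbf{1}_{2d},\nabla B\rangle\right] = \tfrac1T\,\mathbb{E}_{(\pars,\moms)}\left[\langle B,\nabla H\rangle\right] + \lim_{R\to\infty}\oint_{\partial\Omega_R}\rho\,\langle B,\nu\rangle\,\textrm{d}S.
\end{equation}
The remaining point is that the surface term vanishes: on $\partial\Omega_R$ its integrand is bounded in modulus by $\|B(z)\exp(-H(z)/T)\|/Z$, which is uniformly bounded by the third hypothesis; since in addition $\exp(-H/T)$ decays like a Gaussian in $\moms$ (from the quadratic kinetic term) and $U$ is coercive in $\pars$ (from the proper prior in~(\ref{eqn:U})), the map $z\mapsto\rho(z)\|B(z)\|$ is integrable on $\mathbb{R}^{2d}$, forcing the boundary integral to $0$. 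Rearranging the surviving identity gives $\mathbb{E}[\langle B,\nabla H\rangle] = T\,\mathbb{E}[\langle\mathbf{1}_{2d},\nabla B\rangle]$; by the first two hypotheses both expectations are finite and the one in the denominator is strictly positive, so division is legitimate and yields exactly~(\ref{eqn:temperature-observable}).

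\textbf{Main obstacle and remarks.} The algebra is essentially one line, and the single genuinely delicate point is the legitimacy of the integration by parts over all of $\mathbb{R}^{2d}$, i.e.\ the vanishing of the boundary term at infinity; this is precisely the role of the growth/decay condition $\|B(\pars,\moms)\exp(-H(\pars,\moms))\|<\infty$, acting together with the tails of the canonical density, while the other two conditions merely rule out a $0/0$ or $\infty/\infty$ in the final quotient. Two remarks are worth including. First, the identity holds for \emph{every} admissible $B$, which is exactly what makes~(\ref{eqn:temperature-observable}) a usable diagnostic: taking $B=(0,\moms)$ recovers the kinetic ``equipartition'' temperature $T=\mathbb{E}[\moms^T\mathbf{M}^{-1}\moms]/d$, while $B=(\pars,0)$ or $B=(\nabla_\pars U,0)$ yields configurational-temperature estimators, matching the two families used in the experiments. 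Second, the equality is exact only in expectation under the true invariant measure, so for finite-length SG-MCMC chains~(\ref{eqn:temperature-observable}) is an approximate check rather than an exact identity --- which is precisely how it is used to assess simulation accuracy.
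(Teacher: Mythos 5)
The paper itself offers no proof of Proposition~\ref{prop:temperature} --- it is stated as an adaptation of \citet[Section~6.1.5]{leimkuhler2016molecular} --- so your integration-by-parts argument is being compared against the standard textbook derivation, and in structure it is exactly that derivation: apply the divergence theorem to $B\rho$, use $\nabla\rho\propto-\rho\,\nabla H$, and kill the boundary term. The algebra in your two displayed identities is correct, and your closing remarks about the choice $B=(\mathbf{0},\moms)$ versus $B=(\pars,\mathbf{0})$ and about the identity being exact only under the invariant measure are exactly the right framing.

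There is, however, one genuine gap, and it sits in the very first line: the assertion that the dynamics leaves invariant $\rho\propto\exp(-H/T)$ for the $H$ of~(\ref{eqn:langevin-hamiltonian}). A direct Fokker--Planck computation for the SDE~(\ref{eqn:langevin1}--\ref{eqn:langevin2}) shows the stationary density is $\rho\propto\exp\bigl(-\tfrac{1}{T}\bigl(U(\pars)+\tfrac12\moms^{T}\mathbf{M}^{-1}\moms\bigr)\bigr)$, i.e.\ the tempered canonical measure of the \emph{physical} Hamiltonian $\tilde H=U+\tfrac12\moms^{T}\mathbf{M}^{-1}\moms$; your density $\exp(-H/T)$ instead has $\pars$-marginal $\propto\exp(-U/T^{2})$, which is not the tempered posterior the sampler targets. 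Under the true stationary measure one has $\nabla_{\moms}\log\rho=-\tfrac1T\nabla_{\moms}H$ but $\nabla_{\pars}\log\rho=-\nabla_{\pars}H$ (no extra $1/T$, because the $1/T$ is already baked into~(\ref{eqn:langevin-hamiltonian})), so your key substitution $\nabla\rho=-\tfrac1T\rho\,\nabla H$ fails on the position block and the ratio~(\ref{eqn:temperature-observable}) evaluates to $T$ only for momentum-supported $B$; for $B=(\pars,\mathbf{0})$ it evaluates to $1$. The consistent reading --- and the one the paper itself uses in Appendix~\ref{sec:conf-temp}, where $\nabla_{\pars}H$ is taken to be $\nabla_{\pars}U$ so that $\mathbb{E}[\langle\pars,\nabla_{\pars}U\rangle]=Td$ --- is that the factor $1/T$ in~(\ref{eqn:langevin-hamiltonian}) is spurious and the proposition should be proved for $\tilde H$ under $\exp(-\tilde H/T)$; with that correction your argument goes through verbatim and recovers both the kinetic and configurational estimators. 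You should either prove it in that form or explicitly flag the mismatch rather than silently adopting a stationary measure that the dynamics does not possess. A second, minor point: the third hypothesis as stated is only pointwise finiteness of $\|B\exp(-H)\|$, which does not by itself bound the surface integral or force it to vanish; your supplementary appeal to the Gaussian decay in $\moms$ and coercivity of $U$ is the right repair, but you should say plainly that you are strengthening the stated condition to uniform decay at infinity rather than deriving the vanishing boundary term from it.
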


Note that for the Hamiltonian~(\ref{eqn:langevin-hamiltonian}) we have, assuming a symmetric preconditioner, $(\mathbf{M}^{-1})^T = \mathbf{M}^{-1}$,
\begin{eqnarray}
\nabla_{\pars} H(\pars,\moms) & = & \frac{1}{T} \nabla_{\pars} U(\pars),\\
\nabla_{\moms} H(\pars,\moms) & = & \mathbf{M}^{-1} \, \moms.
\end{eqnarray}

\subsection{Kinetic Temperature Estimation}\label{sec:kinetic-temp}
Simulating the Langevin dynamics, equations~(\ref{eqn:langevin1}--\ref{eqn:langevin2}) from the main paper, produces moments $\moms$ which are jointly distributed according to a multivariate Normal distribution,~\citep{leimkuhler2016molecular},
\begin{equation}
\moms \sim \mathcal{N}(0,\mathbf{M}).\label{eqn:moment-distribution}
\end{equation}
The \emph{kinetic temperature} $\hat{T}_K(\moms)$ is derived from the moments as
\begin{equation}
\hat{T}_K(\moms) := \frac{\moms^T \, \mathbf{M}^{-1} \, \moms}{d},
\end{equation}
and we have that for a perfect simulation of the dynamics we achieve $\mathbb{E}[\hat{T}_K(\moms)] = T$, where $T$ is the target temperature of the system,~\citep{leimkuhler2016molecular}.
This can be seen by instantiating Proposition~\ref{prop:temperature} for the Langevin Hamiltonian and $B_K(\pars,\moms)=\left[\begin{array}{c}\mathbf{0}\\ \moms\end{array}\right]$.

In general we only approximately solve the SDE and errors in the solution arise due to discretization, minibatch noise, or lack of full equilibration to the stationary distribution.
Therefore, we can use $\hat{T}_K(\moms)$ as a diagnostic to measure the temperature of the current system state, and a deviation from the target temperature could diagnose poor solution accuracy.
To this end, we know that if $\moms \sim \mathcal{N}(0,\mathbf{M})$ then $(\mathbf{M}^{-1/2} \, \moms) \sim \mathcal{N}(0,I_d)$ and thus the inner product $(\mathbf{M}^{-1/2} \, \moms)^T \, (\mathbf{M}^{-1/2} \, \moms) = \moms^T \, \mathbf{M}^{-1} \, \moms$ is distributed according to a standard $\chi^2$-distribution with $d$ degrees of freedom,
\begin{equation}
    (\moms^T \mathbf{M}^{-1} \moms) \sim \chi^2(d).
    \label{eqn:tk-chi2}
\end{equation}
The $\chi^2(d)$ distribution has mean $d$ and variance $2d$ and we can use the tail probabilities to test whether the observed temperature could arise from an accurate discretization of the SDE~(\ref{eqn:langevin1}--\ref{eqn:langevin2}).
For a given confidence level $c \in (0,1)$, e.g. $c=0.99$, we define the confidence interval
\begin{equation}
J_{T_K}(d,c) := \left(\frac{T}{d} F^{-1}_{\chi^2(d)}\left(\frac{1-c}{2}\right),\,
    \frac{T}{d} F^{-1}_{\chi^2(d)}\left(\frac{1+c}{2}\right)\right),
\end{equation}
where $F^{-1}_{\chi^2(d)}$ is the inverse cumulative distribution function of the $\chi^2$ distribution with $d$ degrees of freedom.
By construction if~(\ref{eqn:tk-chi2}) holds, then $\hat{T}_K(\moms) \in J_{T_K}(d,c)$ with probability $c$ exactly.

Therefore, if $c$ is close to one, say $c = 0.99$, and we find that $\hat{T}_K(\moms) \notin J(d,c)$ this indicates issues of discretization error or convergence of the SDE~(\ref{eqn:langevin1}--\ref{eqn:langevin2}).

Because~(\ref{eqn:moment-distribution}) holds for any subvector of $\moms$, we can create one kinetic temperature estimate for each model variable separately, such as one or two scalar temperature estimates for each layer (e.g. one for the weights and one for the bias of a \texttt{Dense} layer).
We found per-layer temperature estimates helpful in diagnosing convergence issues and this directly led to the creation of our layerwise preconditioner.

\subsection{Configurational Temperature Estimation}\label{sec:conf-temp}

The so called \emph{configurational temperature}\footnote{Sometimes other quantities are also refered to as configurational temperature, see~\citep[Section 6.1.5]{leimkuhler2016molecular}.} is defined as
\begin{equation}
\hat{T}_C(\pars, \nabla_{\pars} U(\pars)) = \frac{\langle \pars, \nabla_{\pars} U(\pars) \rangle}{d}.
\label{eqn:temp-conf}
\end{equation}
For a perfect simulation of SDE~(\ref{eqn:langevin1}--\ref{eqn:langevin2}) we have $\mathbb{E}[\hat{T}_C]=T$, where $T$ is the target temperature of the system.
This can be seen by instantiating Proposition~\ref{prop:temperature} for the Langevin Hamiltonian and $B_C(\pars,\moms)=\left[\begin{array}{c}\pars\\ \mathbf{0}\end{array}\right]$.

As for the kinetic temperature diagnostic, we can instantiate Proposition~\ref{prop:temperature} for arbitrary subsets of parameters by a suitable choice of $B_C(\pars,\moms)$.
However, whereas for the kinetic temperature the exact sampling distribution of the estimate is known in the form of a scaled $\chi^2$ distribution, we are not aware of a characterization of the sampling distribution of configurational temperature estimates.
It is likely this sampling distribution depends on $U(\pars)$ and thus does not have a simple form.
Proposition~\ref{prop:temperature} only asserts that under the true target distribution we have
\begin{equation}
    \mathbb{E}_{\pars \sim \exp(-U(\pars)/T)}[\hat{T}_C(\pars,\nabla_\pars U(\pars))] = T.
\end{equation}
Because~(\ref{eqn:temp-conf}) is the empirical average of per parameter random variables, if all these variables have finite variance the central limit theorem asserts that for large $d$ we can expect
\begin{equation}
    \hat{T}_C(\pars,\nabla_\pars U(\pars)) \sim \mathcal{N}(T, \sigma^2_{T_C}),
\end{equation}
with unknown variance $\sigma^2_{T_C}$.

Recent work of~\citet{yaida2018fluctuationdissipationsgd} provides a similar diagnostic, equation (FDR1') in their work, to the configurational temperature~(\ref{eqn:temp-conf}) for the SGD equilibrium distribution under finite time dynamics.  However, our goal here is different: whereas~\citet{yaida2018fluctuationdissipationsgd} is interested in diagnosing convergence to the SGD equilibrium distribution in order to adjust learning rates we instead want to diagnose discrepancy of our current dynamics against the true target distribution.

\section{Simulation Accuracy Ablation Study}\label{sec:simablation}
Equipped with the diagnostics of Section~\ref{sec:diagnostics} we can now study how accurate our algorithms simulate the Langevin dynamics.
We will demonstrate that layerwise preconditioning and cyclical time stepping are individually effective at improving simulation accuracy, however, only by combining these two methods we can achieve high simulation accuracy on the CNN-LSTM model as measured by our diagnostics.

\paragraph{Setup.}
We perform the same ResNet-20 CIFAR-10 and CNN-LSTM IMDB experiments as in the main paper, but consider four variations of our algorithm: with and without preconditioning, and with and without cosine time stepping schedules.
In case no preconditioner is used we simply set $\mathbf{M}=I$ for all iterations.
In case no cosine time stepping is used we simply set $C(t) = 1$ for all iterations.

Independent of whether cosine time stepping is used we divide the iterations into cycles and for each method consider all models at the end of a cycle, where we hope simulation accuracy is the highest.
We then evaluate the temperature diagnostics for all model variables.
For the kinetic temperatures, if simulation is accurate then 99 percent of the variables should on average lie in the 99\% high probability region under the sampling distribution.
For the configurational temperature we can only report the average configurational temperature across all the end-of-cycle models.

\paragraph{Results.}
We report the results in Table~\ref{tab:resnet-simaccuracy-ablation} and Table~\ref{tab:cnnlstm-simaccuracy-ablation} and visualize the kinetic temperatures in Figures~\ref{fig:resnet-simablation-precond-cosine}~to~\ref{fig:resnet-simablation-noprecond-flat} and
Figures~\ref{fig:cnnlstm-simaccuracy-ablation-p-c}~to~\ref{fig:cnnlstm-simaccuracy-np-f}.

The results indicate that both cosine time stepping and layerwise preconditioning have a beneficial effect on simulation accuracy.  For ResNet-20 cyclical time stepping is sufficient for high simulation accuracy, but it is by itself not able to achieve high accuracy on the CNN-LSTM model.  For both models the combination of cyclical time stepping and preconditioning  (Figure~\ref{fig:resnet-simablation-precond-cosine} and Figure~\ref{fig:cnnlstm-simaccuracy-ablation-p-c}) achieves a high simulation accuracy, that is, all kinetic temperatures match the sampling distribution of the Langevin dynamics, indicating---at least with respect to the power of our diagnostics---accurate simulation.

Another interesting observation can be seen in Table~\ref{tab:resnet-simaccuracy-ablation}:
we can achieve a high accuracy of $\geq 88$ percent even in cases where the simulation accuracy is poor.
This indicates that optimization is different from accurate Langevin dynamics simulation.

\begin{table*}[th!]
\begin{center}%
\begin{tabular}{cccccc}\toprule
Precond & Cyclic & $\hat{\mathbb{E}}[\hat{T}_K \in \mathcal{R}_{99}]$ & $\hat{\mathbb{E}}[\hat{T}_C]$ & Accuracy (\%) & Cross-entropy\\ \midrule
\cmark & \cmark & 0.989$\pm$0.0014 & 0.94$\pm$0.011 & 88.2$\pm$0.11 & 0.358$\pm$0.0011\\
\xmark & \cmark & 0.9772$\pm$0.00059 & 1.02$\pm$0.018 & 88.49$\pm$0.014 & 0.3500$\pm$0.00064\\
\cmark & \xmark & 0.905$\pm$0.0019 & 1.23$\pm$0.046 & 88.0$\pm$0.10 & 0.3808$\pm$0.00064\\
\xmark & \xmark & 0.676$\pm$0.0052 & 1.7$\pm$0.18 & 86.86$\pm$0.072 & 0.507$\pm$0.0080\\
\bottomrule
\end{tabular}%
\caption{ResNet-20 CIFAR-10 simulation accuracy ablation at $T=1$:
layerwise preconditioning and cyclical time stepping each have a beneficial effect on improving inference accuracy and the effect is complementary.
$\hat{\mathbb{E}}[\hat{T}_K \in \mathcal{R}_{99}]$ is the empirically estimated probability that the kinetic temperature statistics are in the 99\% confidence interval, the ideal value is 0.99.
$\hat{\mathbb{E}}[\hat{T}_C]$ is the empirical average of the configurational temperature estimates, the ideal value is 1.0.
For both quantities we take the value achieved at the end of each cycle, that is, whenever $C(t)=0$ and average all the resulting values.
The deviation is given in $\pm \textrm{SEM}$ where $\textrm{SEM}$ is the standard error of the mean estimated from three independent experiment replicates.
Both preconditioning and cyclical time stepping are effective at improving the simulation accuracy.}
\label{tab:resnet-simaccuracy-ablation}%
\end{center}%
\vspace{0.3cm}%
\end{table*}

\begin{table*}[th!]
\begin{center}%
\begin{tabular}{cccccc}\toprule
Precond & Cyclic & $\hat{\mathbb{E}}[\hat{T}_K \in \mathcal{R}_{99}]$ & $\hat{\mathbb{E}}[\hat{T}_C]$ & Accuracy (\%) & Cross-entropy\\ \midrule
\cmark & \cmark & 0.954$\pm$0.0053 & 0.99122$\pm$0.000079 & 81.95$\pm$0.22 & 0.425$\pm$0.0032\\
\xmark & \cmark & 0.761$\pm$0.0095 & 1.012$\pm$0.0088 & 51.3$\pm$0.65 & 0.6925$\pm$0.00019\\
\cmark & \xmark & 0.49$\pm$0.012 & 0.9933$\pm$0.00019 & 74.5$\pm$0.49 & 0.579$\pm$0.0048\\
\xmark & \xmark & 0.384$\pm$0.0018 & 1.0141$\pm$0.00066 & 0.49997$\pm$0.000039 & 0.698$\pm$0.0013\\
\bottomrule
\end{tabular}%
\caption{CNN-LSTM IMDB simulation accuracy ablation at $T=1$: with \emph{both} layerwise preconditioning and cyclical time stepping we can achieve high inference accuracy as measured by configurational and kinetic temperature diagnostics.  Just using one (either preconditioning or cyclical time stepping) is insufficient for high inference accuracy.
This is markedly different from the results obtained for ResNet-20 CIFAR-10 (Table~\ref{tab:resnet-simaccuracy-ablation}), indicating that perhaps the ResNet posterior is easier to sample from.}
\label{tab:cnnlstm-simaccuracy-ablation}%
\end{center}%
\vspace{0.3cm}%
\end{table*}

\begin{figure*}[!t]
\center{\includegraphics[width=\textwidth]{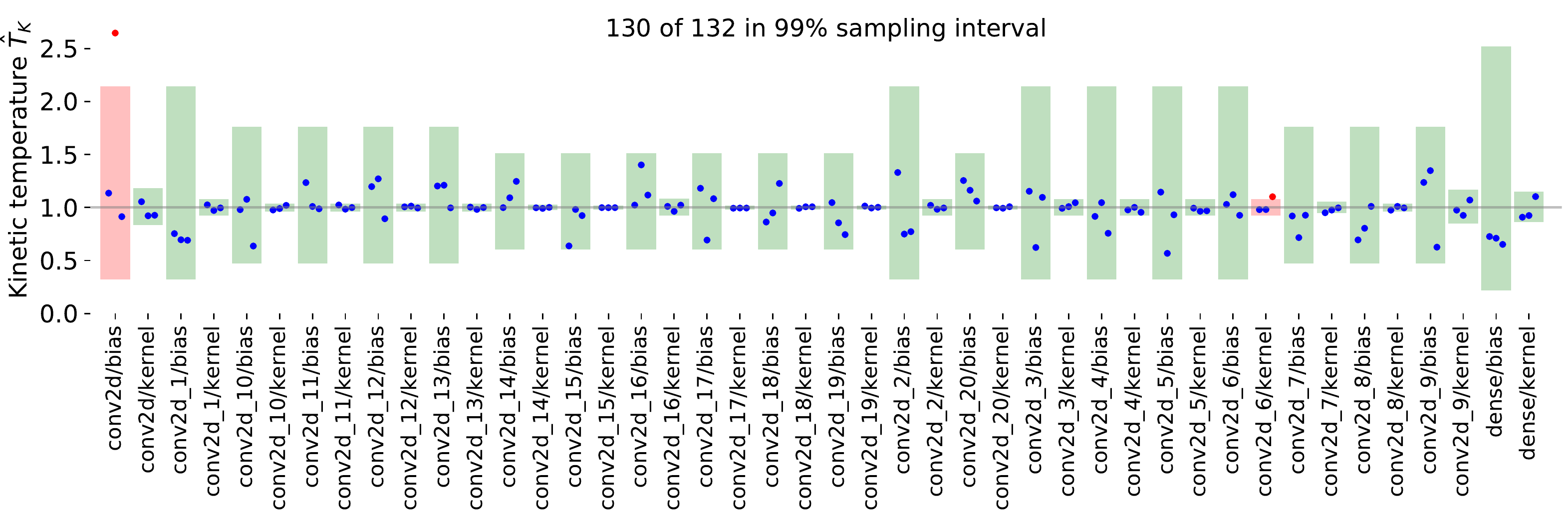}}%
\caption{ResNet-20 CIFAR-10 Langevin per-variable kinetic temperature estimates \textbf{with preconditioning} and \textbf{with cosine time stepping schedule}.
The green bars show the 99\% true sampling distribution of the Kinetic temperature sample.  The blue dots show the actual kinetic temperature samples at the end of sampling.  About 1\% of variables should be outside the green boxes, which matches the empirical count (2 out of 132 samples), indicating an accurate simulation of the Langevin dynamics at the end of each cycle.}%
\label{fig:resnet-simablation-precond-cosine}%
\vspace{0.3cm}%
\end{figure*}

\begin{figure*}[!t]
\center{\includegraphics[width=\textwidth]{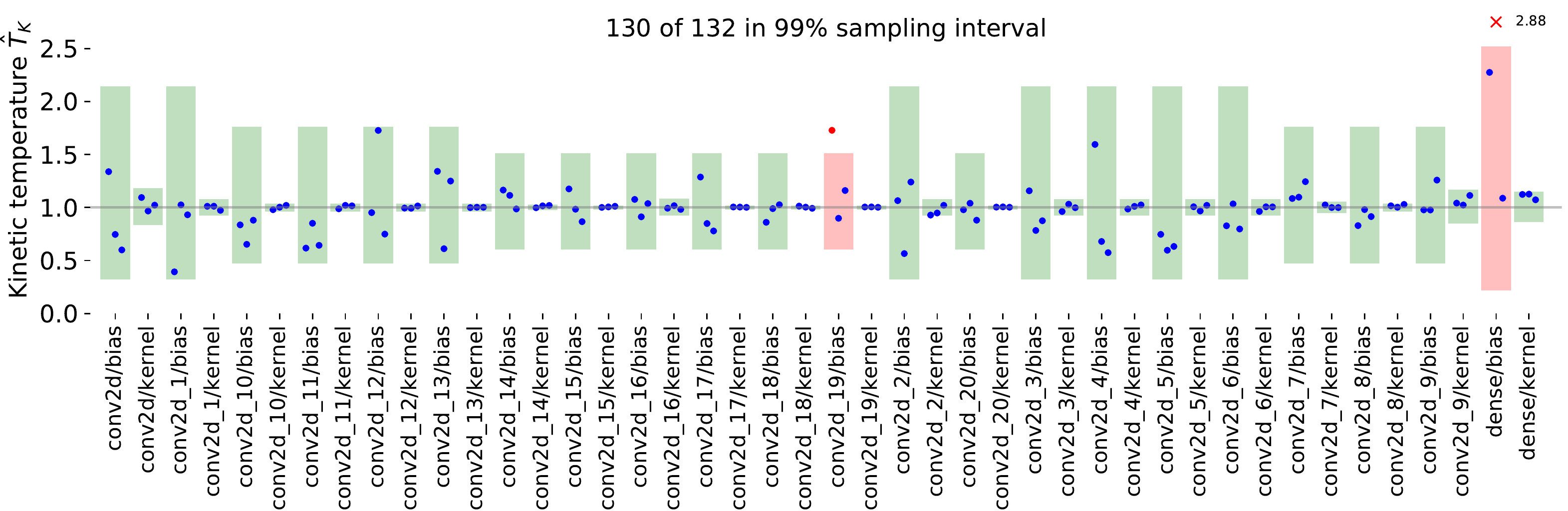}}%
\caption{ResNet-20 CIFAR-10 Langevin per-variable kinetic temperature estimates \textbf{without preconditioning} but \textbf{with cosine time stepping schedule}.  Two out of 132 variables are outside the 99\% hpd region, indicating accurate simulation.}%
\label{fig:resnet-simablation-noprecond-cosine}%
\vspace{0.3cm}%
\end{figure*}

\begin{figure*}[!t]
\center{\includegraphics[width=\textwidth]{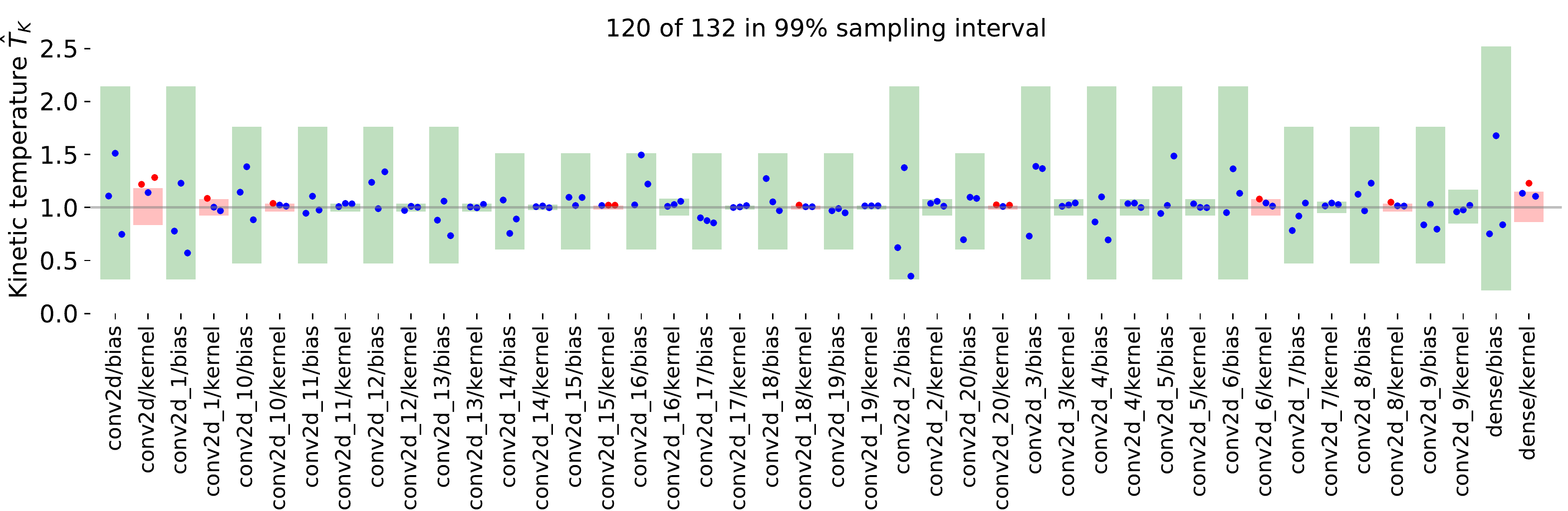}}%
\caption{ResNet-20 CIFAR-10 Langevin per-variable kinetic temperature estimates \textbf{with preconditioning} but \textbf{without cosine time stepping schedule} (flat schedule).  12 out of 132 variables are too hot (boxes in red) and lie outside the acceptable region, indicating an inaccurate simulation of the Langevin dynamics.  However, there is a marked improvement due to preconditioning compared to no preconditioning (Figure~\ref{fig:resnet-simablation-noprecond-flat}).}%
\label{fig:resnet-simablation-precond-flat}%
\vspace{0.3cm}%
\end{figure*}

\begin{figure*}[!t]
\center{\includegraphics[width=\textwidth]{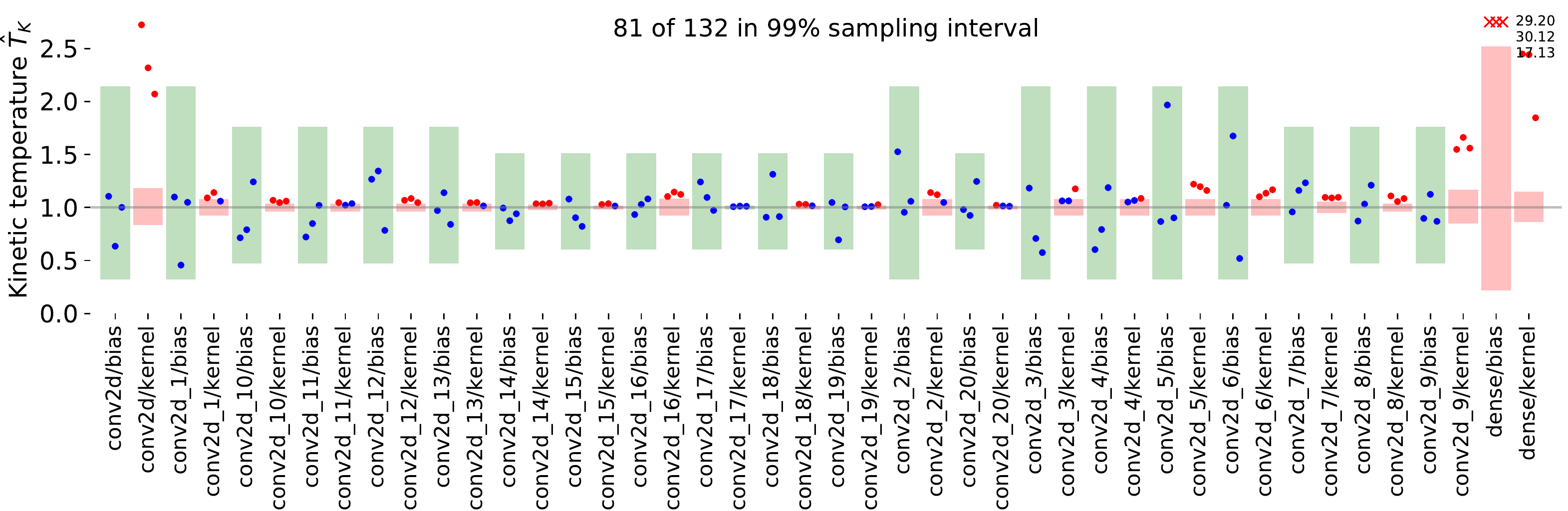}}%
\caption{ResNet-20 CIFAR-10 Langevin per-variable kinetic temperature estimates \textbf{without preconditioning} and \textbf{without cosine time stepping schedule} (flat schedule).
51 out of 132 kinetic temperature samples are too hot (shaded in red) and lie outside the acceptable region, sometimes severely so, indicating a very poor simulation accuracy for the Langevin dynamics.}%
\label{fig:resnet-simablation-noprecond-flat}%
\vspace{0.3cm}%
\end{figure*}

\begin{figure*}%
\hfill%
\begin{subfigure}[t]{0.43\textwidth}%
\includegraphics[width=\textwidth]{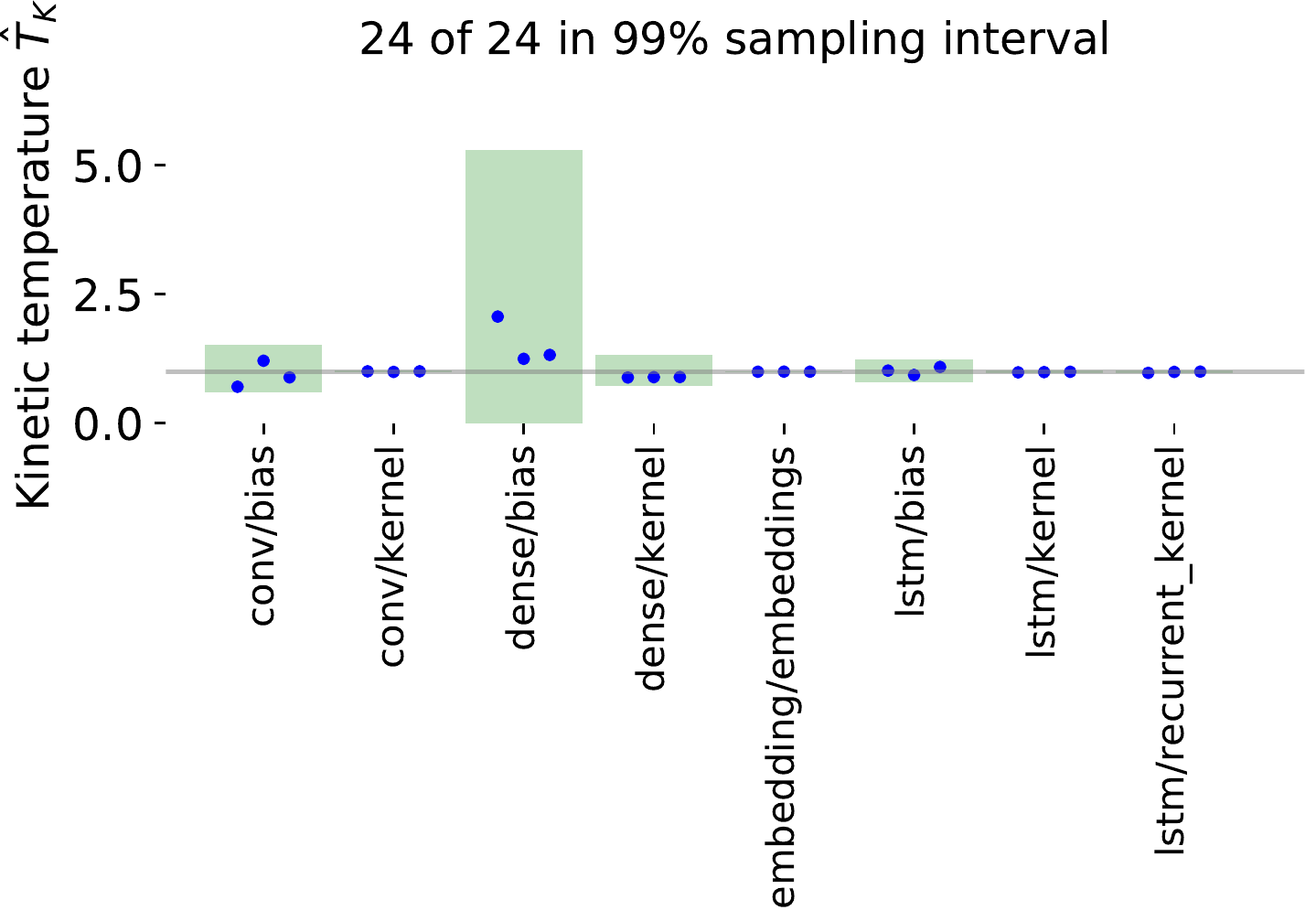}%
\caption{Preconditioning, cosine stepping}%
\label{fig:cnnlstm-simaccuracy-ablation-p-c}%
\end{subfigure}%
\hfill%
\begin{subfigure}[t]{0.43\textwidth}%
\includegraphics[width=\textwidth]{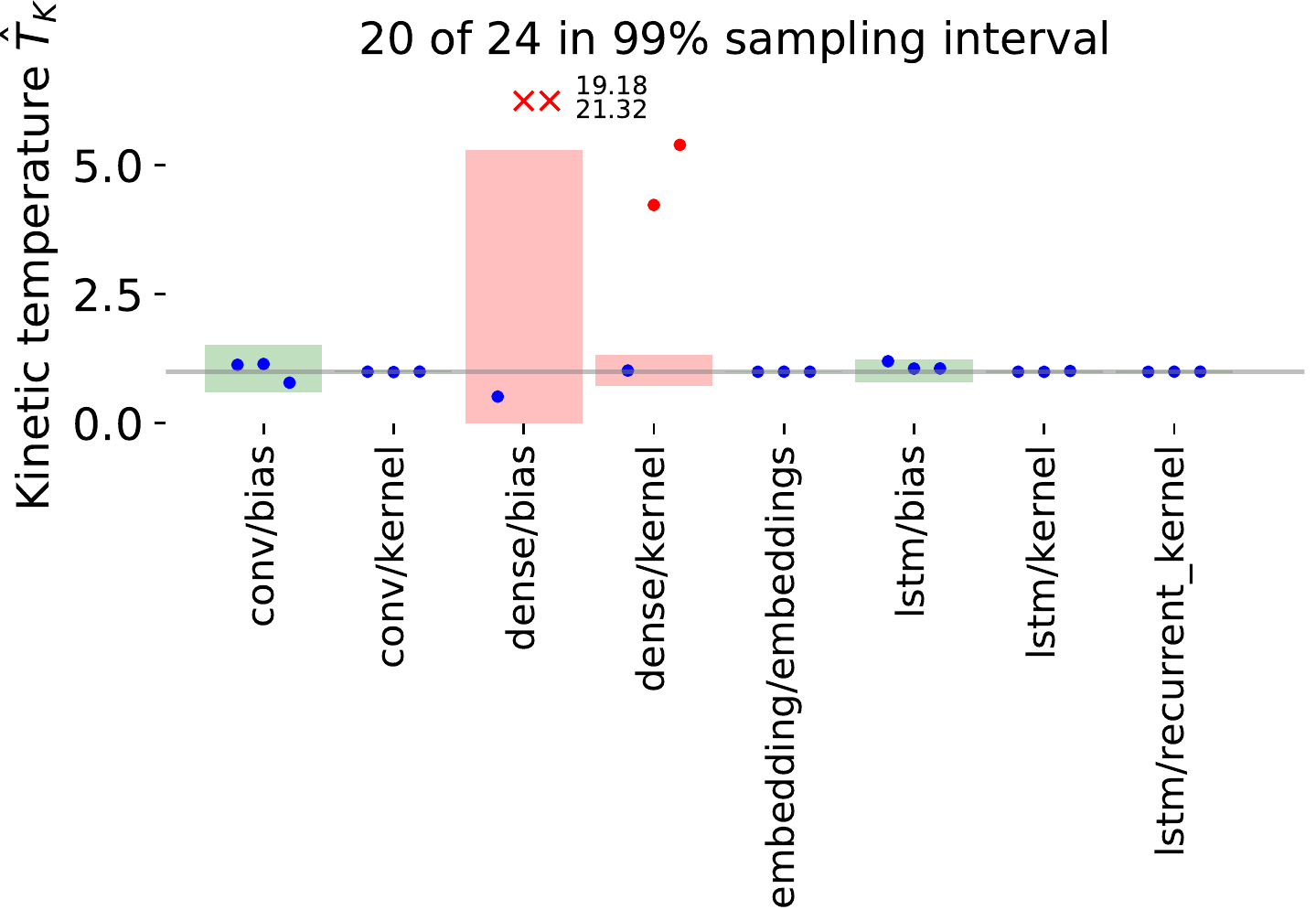}%
\caption{No preconditioning, cosine stepping}%
\label{fig:cnnlstm-simaccuracy-ablation-np-c}%
\end{subfigure}%
\hfill%
\newline%
\vspace{0.75cm}%
\hfill%
\begin{subfigure}[t]{0.43\textwidth}%
\vspace{0.5cm}%
\includegraphics[width=\textwidth]{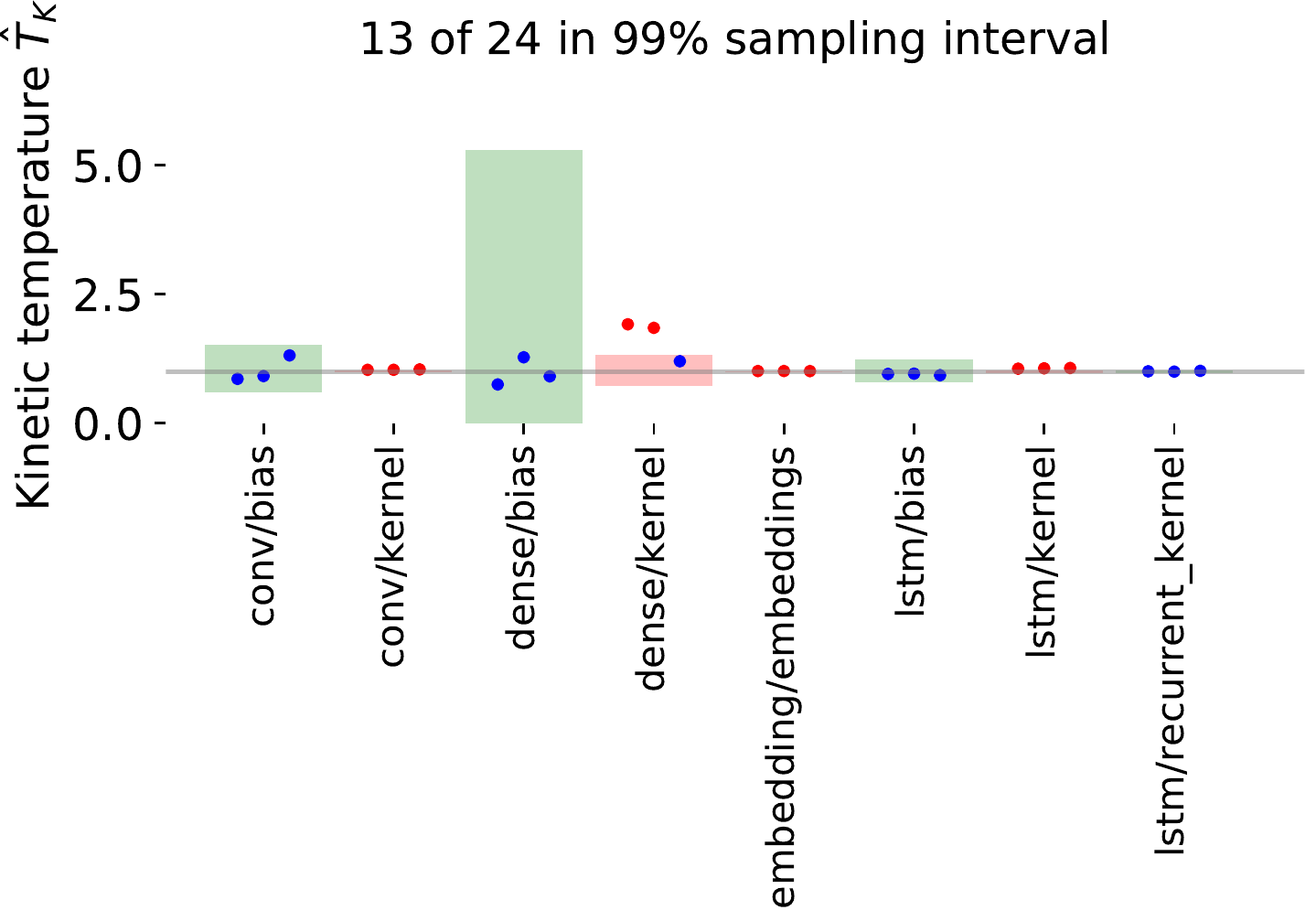}%
\caption{Preconditioning, no cosine stepping}%
\label{fig:cnnlstm-simaccuracy-ablation-p-f}%
\end{subfigure}%
\hfill%
\begin{subfigure}[t]{0.43\textwidth}%
\vspace{0.5cm}%
\includegraphics[width=\textwidth]{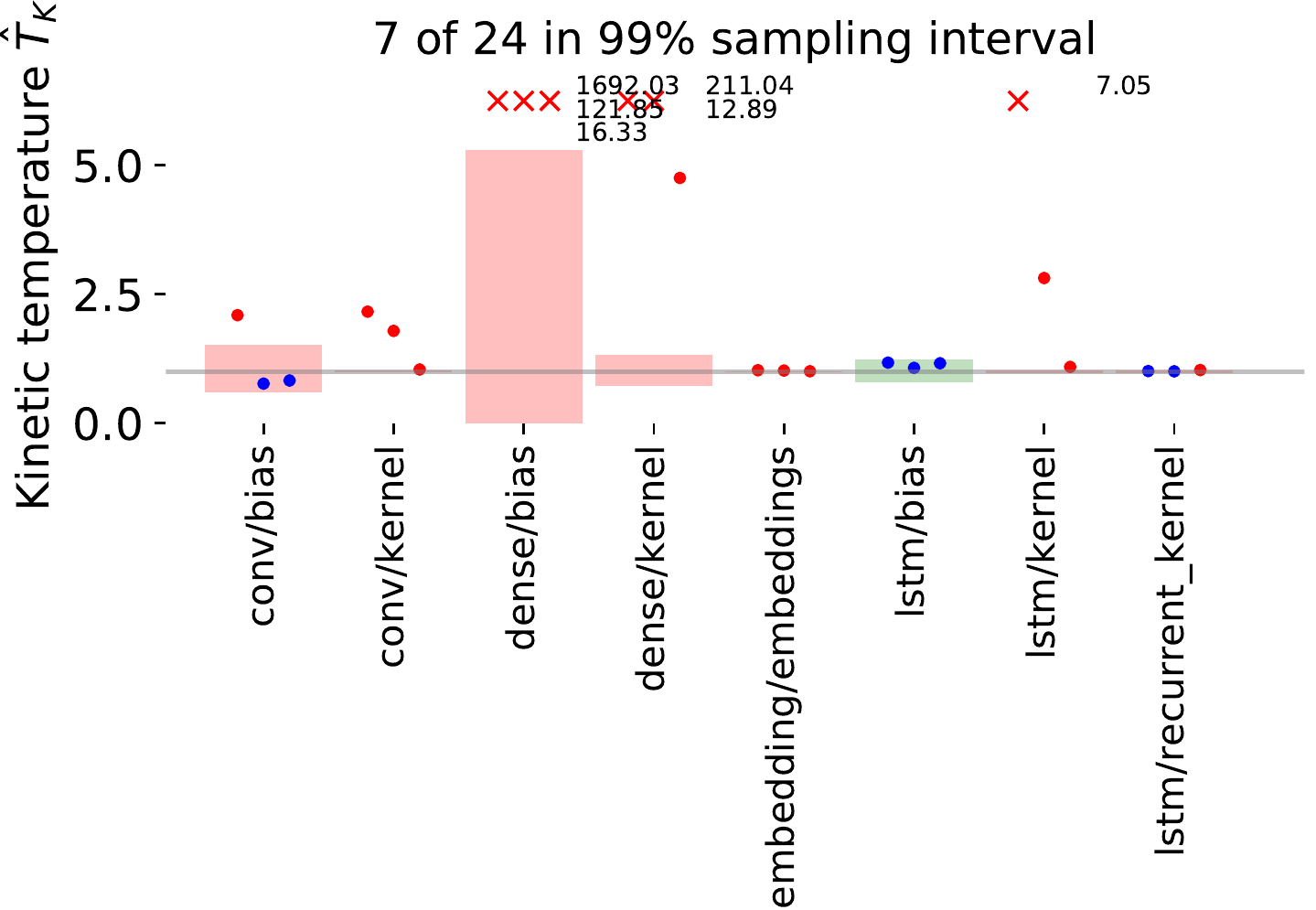}%
\caption{No preconditioning, no cosine stepping}%
\label{fig:cnnlstm-simaccuracy-np-f}%
\end{subfigure}%
\hfill%
\caption{CNN-LSTM IMDB Langevin per-variable kinetic temperature estimates at temperature $T=1$ for four different simulation settings: with and without preconditioning, with and without cosine time stepping.
The only accurate simulation is obtained with \emph{both} preconditioning \emph{and} cosine time stepping.}%
\label{fig:cnnlstm-simaccuracy-ablation}%
\end{figure*}

\section{Dirty Likelihood Functions}\label{sec:dirtylikelihood}

\hypothesis{Dirty Likelihood Hypothesis}{Deep learning practices that violate the likelihood principle (batch normalization, dropout, data augmentation) cause deviation from the Bayes posterior.}
We now discuss how batch normalization, dropout, and data augmentation produce non-trivial modifications to the likelihood function.
We call the resulting likelihood functions ``dirty'' to distinguish them from clean likelihood functions without such modifications.
Our discussion will suggest that these techniques can be seen as a computational efficient ``\emph{Jensen posterior}'' approximation of a proper Bayesian posterior of another model.
Our analysis builds on and generalizes previous Bayesian interpretations, \citep{noh2017regularizing,atanov2018stochasticbatchnormalization,shekhovtsov2018stochasticnormalizations,nalisnick2019dropout,inoue2019multisampledropout}.
In Section~\ref{sec:frn-experiment} we perform an experiment to demonstrate that the dirty likelihood cannot explain cold posteriors.

\begin{wrapfigure}{r}{0.4\columnwidth}
    \centering
\scalebox{0.9}{%
\begin{tikzpicture}[x=0.7cm,y=0.7cm]
\node[obs]                       (y)     {$y_i$} ; %
\node[latent, left=0.7cm of y]   (theta) {$\pars$} ; %
\node[latent, above=of y]        (xa)    {$x'_i$} ; %
\node[latent, right=0.7cm of xa] (z)     {$z_i$} ; %
\node[obs, above=of xa]          (x)     {$x_i$} ; %

\edge {x,z} {xa} ; %
\edge {xa,theta} {y} ; %

\plate {} {(x)(z)(xa)(y)} {$i=1,\dots,n$} ;
\end{tikzpicture}%
}%
\vspace{-0.25cm}%
\caption{Augmented model with added latent variable $z_i$.}%
\label{fig:dnn-latent}%
\end{wrapfigure}

\subsection{Augmented Latent Model}
To accommodate popular deep learning methods we first augment the probabilistic model $p(y|x,\pars)$ itself by adding a \emph{latent variable} $z$.
The augmented model is $p(y|x,z,\pars)$ and we can obtain the \emph{effective model} $p(y|x,\pars) = \int p(y|x,z,\pars) \, p(z) dz$.
For a dataset $\mathcal{D}=\{(x_i,y_i)\}_{i=1,\dots,n}$, where we denote $X=(x_1,\dots,x_n)$ and $Y=(y_1,\dots,y_n)$, the resulting model has as likelihood function in $\pars$  that is the \emph{marginal likelihood}, obtained by integrating over all $z_i$ variables,
\begin{eqnarray}
p(Y \,|\, X, \pars) & = & \prod_{i=1}^n \, p(y_i \,|\, x_i,\pars)\\
& = & \prod_{i=1}^n \mathbb{E}_{z_i \sim p(z_i)}[p(y_i \,|\, x_i, z_i, \pars)].\label{eqn:augmented-marginal-likelihood}
\end{eqnarray}
Note that in~(\ref{eqn:augmented-marginal-likelihood}) the latent variable $z_i$ is integrated out and therefore the marginal likelihood is a deterministic function.

\subsection{Log-likelihood Bound and Jensen Posterior}\label{sec:jensen}
Given a prior $p(\pars)$ the log-posterior for the augmented model in Figure~\ref{fig:dnn-latent} takes the form
\begin{align}
& \log p(\pars \,|\, \mathcal{D})\\
&=  C + \log p(\pars) + \sum_{i=1}^n \log \mathbb{E}_{z_i \sim p(z_i)}[p(y_i \,|\, x_i, z_i, \pars)],\label{eqn:posterior}
\end{align}
where we can now apply \emph{Jensen's inequality}, $f(\mathbb{E}[x]) \geq \mathbb{E}[f(x)]$ for concave $f = \log$,
\begin{align}
& \geq C + \log p(\pars) + \sum_{i=1}^n \mathbb{E}_{z_i \sim p(z_i)}[\log p(y_i \,|\, x_i, z_i, \pars)],\label{eqn:jensen}
\end{align}
where $C=-\log p(Y|X)$ is the negative model evidence and is constant in $\pars$.
We call equation~(\ref{eqn:jensen}) the \emph{Jensen bound} to the log-posterior $\log p(\pars|\mathcal{D})$.

\paragraph{Jensen Posterior.}
Because we can estimate~(\ref{eqn:jensen}) in an unbiased manner, we will see that many popular methods such as dropout and data augmentation can be cast as special cases of the Jensen bound.
We also define the \emph{Jensen posterior} as the posterior distribution associated with~(\ref{eqn:jensen}).
Formally, the Jensen posterior is
\begin{align}
& p_J(\pars \,|\, \mathcal{D}) :\propto\\
& \qquad p(\pars) \, \prod_{i=1}^n \exp\left(\mathbb{E}_{z_i \sim p(z_i)}\left[\log p(y_i \,|\, x_i, z_i, \pars)\right]\right).\label{eqn:jensenposterior}
\end{align}
Given this object, can we relate its properties to the properties of the full posterior, and can the Jensen posterior serve as a meaningful surrogate to the true posterior?
We first observe that $p_J(\pars \,|\,\mathcal{D})$ indeed defines a probability distribution over parameters:
with a proper prior $p(\pars)$, we have $p(\pars\,|\,\mathcal{D}) \geq p_J(\pars\,|\,\mathcal{D})$
by (\ref{eqn:posterior}--\ref{eqn:jensen}), thus
$\int p_J(\pars \,|\,\mathcal{D}) \,\textrm{d}\pars \leq \int p(\pars\,|\,\mathcal{D}) \,\textrm{d}\pars < \infty$.

\paragraph{Jensen Prior.}
We now show that the Jensen posterior can be interpreted as a full Bayesian posterior in a different model.
In particular, we give a construction which retains the likelihood of the original model but modifies the prior.
In the function that re-weights the prior the data set appears; this is not to be understood as a prior which depends on the observed data.
Instead, we can think of this as an existence proof, that is, if we were to have chosen this modified prior then the resulting Jensen posterior under the modified Jensen prior corresponds to the full Bayesian posterior under the original prior.

In a sense the result is vacuous because any desirable posterior can be obtained by such re-weighting.
However, the proof illustrates the structure of how the Jensen posterior deviates from the true posterior through a set of weighting functions; each weighting function measures a local \emph{Jensen gap} related to each instance.
Although we did not pursue this line, the local Jensen gap~(\ref{eqn:wi}) can be numerically estimated and may prove to be a useful quantity in itself.

\begin{proposition}[Jensen Prior]\label{prop:jensen-prior}
For a proper prior $p(\pars)$ and a fixed dataset $\mathcal{D}$, we can define a prior $p_J(\pars)$ such that when using this modified prior in the Jensen posterior we have
\begin{equation}
p_J(\pars \,|\, \mathcal{D}) = p(\pars \,|\, \mathcal{D}).\label{eqn:jensen-valid}
\end{equation}
In particular, this implies that any Jensen posterior can be interpreted as the posterior distribution of the same model under a different prior.
\end{proposition}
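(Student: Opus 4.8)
The plan is to exhibit $p_J$ explicitly as the original prior reweighted by the per-datapoint Jensen gaps, and then to prove (\ref{eqn:jensen-valid}) by a one-line cancellation in the definition (\ref{eqn:jensenposterior}) of the Jensen posterior. Concretely, for each $i=1,\dots,n$ I would introduce the local reweighting factor
\[
w_i(\pars) \;:=\; \frac{\mathbb{E}_{z_i\sim p(z_i)}\!\big[p(y_i\mid x_i,z_i,\pars)\big]}{\exp\!\big(\mathbb{E}_{z_i\sim p(z_i)}[\log p(y_i\mid x_i,z_i,\pars)]\big)},
\]
which is exactly the exponential of the $i$-th local Jensen gap; by the same Jensen step used to pass from (\ref{eqn:posterior}) to (\ref{eqn:jensen}) we have $w_i(\pars)\ge 1$ for every $\pars$, with equality precisely when that gap vanishes. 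I would then set
\[
p_J(\pars) \;:=\; \frac{1}{Z}\, p(\pars)\prod_{i=1}^n w_i(\pars), \qquad Z \;:=\; \int p(\pars)\prod_{i=1}^n w_i(\pars)\,\mathrm{d}\pars .
\]

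The verification is then short. Substituting $p_J$ for the prior in (\ref{eqn:jensenposterior}), the factors $\exp(\mathbb{E}_{z_i}[\log p(y_i\mid x_i,z_i,\pars)])$ coming from the Jensen likelihood cancel against the denominators of the $w_i$, leaving $p_J(\pars\mid\mathcal{D})\propto p(\pars)\prod_{i=1}^n \mathbb{E}_{z_i\sim p(z_i)}[p(y_i\mid x_i,z_i,\pars)]$, which by (\ref{eqn:posterior}) is proportional to $p(\pars\mid\mathcal{D})$; since both sides are normalized densities the constant of proportionality is $1$, giving (\ref{eqn:jensen-valid}). The ``in particular'' claim then follows by running the identical cancellation with an arbitrary prior $q$ in place of $p$: the Jensen posterior built from $q$ equals the ordinary posterior of the effective model (\ref{eqn:augmented-marginal-likelihood}) under the prior proportional to $q/\prod_i w_i$, and conversely every ordinary posterior of that model arises in this way, so the two families of posteriors differ only by reweighting the prior with $\prod_i w_i$.

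The step I expect to need the most care is checking that $p_J$ is a bona fide (proper) prior, i.e.\ that $Z<\infty$. Nonnegativity is immediate from $p(\pars)\ge 0$ and $w_i\ge 1$, but the $w_i(\pars)$ can grow without bound as $\|\pars\|\to\infty$ (the Jensen gap can be arbitrarily large in the tails, e.g.\ for deep networks whose log-likelihoods decay polynomially fast in $\pars$ along bad directions), so $Z=\infty$ in general. I would handle this in one of two ways. Either impose a mild regularity assumption on the augmented likelihood under which $p(\pars)\prod_i w_i(\pars)$ is integrable and hence $p_J$ is proper; or simply observe that properness of $p_J$ is not actually required for the statement, since the Jensen posterior it induces is automatically well-defined: $\int p_J(\pars)\prod_i\exp(\mathbb{E}_{z_i}[\log p(y_i\mid x_i,z_i,\pars)])\,\mathrm{d}\pars \propto \int p(\pars)\prod_i\mathbb{E}_{z_i}[p(y_i\mid x_i,z_i,\pars)]\,\mathrm{d}\pars \le \int p(\pars)\,\mathrm{d}\pars = 1<\infty$, i.e.\ it equals the finite marginal likelihood of the original model. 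In the ``in particular'' direction this subtlety disappears, because the prior there is proportional to $q/\prod_i w_i \le q$, which is integrable whenever $q$ is.
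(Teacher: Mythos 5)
Your construction is the same as the paper's: the identical reweighting factors $w_i(\pars)$ (the exponentiated local Jensen gaps), the prior $p_J \propto p\prod_i w_i$, and the same one-line cancellation against the $\exp(\mathbb{E}_{z_i}[\log p(y_i|x_i,z_i,\pars)])$ terms in the Jensen posterior. The one place you diverge is the place where the paper's own proof is actually in error: the paper asserts that ``due to Jensen's inequality $w_i(\pars)\leq 1$'' and concludes normalizability of $p_J$ from $w(\pars)\leq 1$, but Jensen's inequality gives $\mathbb{E}[X]\geq \exp(\mathbb{E}[\log X])$, so in fact $w_i(\pars)\geq 1$ as you state, and the paper's normalizability argument does not go through. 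Your identification of $Z<\infty$ as the genuinely delicate step is therefore correct, and both of your remedies are reasonable: either impose a regularity/integrability assumption on $p(\pars)\prod_i w_i(\pars)$, or note that the induced Jensen posterior is well-defined in any case because its normalizer equals the original (finite) marginal likelihood --- though in the latter case $p_J$ may be improper, which slightly weakens the statement ``we can define a prior.'' Your observation that the \emph{converse} direction (interpreting a given Jensen posterior as an ordinary Bayes posterior under the prior proportional to $q/\prod_i w_i \leq q$, which is automatically integrable) is the direction that holds without any extra hypothesis is also correct, and that is in fact the direction needed for the ``in particular'' clause of the proposition.
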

\begin{proof}
We have the true posterior
\begin{equation}
p(\pars \,|\, \mathcal{D})
= p(\pars) \, \prod_{i=1}^n \int p(y_i \,|\, x_i, z_i, \pars) \, p(z_i) \,\textrm{d}z_i,\label{eqn:posterior2}
\end{equation}
and the Jensen posterior as
\begin{equation}
p_J(\pars \,|\, \mathcal{D})
:= p(\pars) \, \prod_{i=1}^n \exp\left(\mathbb{E}_{z_i \sim p(z_i)}\left[\log p(y_i \,|\, x_i, z_i, \pars)\right]\right),\label{eqn:jensenposterior2}
\end{equation}
respectively.  If we define the \emph{Jensen prior},
\begin{equation}
p_J(\pars) :\propto w(\pars) \, p(\pars),
\end{equation}
where we set the weighting function $w(\pars) := \prod_{i=1}^n w_i(\pars)$, with the individual weighting functions defined as
\begin{equation}
w_i(\pars) := \frac{\int p(y_i\,|\,x_i,z_i,\pars) \, p(z_i) \,\textrm{d}z_i}{
\exp \left(\mathbb{E}_{z_i\sim p(z_i)}[\log p(y_i \,|\, x_i, z_i, \pars)]\right)}.\label{eqn:wi}
\end{equation}
Due to Jensen's inequality we have $w_i(\pars) \leq 1$ and hence $w(\pars) \leq 1$ and thus $p_J(\pars)$ is normalizable.
Using $p_J(\pars)$ as prior in~(\ref{eqn:jensenposterior2}) we obtain
\begin{align}
& p_J(\pars \,|\, \mathcal{D})\\
& \propto p_J(\pars) \, \prod_{i=1}^n \exp\left(\mathbb{E}_{z_i \sim p(z_i)}\left[\log p(y_i \,|\, x_i, z_i, \pars)\right]\right),\\
& = p(\pars) \left(\prod_{i=1}^n w_i(\pars)\right)\\
& \qquad\qquad \prod_{i=1}^n \exp\left(\mathbb{E}_{z_i \sim p(z_i)}\left[\log p(y_i \,|\, x_i, z_i, \pars)\right]\right),\\
& = p(\pars) \prod_{i=1}^n \int p(y_i \,|\, x_i, z_i, \pars) \, p(z_i) \,\textrm{d}z_i\\
& \propto p(\pars \,|\, \mathcal{D}).
\end{align}
This constructively demonstrates the result~(\ref{eqn:jensen-valid}).
\end{proof}

We now interpret current deep learning methods as optimizing the Jensen posterior.

\subsection{Deep Learning Techniques Optimize Jensen Posteriors}
\paragraph{Dropout.}
In \emph{dropout} we sample random binary masks $z_i \sim p(z_i)$ and multiply network activations with such masks~\citep{srivastava2014dropout}.
Specializing the above latent variable model to dropout gives an interpretation of doing
maximum aposteriori (MAP) estimation on the Jensen posterior $p_J(\pars\,|\,X,Y)$.

The connection between dropout and applying Jensen's bound has been discovered before by several groups~\cite{noh2017regularizing}, \cite{nalisnick2019dropout},
\cite{inoue2019multisampledropout}, and contrasts sharply with the variational inference interpretation of dropout,~\citep{kingma2015variationaldropout,gal2016dropout}.
Recent variants of dropout such as \emph{noise-in}~\citep{dieng2018noisin} can also be interpreted in the same way.

The Jensen prior interpretation justifies the use of standard dropout in Bayesian neural networks:
the inferred posterior is the Jensen posterior which is also a Bayesian posterior under the Jensen prior.

\paragraph{Data Augmentation.}
Data augmentation is a simple and intuitive way to insert high-level prior knowledge into neural networks:
by targeted augmentation of the available training data we can encode invariances with respect to natural transformation or noise, leading to better generalization,~\cite{perez2017dataaugmentation}.

Data augmentation is also an instance of the above latent variable model, where $z_i$ now corresponds to randomly sampled parameters of an augmentation, for example, whether to flip an image along the vertical axis or not.

Interestingly, the above model suggests that to obtain better predictive performance at test time, the posterior predictive should be obtained by averaging the individual posterior predictive distributions over multiple latent variable realizations.
Indeed this is what early work on convolutional networks did,~\cite{he2015prelu,he2016resnet}, improving predictive performance significantly.

The Jensen prior interpretation again justifies the use of approximate Bayesian inference techniques targeting the Jensen posterior.
In particular, our theory suggests that the dataset size $n$ should \emph{not} be adjusted to account for augmentation.

\paragraph{Batch Normalization.}
As a practical technique batch normalization~\citep{ioffe2015batchnormalization} accelerates and stabilizes learning in deep neural networks.
The model of Figure~\ref{fig:dnn-latent} cannot directly serve to interpret batch normalization due to the dependence of batch normalization statistics on the batch.
We therefore need to extend the model to incorporate a random choice of batches yielding continuous random batch normalization statistics as proposed earlier~\citep{atanov2018stochasticbatchnormalization,shekhovtsov2018stochasticnormalizations}.

\begin{wrapfigure}{r}{0.4\columnwidth}
    \centering
\scalebox{0.9}{%
\begin{tikzpicture}[x=0.7cm,y=0.7cm]
\node[obs]                       (y)     {$y_i$} ; %
\node[latent, left=0.7cm of xa]   (theta) {$\pars$} ; %
\node[latent, above=of y]        (xa)    {$x'_i$} ; %
\node[latent, right=0.7cm of xa] (z)     {$z_i$} ; %
\node[obs, above=of xa]          (x)     {$x_i$} ; %

\edge {x,z} {xa} ; %
\edge {xa,theta} {y} ; %
\edge {x} {theta} ; %
\edge {theta} {xa} ; %

\plate {} {(x)(z)(xa)(y)} {$i=1,\dots,n$} ;
\end{tikzpicture}%
}%
\vspace{-0.25cm}%
\caption{Augmented model for batch normalization.}%
\label{fig:dnn-latent-batchnorm}%
\end{wrapfigure}
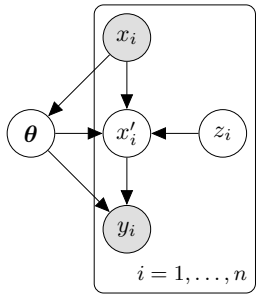

Formally such variation of batch normalization corresponds to the model shown in Figure~\ref{fig:dnn-latent-batchnorm}, where
$(x_i)_i \to \pars$ signifies the additional randomness in $p(\pars|X)$ due to random batches, and
$(\pars,x_i,z_i) \to x'_i$ are the resulting random outputs of the network, where $z_i$ is a per-instance randomness source~\citep{atanov2018stochasticbatchnormalization}.

With the above modifications all derivations in Section~\ref{sec:jensen} hold and batch normalization has a Jensen posterior.
In particular, the Jensen interpretation also suggests to perform batch normalization at test-time, averaging over multiple different batches composed of training set samples.

\subsection{Dirty Likelihood Experiment}\label{sec:frn-experiment}

The dirty likelihood hypothesis is plausible for the ResNet-20 experiments which use data augmentation and batch normalization, however, our CNN-LSTM model does have a clean likelihood function already.

To gain further confidence that this hypothesis cannot explain cold posterior we train a ResNet-20 without batch normalization or data augmentation.

\experiment{Clean Likelihood ResNet Experiment}:
we disable data augmentation and replace batch normalization with filter response normalization,~\citep{singh2019filterresponsenormalization}.
Without data augmentation and without batch normalization we now have a clean likelihood function and SG-MCMC targets a true underlying Bayes posterior.

Figure~\ref{fig:resnet-frn} on page~\pageref{fig:resnet-frn} shows the predictive test performance as a function of temperature.
We clearly see that for small temperatures $T \ll 1$ the removal of data augmentation and batch normalization leads to a higher standard error over the three runs, so that indeed data augmentation and batch normalization had a stabilizing effect on training and mitigated overfitting.
However, for test accuracy the best performance by the SG-MCMC ensemble model is still achieved for $T < 1$.
In particular, for test accuracy the best accuracy of $87.8\pm0.16\%$ is achieved at $T=0.0193$, comparing to a worse predictive accuracy of $87.1\pm0.13\%$ at temperature $T=1$.
For test cross entropy the performance achieved at $T=0.0193$ with $0.393\pm0.015$ is comparable to
$0.3918\pm0.0021$ achieved at $T=1$.

\begin{figure}[!t]
\center{\includegraphics[width=\columnwidth]{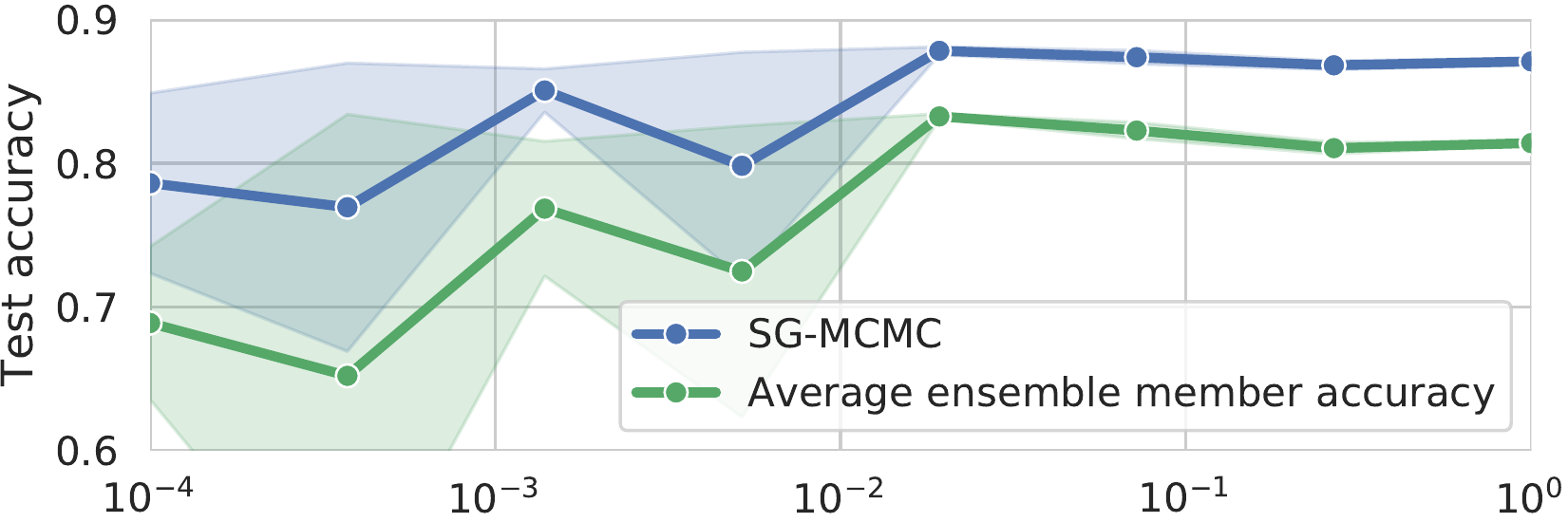}}%
\center{\includegraphics[width=\columnwidth]{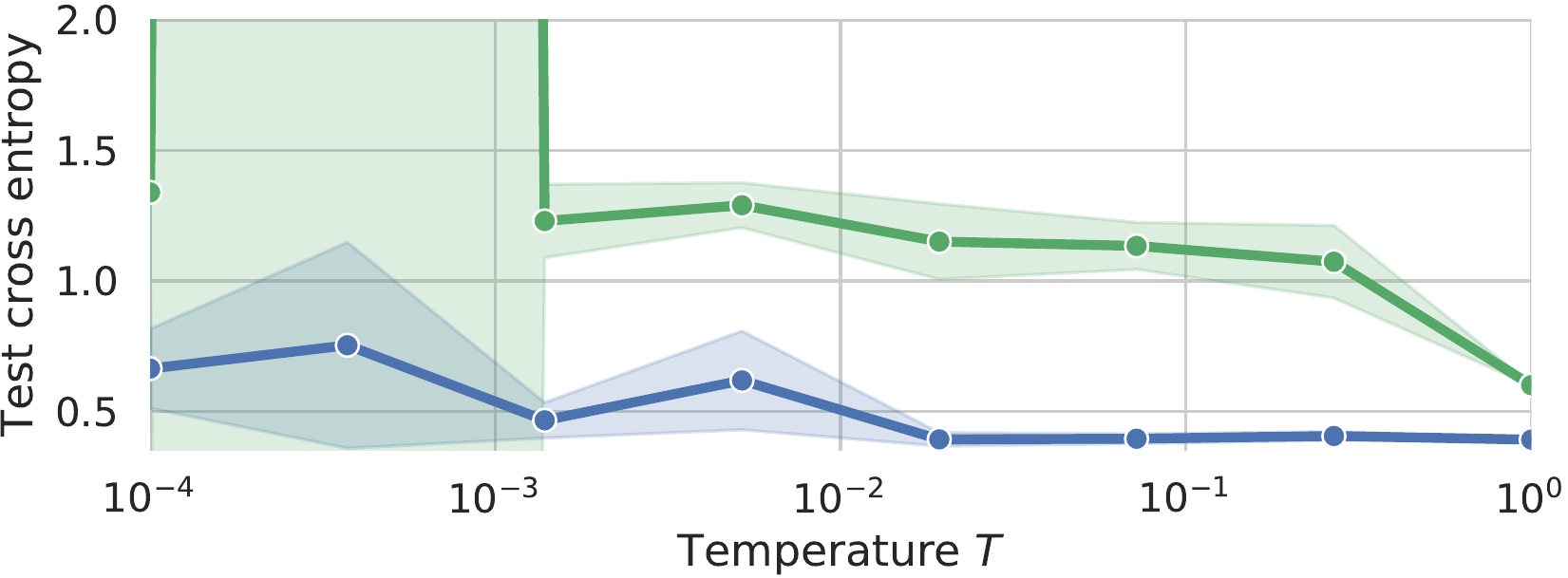}}%
\vspace{-0.3cm}%
\caption{ResNet-20 with filter response normalization (FRN) instead of batch normalization and without any use of data augmentation.
}%
\label{fig:resnet-frn}%
\end{figure}

The clean likelihood ResNet experiment is slightly inconclusive as there is now a less marked improvement when going to lower temperatures.
However, our CNN-LSTM IMDB model already had a clean likelihood function.
Therefore, while dirty likelihoods may play a role in shaping the posterior that SG-MCMC methods simulate from they likely do not account for the cold posterior effect.

\section{Prior Predictive Analysis for Different Prior Scales}\label{sec:prior-predictive-gpgaussian}

Our experiments in the main paper (Section~\ref{sec:prior}) clearly demonstrate that the prior $p(\pars)=\mathcal{N}(0,I)$ is bad in that it places prior mass on the same highly concentrated class probabilities for all training instances.

What other priors could we use?
The literature contains significant prior work on this question.
\citet{neal1995bayesianneuralnetworks} examined priors for shallow neural networks and identified scaling laws and correspondence to Gaussian process kernels.
Recently a number of works added to Neal's analysis by extending the results to
deep and wide neural networks~\citep{lee2018dnngp,matthews2018gaussiannetworks,yang2019wideneuralnetworksgp},
convolutional networks~\citep{garrigaalonso2019convnetgp}, and
Bayesian neural networks~\citep{novak2019bayesiandeepcnngp}.

A related line of work explores random functions defined by the initialization process of a deep neural network.
\citet{glorot2010init} and~\citet{he2015prelu} developed efficient random initialization schemes for deep neural networks and a more formal analysis of information flow in random functions defined by neural networks is given by~\citet{schoenholz2017deepinformationpropagation} and~\citet{hayou2018initializationactivation}.
All these works derive variance-scaling laws for independent Gaussian priors.
The precise scaling law depends on the network layer and the activation function being used.
For the same architecture and activation the scaling laws generally agree with those obtained from the Gaussian process perspective. Figure~\ref{fig:priorvariance} shows that the cold posterior effect is present regardless of the scaling of the variance of the Normal prior. In the following we investigate certain scaling laws of the prior more detailed.

\subsection{He-Scaled Normal Prior, $\mathcal{N}(0,I)$ for Biases}
To remain as close as possible to our existing setup we investigate a He-scaling prior,
equation~(14) in~\citep{he2015prelu}.

\begin{equation}
p(\pars_j) = \mathcal{N}\left(0, \frac{2}{b_j}\right),
\label{eqn:he-prior}
\end{equation}
where $b_j$ is the \emph{fan-in} of the $j$'th
layer.\footnote{For a \texttt{Dense} layer the fan-in is the number of input dimensions, for a \texttt{Conv2D} layer with a kernel of size $k$-by-$k$ and $d$ input channels the fan-in is $b_j = k^2 d$.}

The scaling law derived by~\citet{he2015prelu} does not cover the bias terms in a model.
This is due to the work considering only initialization---\citep{he2015prelu} initialized all biases to zero---whereas we would like to have proper priors for all model variables.
We therefore choose the original $\mathcal{N}(0,I)$ prior for all bias variables in our model.

\experiment{He-scaled Prior Predictive Experiment:}
For our ResNet-20 setup on CIFAR-10 we use our He-scaled-Normal prior to once again carry out the prior predictive experiment that was originally done in
Section~\ref{sec:prior}, Figures~\ref{fig:resnet-priorsamples} and~\ref{fig:resnet-priorpredictive} of the main paper.
Figure~\ref{fig:resnet-priorstudy-kaiming} show the prior predictive results for the new prior.
The basic conclusion remains unchanged: despite scaling the convolution weights and dense layer weights by the He-scaling law in the prior the prior predictive distributions remain highly concentrated around the same distribution for all training instances.

Why do functions under this prior remain concentrated?
Perhaps it is due to the loose $\mathcal{N}(0,I)$ prior for the bias terms such that any concentration in early layers is amplified in later layers?  We investigate this further in Section~\ref{sec:he-smallbias}.

\begin{figure*}[t]%
\begin{subfigure}[t]{0.65\textwidth}%
\includegraphics[width=0.49\textwidth]{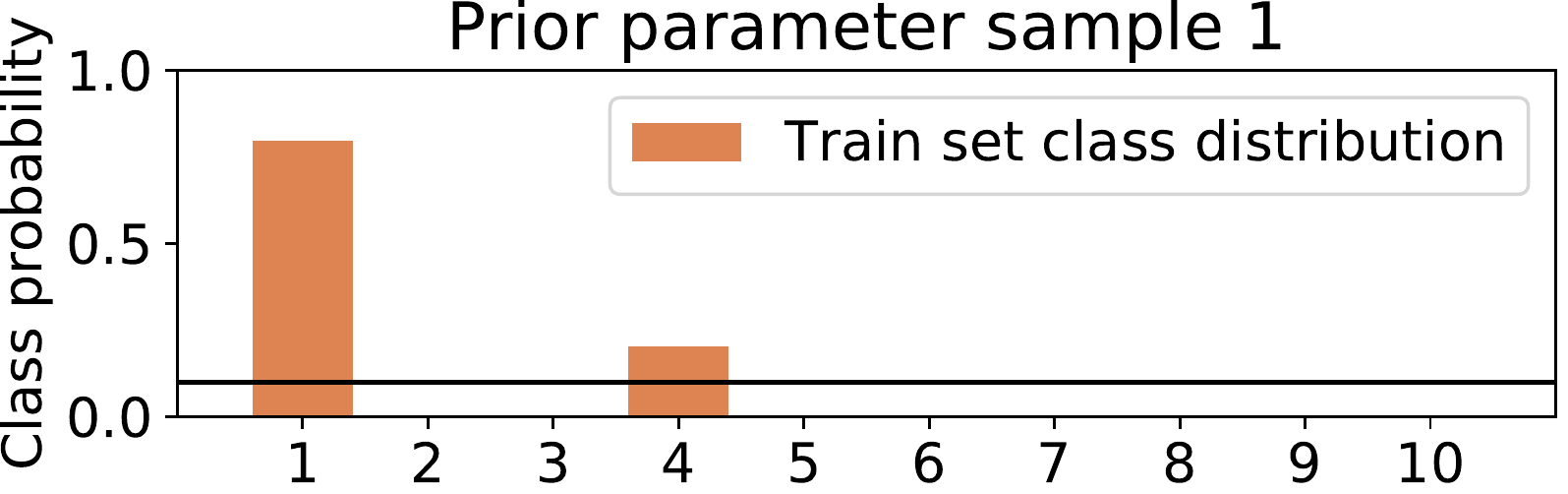}%
\hfill%
\includegraphics[width=0.49\textwidth]{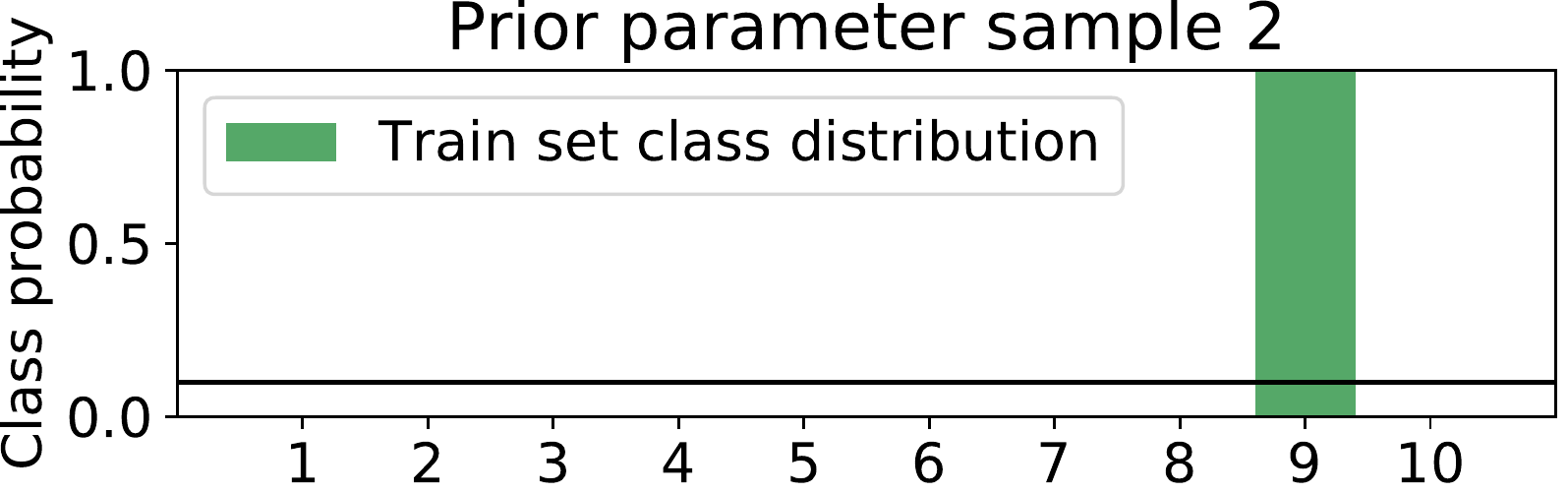}%
\caption{Typical predictive distributions for 10 classes under the prior, averaged over the entire training set, $\mathbb{E}_{x \sim p(x)}[p(y|x,\pars^{(i)})]$.  Each plot is for one sample $\pars^{(i)} \sim p(\pars)$.
Given a sample $\pars^{(i)}$ the average training data class distribution is still highly concentrated around the same classes for all $x$.}%
\label{fig:resnet-priorsamples-kaiming}%
\end{subfigure}%
\hfill%
\begin{subfigure}[t]{0.32\textwidth}%
\includegraphics[width=\textwidth]{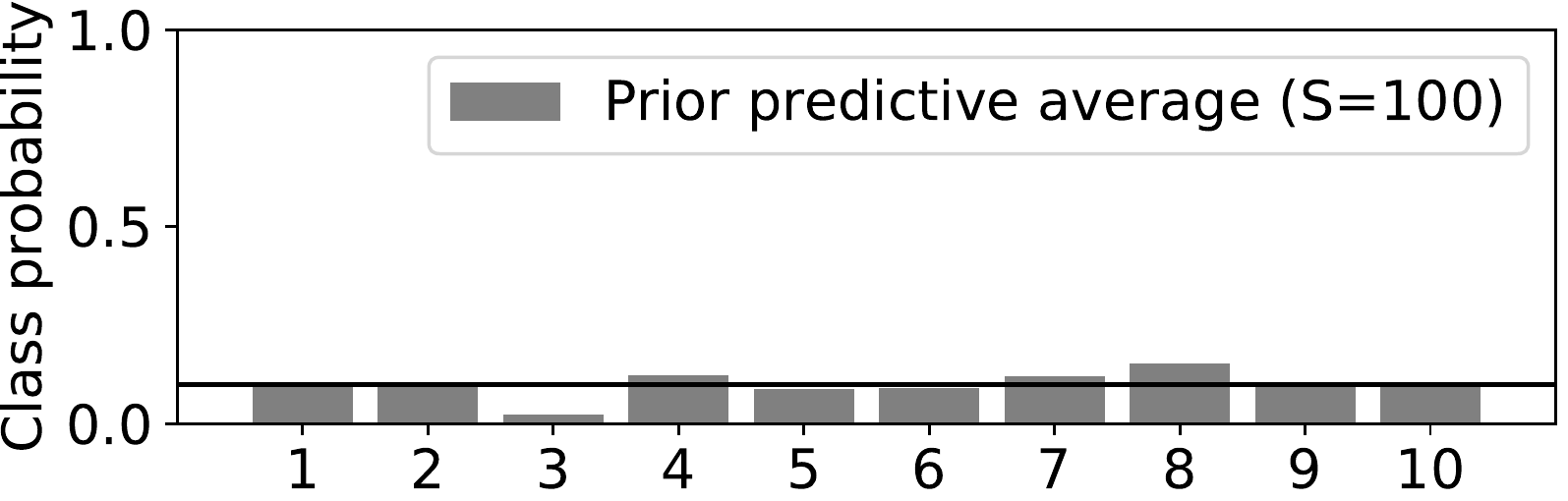}%
\caption{Prior predictive $\mathbb{E}_{x \sim p(x)}[\mathbb{E}_{\pars \sim p(\pars)}[p(y|x,\pars)]]$ over 10 classes for a Kaiming-scaling prior, estimated using $S=100$ samples $\pars^{(i)}$ and all training images.}%
\label{fig:resnet-priorpredictive-kaiming}%
\end{subfigure}%
\caption{ResNet-20/CIFAR-10 prior predictive study for a He-scaled Normal prior for \texttt{Conv2D} and \texttt{Dense} layers and a $\mathcal{N}(0,I)$ prior for all bias terms.
This prior concentrates prior mass on functions which output the \emph{same} concentrated label distribution for \emph{all} training instances.  It is therefore a bad prior.}%
\label{fig:resnet-priorstudy-kaiming}%
\end{figure*}

\experiment{He-scaled Prior ResNet-20 CIFAR-10 Experiment:}
We also perform the original cold posterior experiment from the main paper with the He-scaling Normal prior.
We show the temperature-dependence curves for test accuracy and test cross-entropy in~Figure~\ref{fig:resnet-gpgaussian}.
The overall performance drops compared to the $\mathcal{N}(0,I)$ prior, but the cold posterior effect clearly remains.
With this result and the result from the prior predictive study we can conclude that a simple Normal scaling correction is not enough to yield a sensible prior.

\begin{figure}[!t]
\center{\includegraphics[width=\columnwidth]{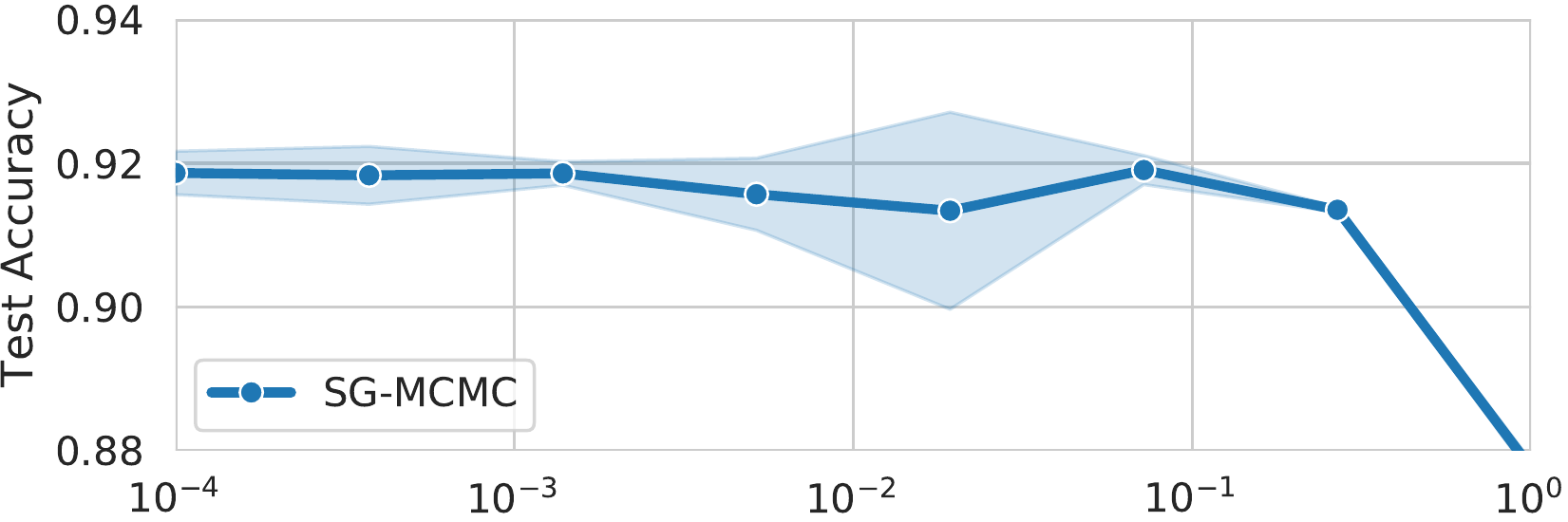}}%
\center{\includegraphics[width=\columnwidth]{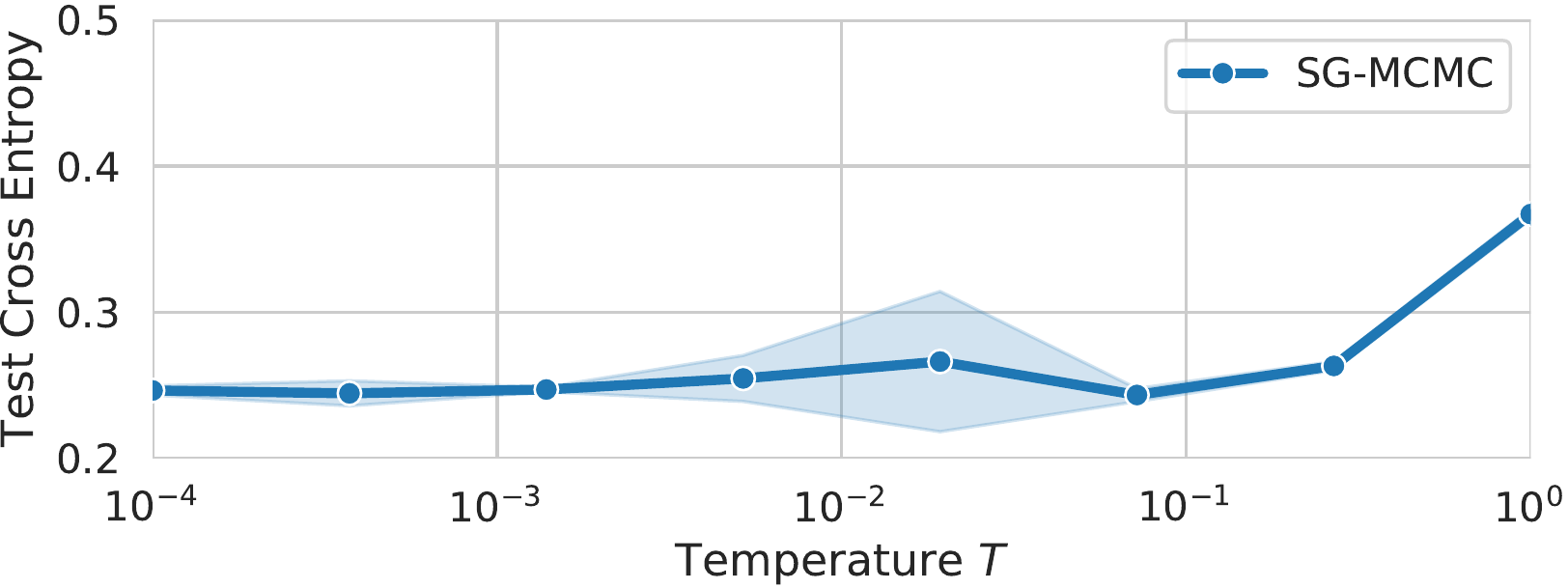}}%
\vspace{-0.3cm}%
\caption{ResNet-20 on CIFAR-10 with He-scaling Normal prior (He-scaled Normal for \texttt{Conv2D} and \texttt{Dense} layers, and $\mathcal{N}(0,I)$ for all bias terms).
The cold posterior effect remains: the poor predictive performance of the Bayes posterior at $T=1$ holds for both accuracy and cross-entropy.}%
\label{fig:resnet-gpgaussian}%
\end{figure}

\subsection{He-Scaled Normal Prior, $\mathcal{N}(0,\epsilon I)$ for Biases}\label{sec:he-smallbias}
In this section we experiment with He-scaling and a very small scale for the bias prior.
There are two motivations for such experimentation:
\emph{first}, He-scaling was originally proposed by~\citet{he2015prelu} for initializing
deep convolutional neural networks and in their initialization all bias terms were initialized to zero.
\emph{Second}, bias terms influence a large number of downstream activations and getting the scale wrong for our bias priors may have the large concentration effect that we observe in the previous prior predictive experiments.

We therefore propose to use a He-scaling Normal prior for all \texttt{Conv2D} and \texttt{Dense} layer weights and to use a $\mathcal{N}(0,\epsilon I)$ prior for all bias terms.
Here we use $\epsilon = \sigma^2$ with $\sigma = 10^{-6}$, essentially sampling all bias terms close to zero as in the original initialization due to~\citep{he2015prelu}.

\experiment{He-scaled Prior, $\mathcal{N}(0,\epsilon I)$ Bias Prior Experiment:}
We draw ResNet-20 models from the prior and evaluate the predicted class distributions on the entire CIFAR-10 training set.
Figure~\ref{fig:resnet-priorsamples-kaiming-smallbias} shows two prior draws and the resulting class distributions marginalized over the entire training set.
Figure~\ref{fig:resnet-priorpredictive-kaiming-smallbias} shows a marginal prior predictive, marginalized over $S=100$ prior draws and the entire training distribution of 50,000 images.
The resulting marginal prior predictive approaches the uniform distribution.
However, the He-scaled prior with $\mathcal{N}(0,\epsilon I)$ for bias terms remains a bad prior: random draws place prior mass on the same concentrated class distribution for all training instances.

\begin{figure*}[t]%
\begin{subfigure}[t]{0.65\textwidth}%
\includegraphics[width=0.49\textwidth]{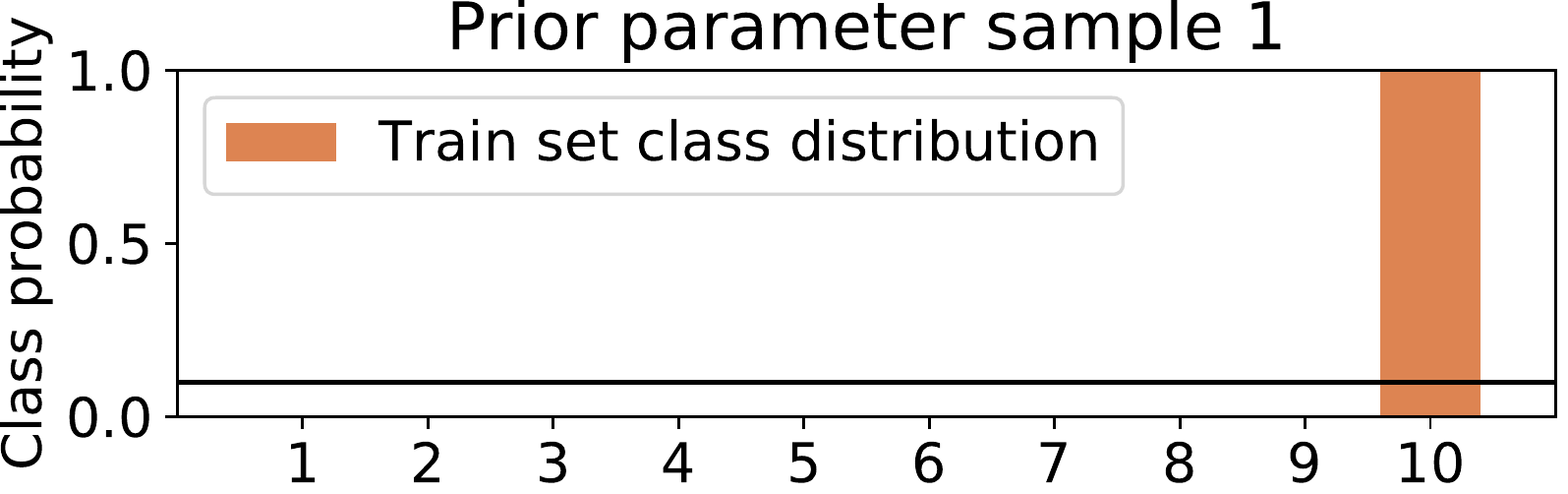}%
\hfill%
\includegraphics[width=0.49\textwidth]{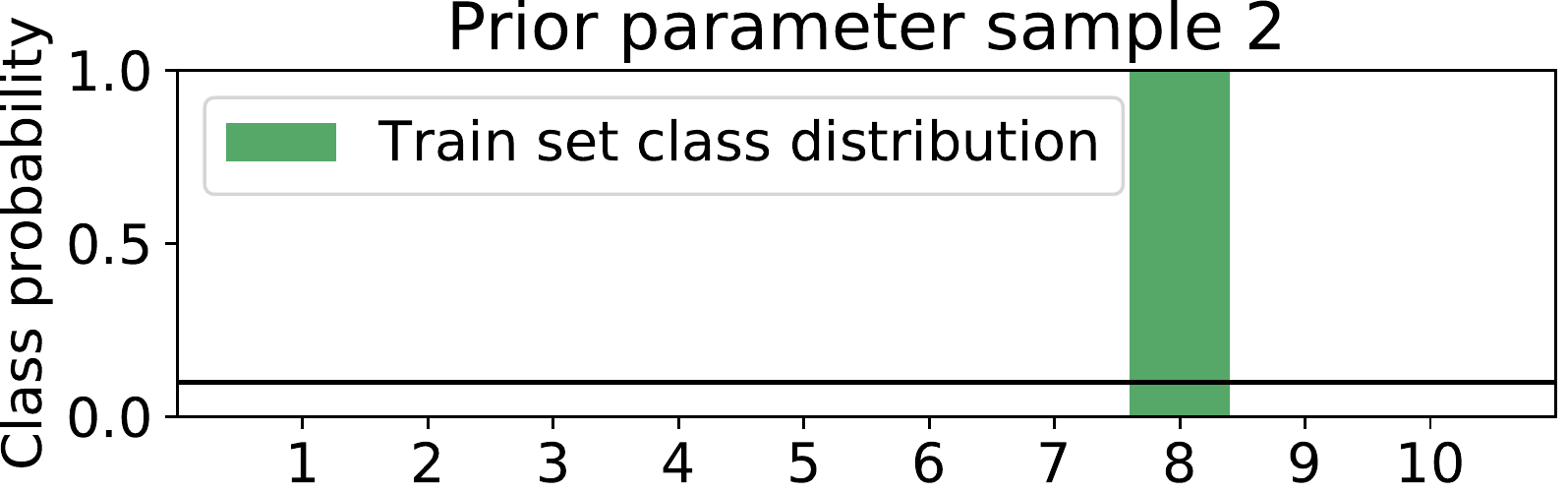}%
\caption{Typical predictive distributions for 10 classes under the prior, averaged over the entire training set, $\mathbb{E}_{x \sim p(x)}[p(y|x,\pars^{(i)})]$.  Each plot is for one sample $\pars^{(i)} \sim p(\pars)$.
Given a sample $\pars^{(i)}$ the average training data class distribution is still highly concentrated around the same classes for all $x$ despite using a small $\mathcal{N}(0,\epsilon I)$ prior for biases.}%
\label{fig:resnet-priorsamples-kaiming-smallbias}%
\end{subfigure}%
\hfill%
\begin{subfigure}[t]{0.32\textwidth}%
\includegraphics[width=\textwidth]{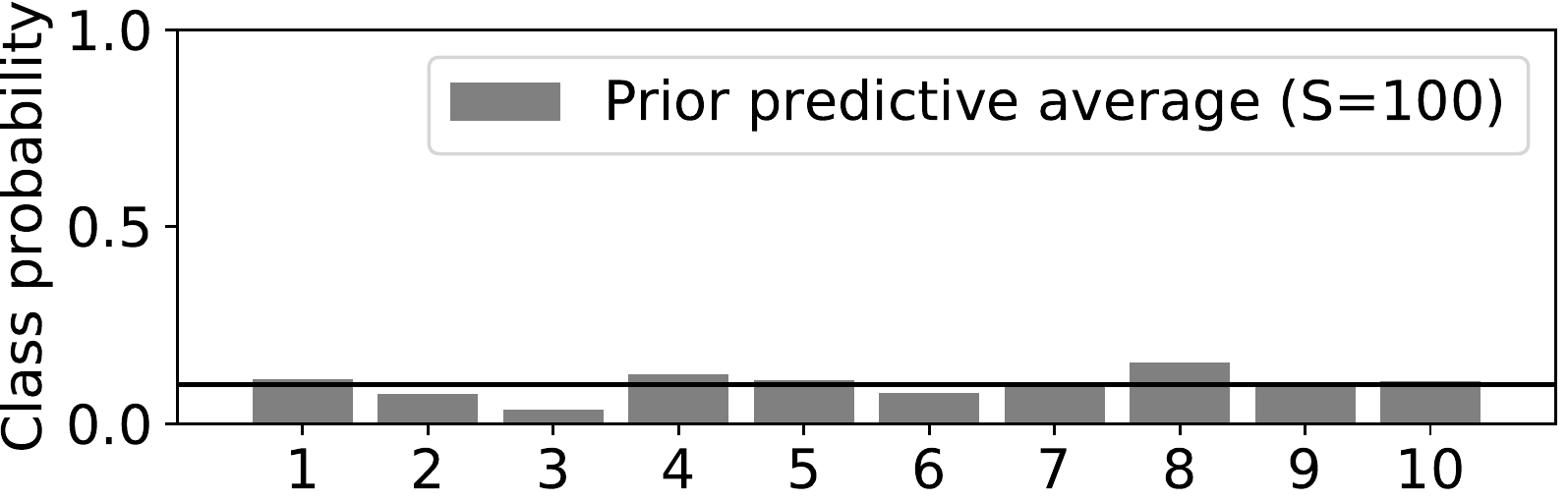}%
\caption{Prior predictive $\mathbb{E}_{x \sim p(x)}[\mathbb{E}_{\pars \sim p(\pars)}[p(y|x,\pars)]]$ over 10 classes for a Kaiming-scaling prior with a $\mathcal{N}(0,\epsilon I)$ prior for bias terms.
We estimate the marginal distribution using $S=100$ samples $\pars^{(i)}$ and all training images.}%
\label{fig:resnet-priorpredictive-kaiming-smallbias}%
\end{subfigure}%
\caption{ResNet-20/CIFAR-10 prior predictive study for a He-scaled Normal prior for \texttt{Conv2D} and \texttt{Dense} layers and a $\mathcal{N}(0,\epsilon I)$ prior for all bias terms.
This prior still concentrates prior mass on functions which output the \emph{same} concentrated label distribution for \emph{all} training instances.  It is therefore a bad prior.}%
\label{fig:resnet-priorstudy-kaiming-smallbias}%
\end{figure*}

\section{Tempering the Observation Model?}\label{sec:templikelhood}
In~\citep{wilson2020bayesianperspective}, Equation~(4) a proposal is made to use a different likelihood function of the form
\begin{equation}
    p_T(y|x,\pars) :\propto p(y|x,\pars)^{1/T}.\label{eqn:observation-model-transformation}
\end{equation}
It is claimed that with this adjusted observation model the cold posterior is simply the ordinary Bayes posterior of the modified model.
Indeed, if we are to plug the right hand side of~(\ref{eqn:observation-model-transformation}) directly into our posterior energy function~(\ref{eqn:U}) we obtain the cold posterior energy function,
\begin{equation}
U_T(\pars) := - \sum_{i=1}^n \frac{1}{T} \log p(y_i|x_i,\pars) - \log p(\pars).
\label{eqn:UT}
\end{equation}
The mistake in this derivation is to ignore that renormalization of $p_T(y|x,\pars)$ must be carried out because the normalizing constant is not invariant of $\pars$.  In particular, this is in contrast to typical applications of Bayes rule for posteriors, where we can indeed write $p(\pars|\mathcal{D}) \propto p(\mathcal{D}|\pars) p(\pars)$ without worries, as here the normalizing constant does not depend on $\pars$.
One consequence of this mistake is that $U_T(\pars)$ is \emph{not} necessarily the energy function of a Bayes posterior.

Instead, for the tempered observation model proposed by~\citep{wilson2020bayesianperspective} the correctly normalized observation likelihood is
\begin{equation}
p_T(y|x,\pars) = \frac{p(y|x,\pars)^{1/T}}{\int p(y|x,\pars)^{1/T} \,\textrm{d}y}.
\end{equation}
Using this normalized observation model, the correct Bayes posterior energy corresponding to $p_T(y|x,\pars)$ is
\begin{eqnarray}
{\tilde U}_T(\pars) & := & - \sum_{i=1}^n \frac{1}{T} \log p(y_i|x_i,\pars) - \log p(\pars)\nonumber\\
    & & + \sum_{i=1}^n \log \int p(y|x_i,\pars)^{1/T} \,\textrm{d}y.\label{eqn:UT-correction-term}
\label{eqn:UT-corrected}
\end{eqnarray}
Therefore, when the observation model is transformed as in~(\ref{eqn:observation-model-transformation}) and as suggested by~\citep{wilson2020bayesianperspective}, then in order to obtain a normalized observation model we must include the correction term~(\ref{eqn:UT-correction-term}) and this produces a modified energy function, ${\tilde U}_T(\pars)$, that differs from the actual cold posterior energy function $U_T(\pars)$.

\paragraph{Is there a way to ``fix'' this mistake?}
I.e. can one construct an observation model such that the resulting Bayes posterior corresponds to a tempered version of the Bayes posterior of the original observation model?
For classification we found a way: we assign probability to a pseudo event ``$\emptyset$'' which cannot occur.
To see this, assume a classification model $p(y|x)$ where $y \in \{1,2,\dots,K\}$.
Clearly $\sum_{k=1}^K p(y=k|x) = 1$.
Given a temperature $T \leq 1$ we define
\begin{equation}
    \tilde{p}(y=k|x) := p(y=k|x)^{1/T}.
\end{equation}
Clearly for $0 < T \leq 1$ we have that $f(x) = x^{1/T}$ is a monotonic function in $x \in [0,1]$ and $f(x) = x^{1/T} \leq x$, and therefore $\sum_k \tilde{p}(y=k|x) \leq 1$.
We absorb the remaining probability mass into a pseudo event ``$\emptyset$'',
\begin{equation}
    \tilde{p}(y=\emptyset|x) := 1 - \sum_{k=1}^K \tilde{p}(y=k|x),
\end{equation}
such that the resulting distribution $\tilde{p}$ over $K+1$ basic events sums to one,
\begin{equation}
\sum_{k \in \{1,2,\dots,K,\emptyset\}} \tilde{p}(y=k|x) = 1.
\end{equation}

Now observe that for any event in $\{1,2,\dots,K\}$ that actually \emph{can} occur we have
\begin{equation}
\log \tilde{p}(y=k|x) = \frac{1}{T} \log p(y=k|x),
\end{equation}
that is, we have achieved the effect of temperature scaling when using $\tilde{p}$ as observation model.
While formally possible, can we make sense of this transformation and introduction of a pseudo event?

To us it seems entirely non-Bayesian to artificially introduce events into a model while knowing with perfect certainty that these events cannot happen and then allow the model to assign probability mass to those events.
It is non-Bayesian because our knowledge with respect to the new event is perfect: it cannot occur.  Therefore a model should respect this knowledge of the world.

\section{Details: Generation of a Synthetic Dataset Based on an MLP Drawn From its Prior Distribution}\label{sec:generate_data_from_mlp}

In this section, we describe how we generate a synthetic dataset based on a multi-layer perceptron (MLP) drawn from its prior distribution, as used in Section~\ref{sec:hyp_biased_sgmcmc} of the main paper.

We generate synthetic data by (i) drawing a MLP from its prior distribution, i.e., $\texttt{mlp}_\pars$ with $\pars \sim p(\pars)$, (ii) sampling input data point $x$'s~$\in \mathbb{R}^5$ from a standard normal distribution and (iii) sampling label $y$'s~$\in \{1,2,3\}$ from the resulting logits $\texttt{mlp}_\pars(x)$.
We take $\texttt{mlp}_\pars$ to be of depth 2, with 10 units and relu activation functions. We generate $n=100$ points for inference and 10,000 for evaluations.

The choice of $p(\pars)$ requires some care. On the one hand, a naive choice of normal priors with unit standard deviation leads to a degenerated dataset that concentrates all its outputs on a single class. On the other hand, normal priors with a smaller standard deviation\footnote{Default value of \texttt{tf.random\_normal\_initializer}.}, e.g., 0.05, lead to a less spiky label distribution but with little dependence on the input $x$'s.

As a result, we considered a He normal prior~\citep{he2015prelu} for the weights of $\texttt{mlp}_\pars$ and a normal prior, with standard deviation 0.05, for the bias terms. We similarly adapted the choice of the priors for the MLPs used to learn over the data generated in this way.

\section{Details about Hamiltonian Monte Carlo}\label{sec:practical_usage_hmc}

In this section, we describe practical considerations about Hamiltonian Monte Carlo (HMC) and present further results about its comparison with SG-MCMC (see Section~\ref{sec:hyp_biased_sgmcmc}).

HMC mainly exposes four hyperparameters that need to be set~\citep{neal2011mcmc}:
\begin{compactitem}
\item The number $L$ of steps of the leapfrog integrator,
\item The step size $\varepsilon$ in the leapfrog integrator,
\item The number $b$ of steps of the burn-in phase,
\item The number $S$ of samples to generate.
\end{compactitem}

\subsection{Hyperparameter choices}
In our experiments with HMC, we have set $S=2500$, generating a total of $25000$ samples after the burn-in phase and keeping one sample every ten samples.

For the burn-in phase,  we investigated in preliminary experiments the effect of varying the number of steps $b \in \{500, 1000, 5000\}$, noticing that our diagnostics (as later described) started to stabilize for $b=1000$, so that we decided to use $b=5000$ out of precaution (even though it may not be the most efficient option).

We thereafter searched a good combination of leapfrog steps and step size for $L \in \{5, 10, 100\}$ and $\varepsilon \in \{0.001, 0.01, 0.1\}$. The results of the nine possible combinations are reported in Figure~\ref{fig:sgmcmc_vc_hmc_all_leapfrog_steps_stepsize_pairs}, after aggregating 5 different runs (i.e., from 5 different random initial conditions). The influence of the step size in our experiments was likely reduced by the fact that we used the dual averaging step-size adaptation scheme from~\citet{hoffman2014no}, as implemented in \textit{Tensorflow Probability}~\citep{dillon2017tensorflow}.\footnote{\texttt{tfp.mcmc.DualAveragingStepSizeAdaptation}.}

\subsection{Convergence monitoring}

We monitor convergence by first inspecting trace plots and second by computing standard diagnostics, namely the effective sample size (ESS)~\citep{brooks2011handbook} and the potential scale reduction (PSRF)~\citep{gelman1992psrf}.

\paragraph{Trace plots.}
In Figures~\ref{fig:diagnostics_hmc_1}-\ref{fig:diagnostics_hmc_2}-\ref{fig:diagnostics_hmc_3}, we detail the inspection of the 5 different chains for the choice $L=100$ and $\varepsilon=0.1$ (which corresponds to the results of the sampler shown in the main paper). As practical diagnostic tools, we consider \textit{trace plots} where we monitor the evolution of some statistics with respect to the generated HMC samples (e.g., see~Section 24.4 in~\citet{murphy2012machine}, and references therein, for an introduction in a machine learning context). We compute trace plots for different depths of the MLP (in $\{1, 2, 3\}$) and different\footnote{We limit ourselves to four temperatures to avoid clutter.} temperatures, $T \in \{0.001, 0.0024, 0.014, 1.0\}$.

In addition to monitoring the evolution of the cross entropy for $S' \in \{1, 2, \dots, S\}$ HMC samples (see Figure~\ref{fig:diagnostics_hmc_1}), we also consider the following statistics:
\begin{itemize}
    \item \textbf{Mean of the predictive entropy:} Let us denote by $\mathcal{D}_\text{held-out}$ the held-out set of pairs $(x, y)$ and $\mathcal{E}_\pars(x)$ the entropy of the softmax output at the input $x$
    $$
    \mathcal{E}_\pars(x) = -\sum_{c} p(y=c|x,\pars) \log{p(y=c|x,\pars)},
    $$
    together with its average over the held-out set
    $$
    \mathcal{E}_\pars= \frac{1}{|\mathcal{D}_\text{held-out}|} \sum_{(x,y) \in \mathcal{D}_\text{held-out}} \mathcal{E}_\pars(x).
    $$
    For $S' \in \{1, 2, \dots, S\}$ samples collected along the trajectory of HMC, we report in Figure~\ref{fig:diagnostics_hmc_2} the estimate
    $$
    \hat{\mathcal{E}} = \frac{1}{S'} \sum_{s=1}^{S'} \mathcal{E}_{\pars_s}
    \approx
    \bar{\mathcal{E}} = \int \mathcal{E}_\pars \cdot p(\pars|\mathcal{D}) d\pars,
    $$
    which we refer to as the mean of the predictive entropy.
    \item \textbf{Standard deviation of the predictive entropy:} We also consider the monitoring of the second moment of the predictive entropy. With the above notation, we estimate
    $$
    \frac{1}{S'-1} \sum_{s=1}^{S'} (\mathcal{E}_{\pars_s} - \hat{\mathcal{E}})^2 
    \approx
    \int (\mathcal{E}_\pars - \bar{\mathcal{E}})^2 \cdot p(\pars|\mathcal{D}) d\pars
    $$
    and report its square root in Figure~\ref{fig:diagnostics_hmc_3}, which we refer to as the standard deviation of the predictive entropy.
\end{itemize}
As a general observation, we can see on Figures~\ref{fig:diagnostics_hmc_1}-\ref{fig:diagnostics_hmc_2}-\ref{fig:diagnostics_hmc_3} that, overall, the 5 different chains tend to exhibit a converging behavior for the three examined statistics, with typically more dispersion as the depth and the temperature increase (which is reflected by the ranges of the y-axis in the plots of Figures~\ref{fig:diagnostics_hmc_1}-\ref{fig:diagnostics_hmc_2}-\ref{fig:diagnostics_hmc_3} that get wider as $T$ and the depth become larger). 

\paragraph{ESS and PSRF.}
The effective sample size (ESS)~\citep{brooks2011handbook} measures how independent the samples are in terms of the auto-correlations within the sequence at different lags.
The potential scale reduction factor (PSRF)~\cite{gelman1992psrf} assesses the convergence of the chains (to the same target distribution) by testing for equality of means. 

We computed ESS and PSRF for our HMC simulation (with 100 leapfrog steps and a step size of 0.1, as reported in the main paper). We used the TFP implementations \texttt{tfp.mcmc.$\{$effective\_sample\_size, potential\_scale\_reduction$\}$}. Figure~\ref{fig:ess_and_psrf_and_kl} (left, middle) displays the ESS and PSRF with respect to the different temperature levels, for the 3 MLP depths. Both ESS and PSRF were averaged over the model parameters.

We observe that in the regime $T$ in  $[0.05, 1]$, the diagnostics indicate an approximate convergence (PSRF < 1.05 and ESS in $[1800, S]$, with $S=2500$ total samples) for the 3 MLP depths. On the other hand, in the regime $T$ in [0.001, 0.05], the diagnostics only continue to indicate an approximate convergence for the depth 1. For depths 2 and 3, both diagnostics substantially degrade, e.g., ESS down to $\approx189$ for depth 3.

\subsection{KL divergence between predictive distributions}

In Section~\ref{sec:hyp_biased_sgmcmc}, we compare side by side the cross-entropy of SG-MCMC and HMC for the different temperature levels, exhibiting a close agreement.

As an alternative visualization of this comparison, we computed the (symmetrized) KL divergence between the SG-MCMC and HMC predictive distributions (i.e., in our setting, categorical distributions with 3 classes). 

For SG-MCMC and HMC (instantiated with 100 leapfrog steps and a step size of 0.1, as reported in the main paper), 
Figure~\ref{fig:ess_and_psrf_and_kl} (right) displays the (symmetrized) KL  with respect to the different temperature levels, for the 3 MLP depths (averaged over the seeds). We observe that all KLs are small (in the order of $\approx10^{-5}$ for depth 1, and $\approx 10^{-3}$ for depths 2 and 3).

\newpage

\begin{figure*}[!t]
\center{\includegraphics[keepaspectratio=true,width=0.55\textwidth]{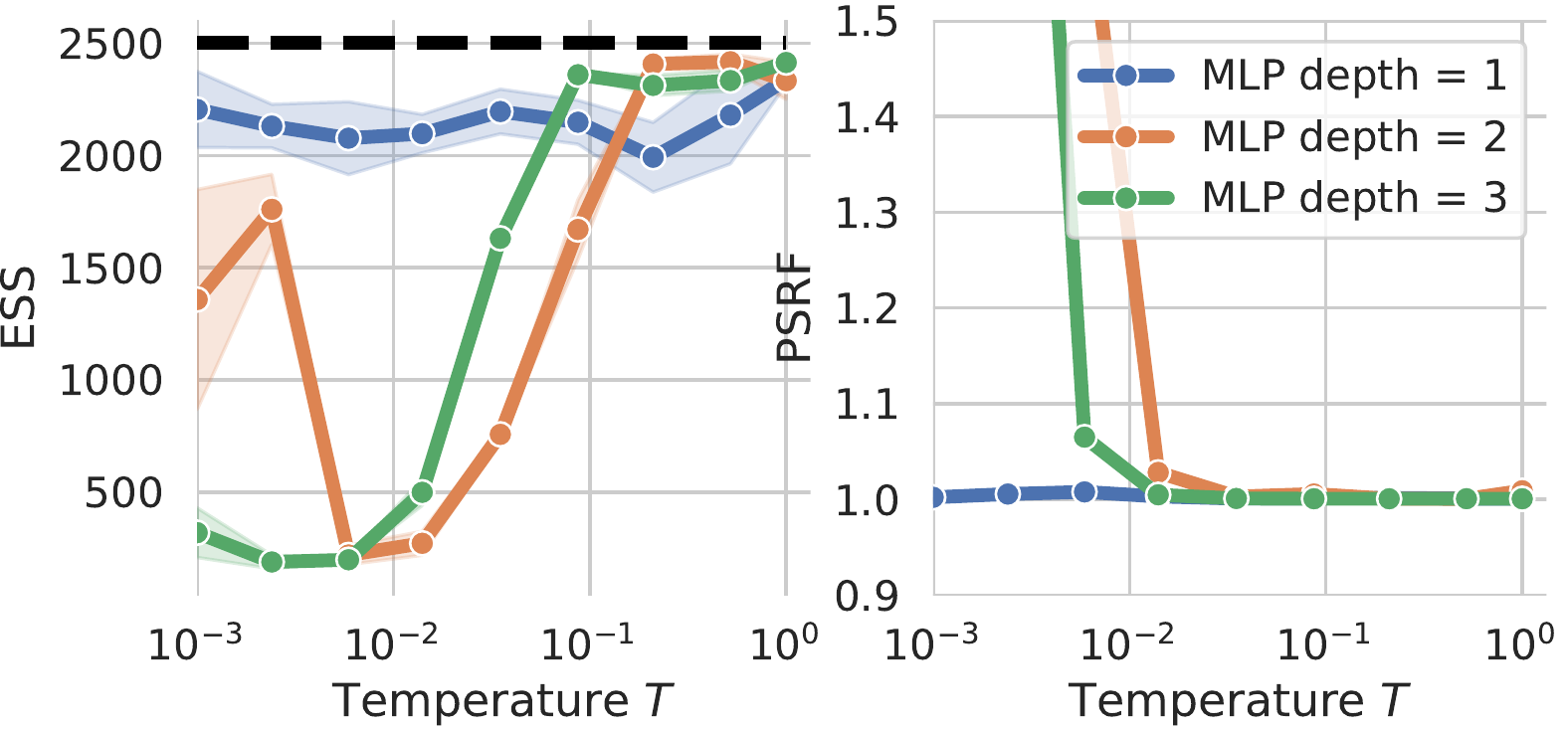}
\includegraphics[keepaspectratio=true,width=0.35\textwidth]{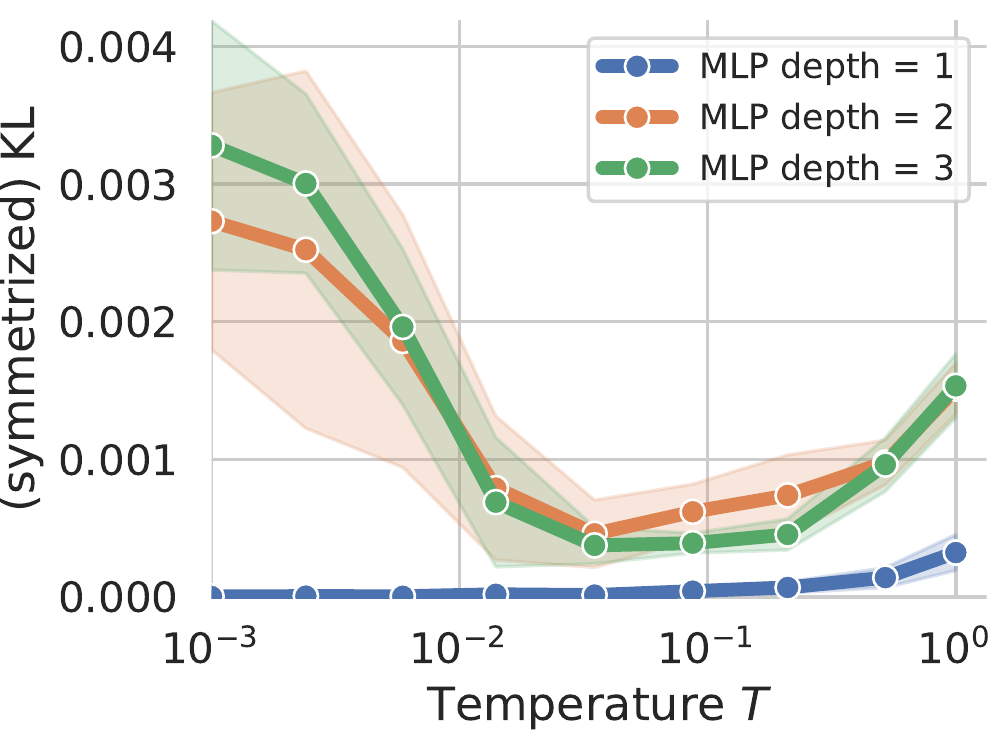}
}
\caption{For HMC (instantiated with 100 leapfrog steps and a step size of 0.1, as reported in the main paper Section~\ref{sec:hyp_biased_sgmcmc}), we report the effective sample size (left) and potential scale reduction factor (middle) with respect to the different temperature levels. On the left plot, the black dash line corresponds to the $S=2500$ samples and ESS=$S$ indicates no correlation in the sequences. For PSRF, approximate convergence is generally considered when PSRF $< 1.2$~\citep{gelman1992psrf}. (right) KL divergence between the predictive distributions of HMC and SG-MCMC.}%
\label{fig:ess_and_psrf_and_kl}%
\end{figure*}

\begin{figure*}[!t]
\includegraphics[width=0.33\textwidth]{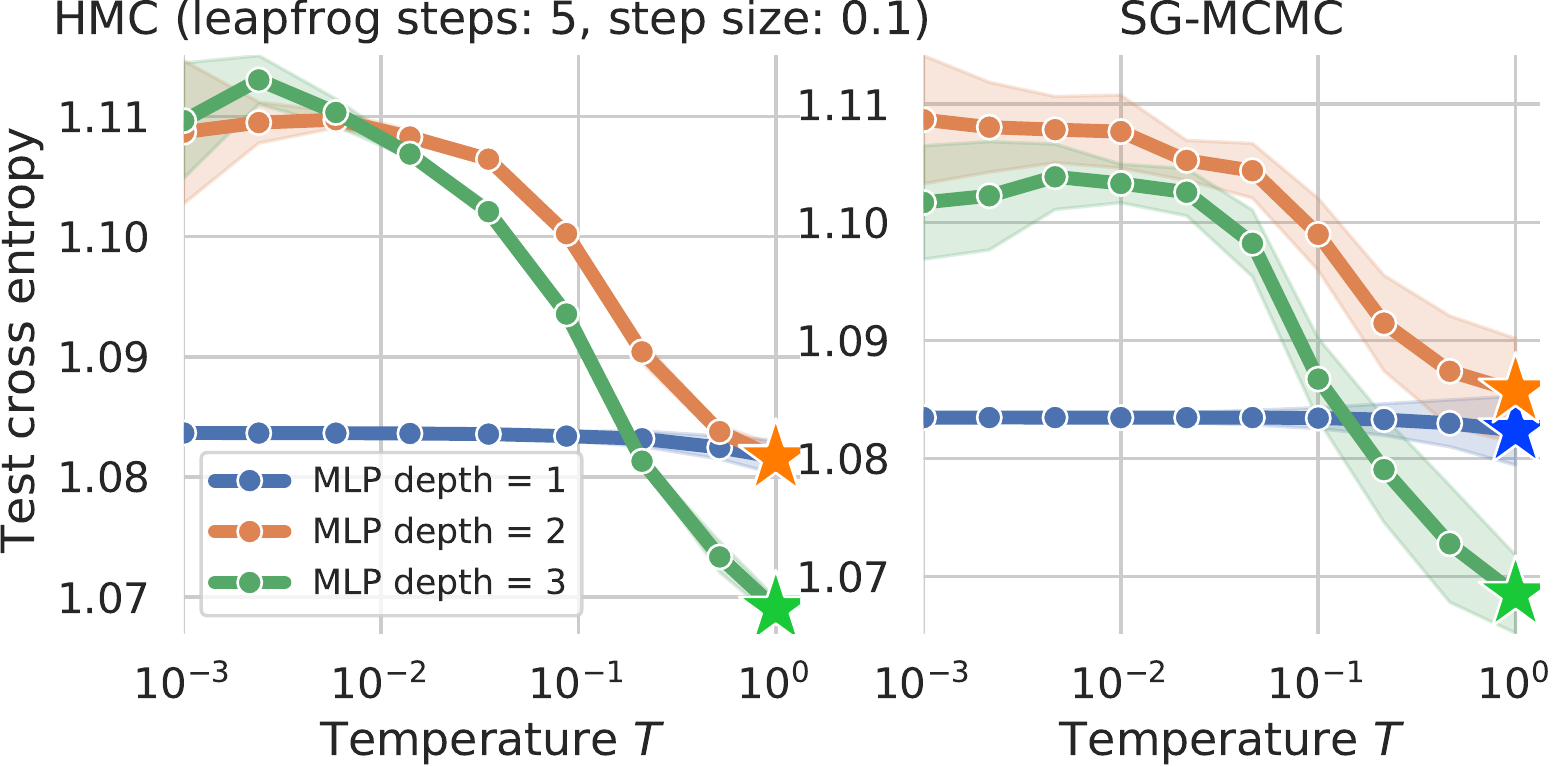}
\includegraphics[width=0.33\textwidth]{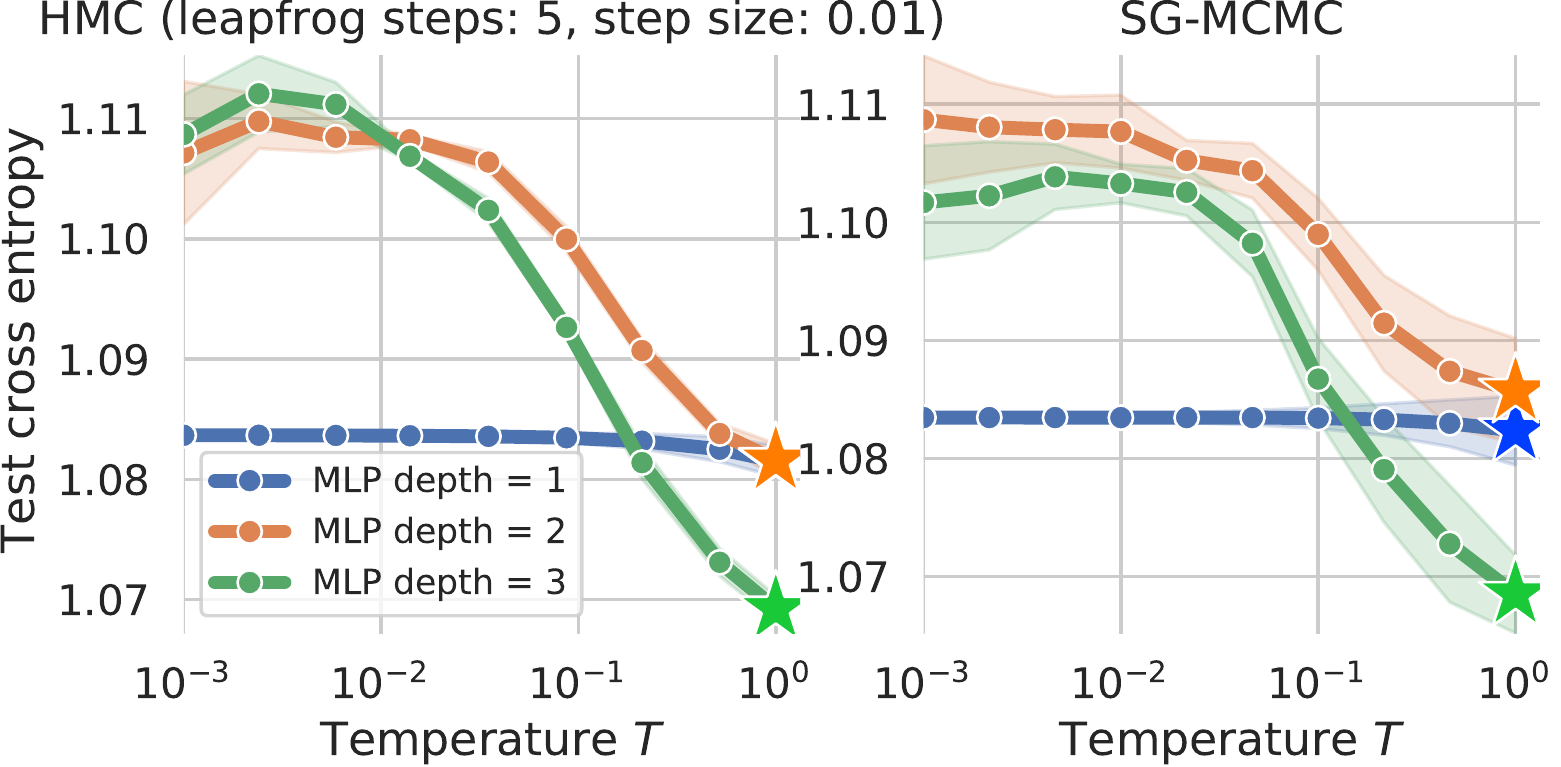}
\includegraphics[width=0.33\textwidth]{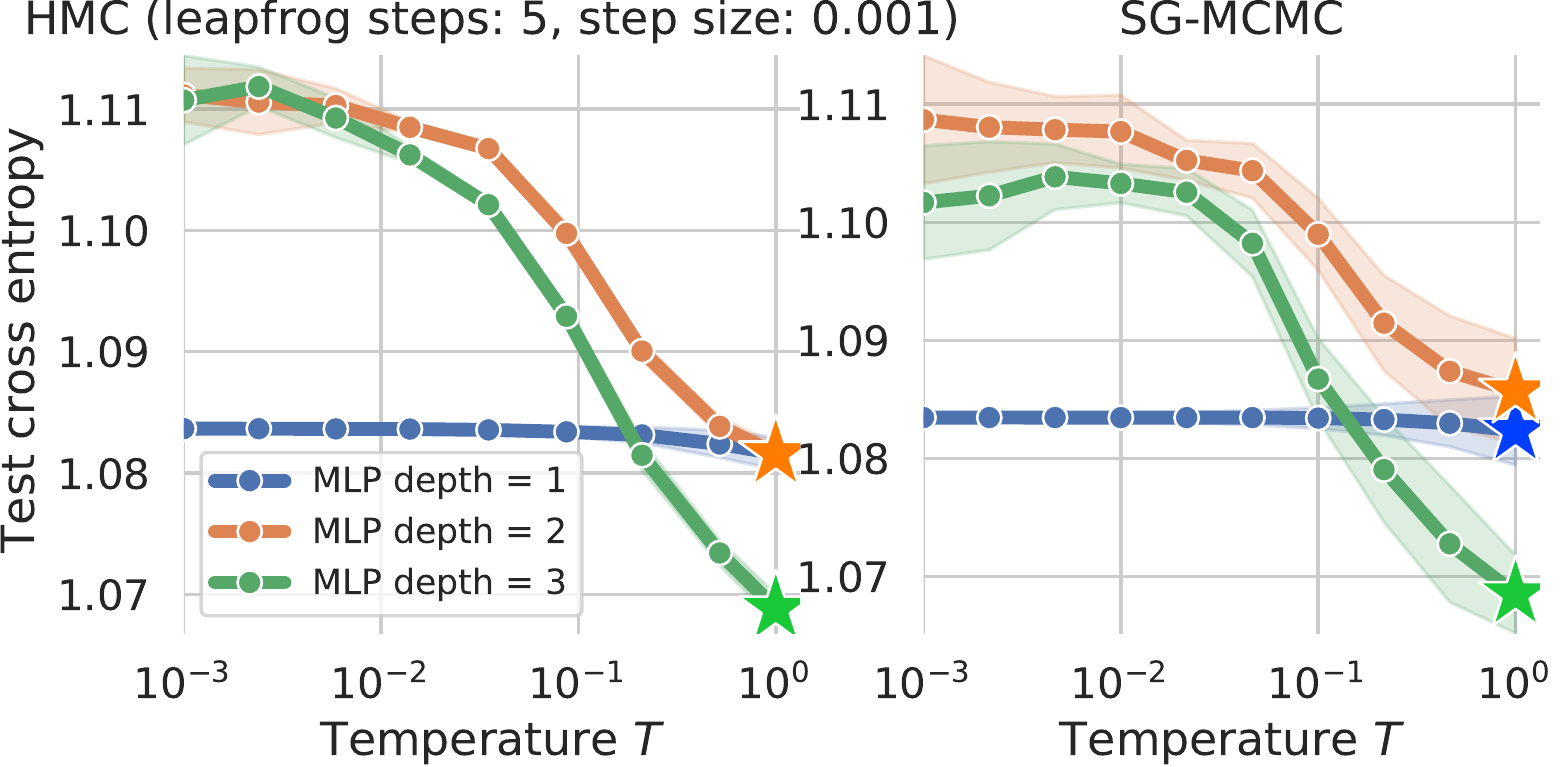}\\~\\~
\includegraphics[width=0.33\textwidth]{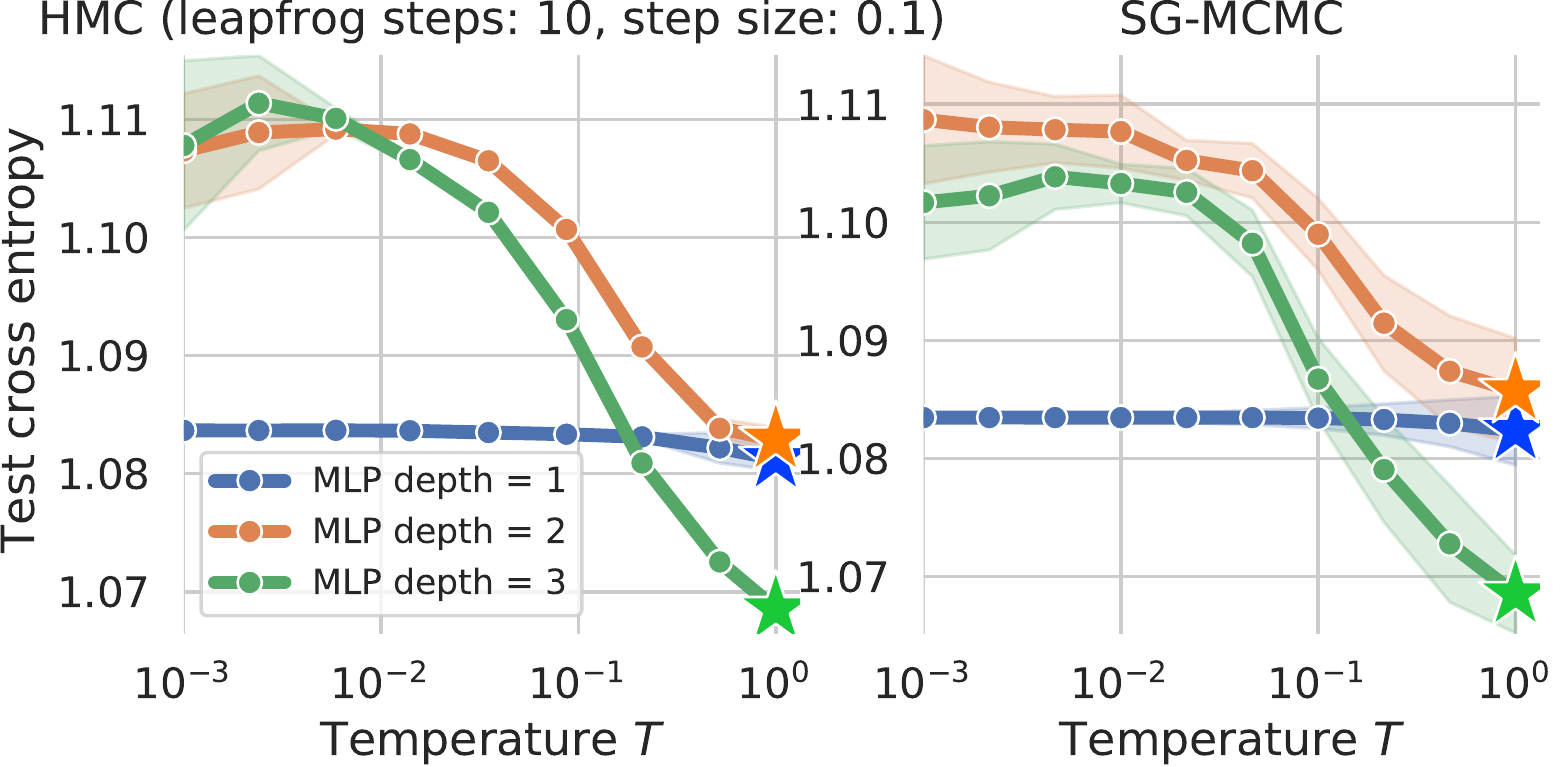}
\includegraphics[width=0.33\textwidth]{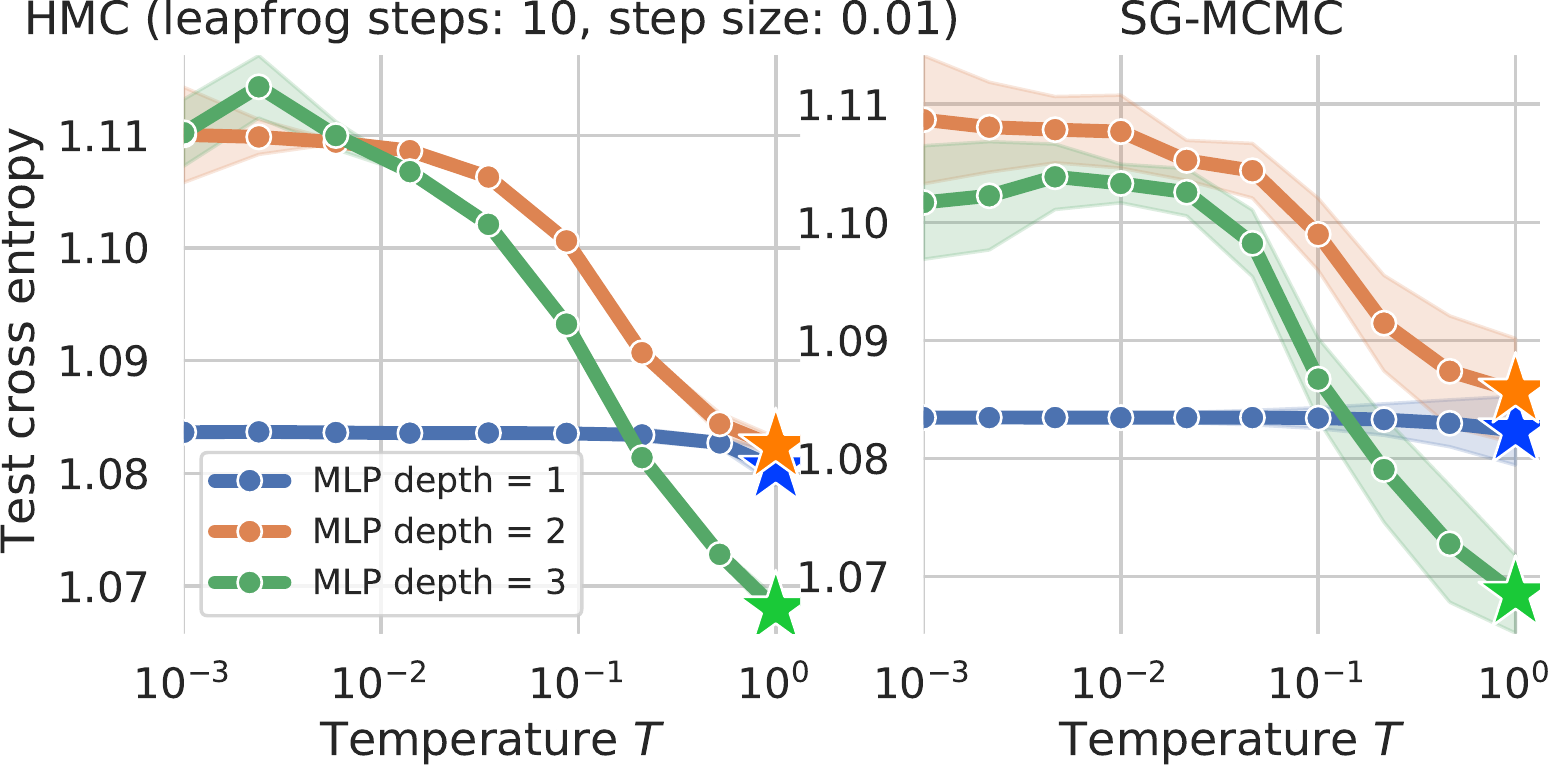}
\includegraphics[width=0.33\textwidth]{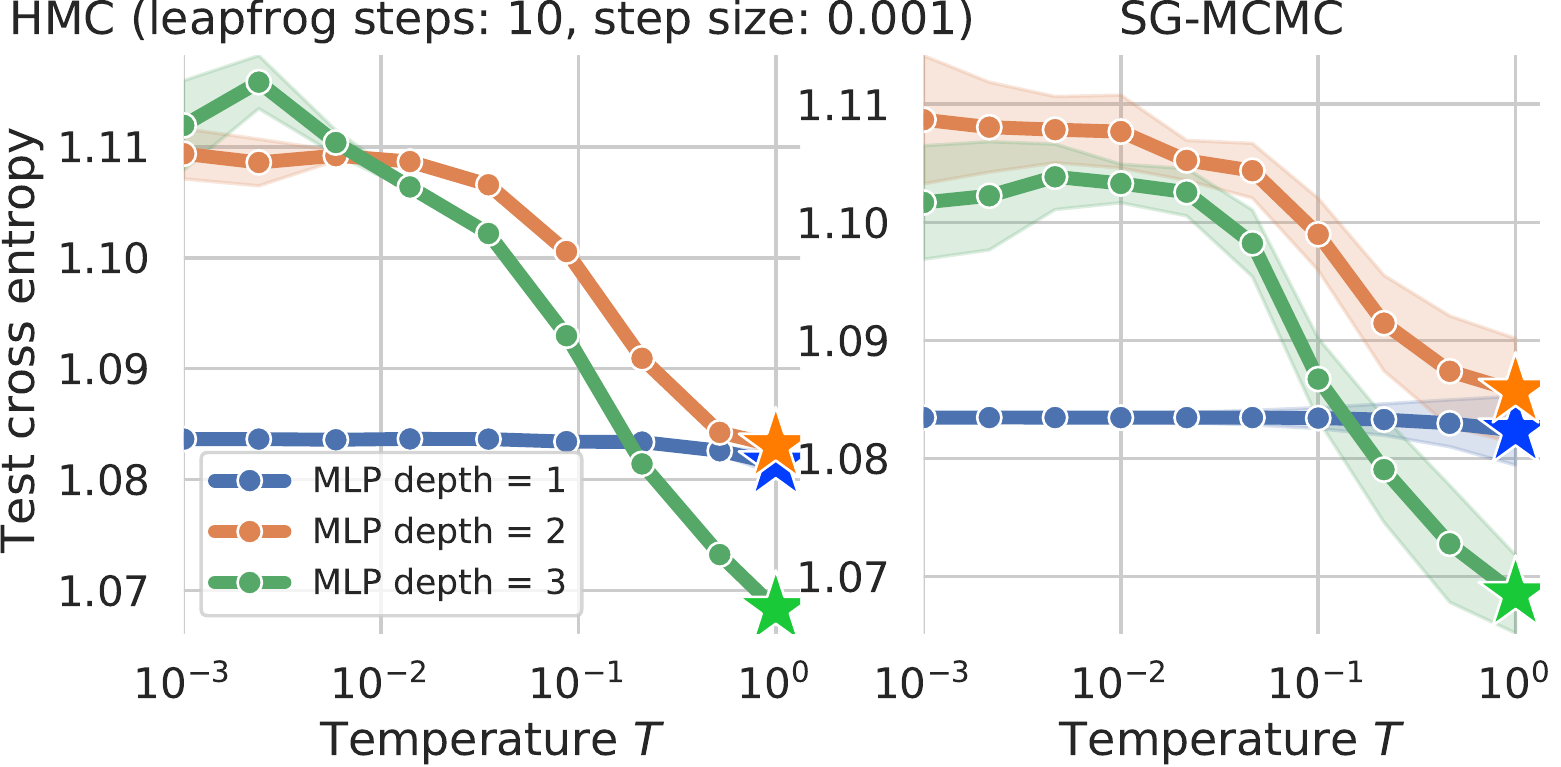}\\~\\~
\includegraphics[width=0.33\textwidth]{figures/sgmcmc_vs_hmc/sgmcmc_vs_hmc_leapfrog100_log-stepsize-1.pdf}
\includegraphics[width=0.33\textwidth]{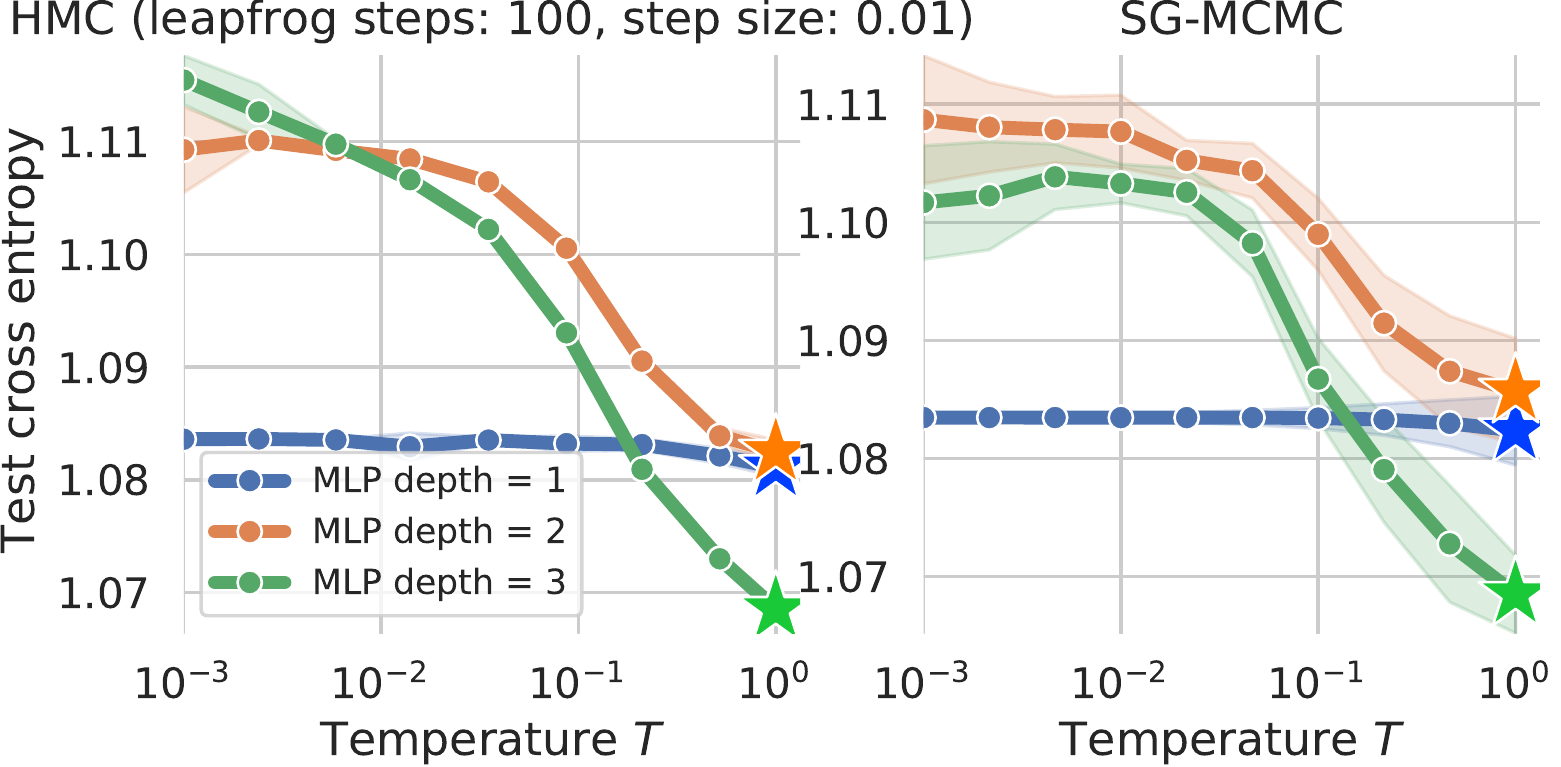}
\includegraphics[width=0.33\textwidth]{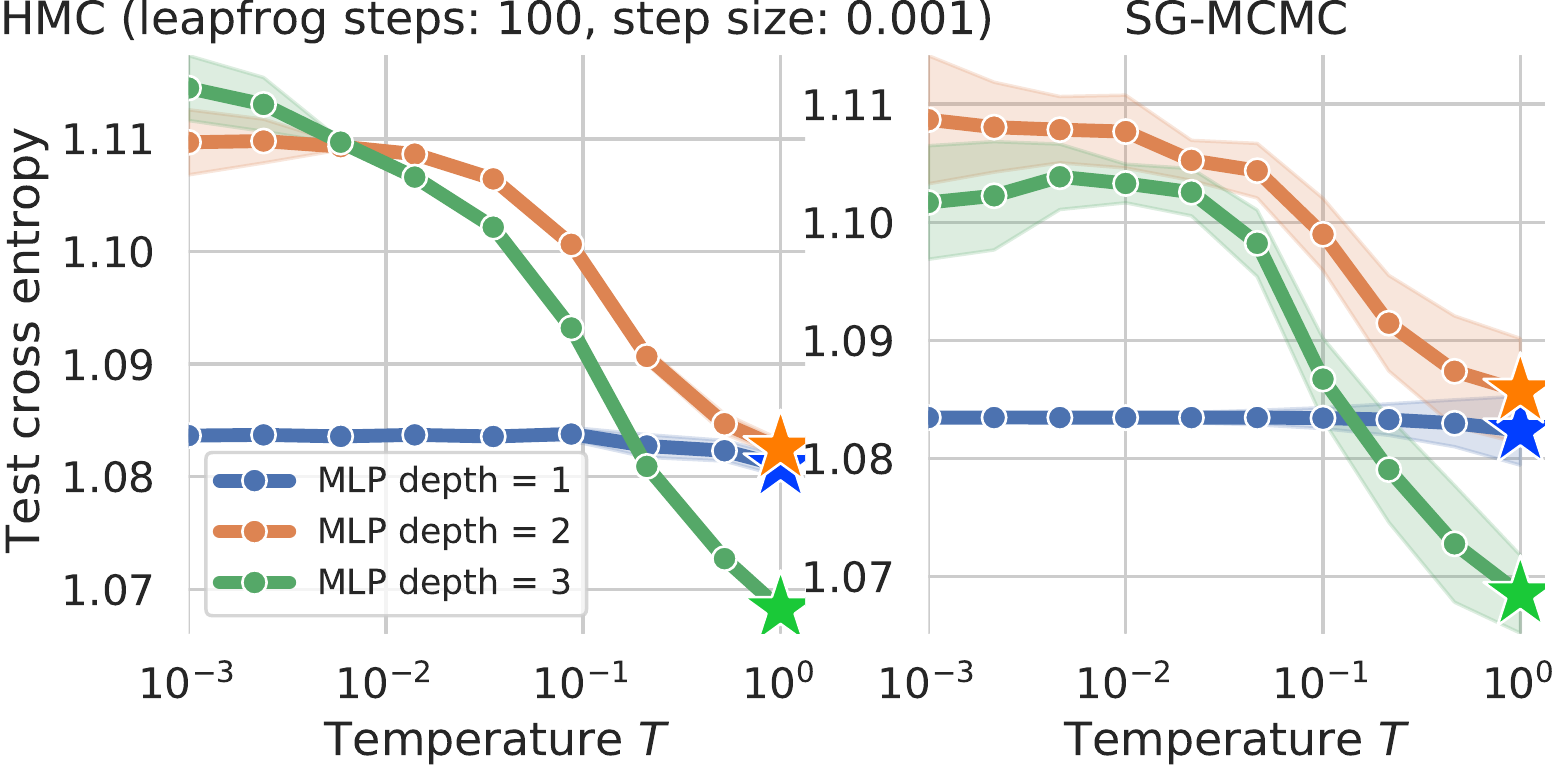}
\caption{Comparisons between SG-MCMC and HMC instantiated with different choices of leapfrog steps $L$ in $\{5, 10, 100\}$ and step sizes $\varepsilon$ in $\{0.001, 0.01, 0.1\}$. The curves show the (held-out) cross entropy versus different temperature levels, aggregated over 5 different runs, for MLPs of various depths (in $\{1,2,3\}$ with fixed number of units 10 and relu activation functions). Details about the dataset used can be found in the core paper. The setting $L=100$ and $\varepsilon=0.1$ corresponds to the results reported in the main paper.}%
\label{fig:sgmcmc_vc_hmc_all_leapfrog_steps_stepsize_pairs}%
\end{figure*}

\begin{figure*}[!t]
\center{\includegraphics[keepaspectratio=true,width=0.75\textwidth]{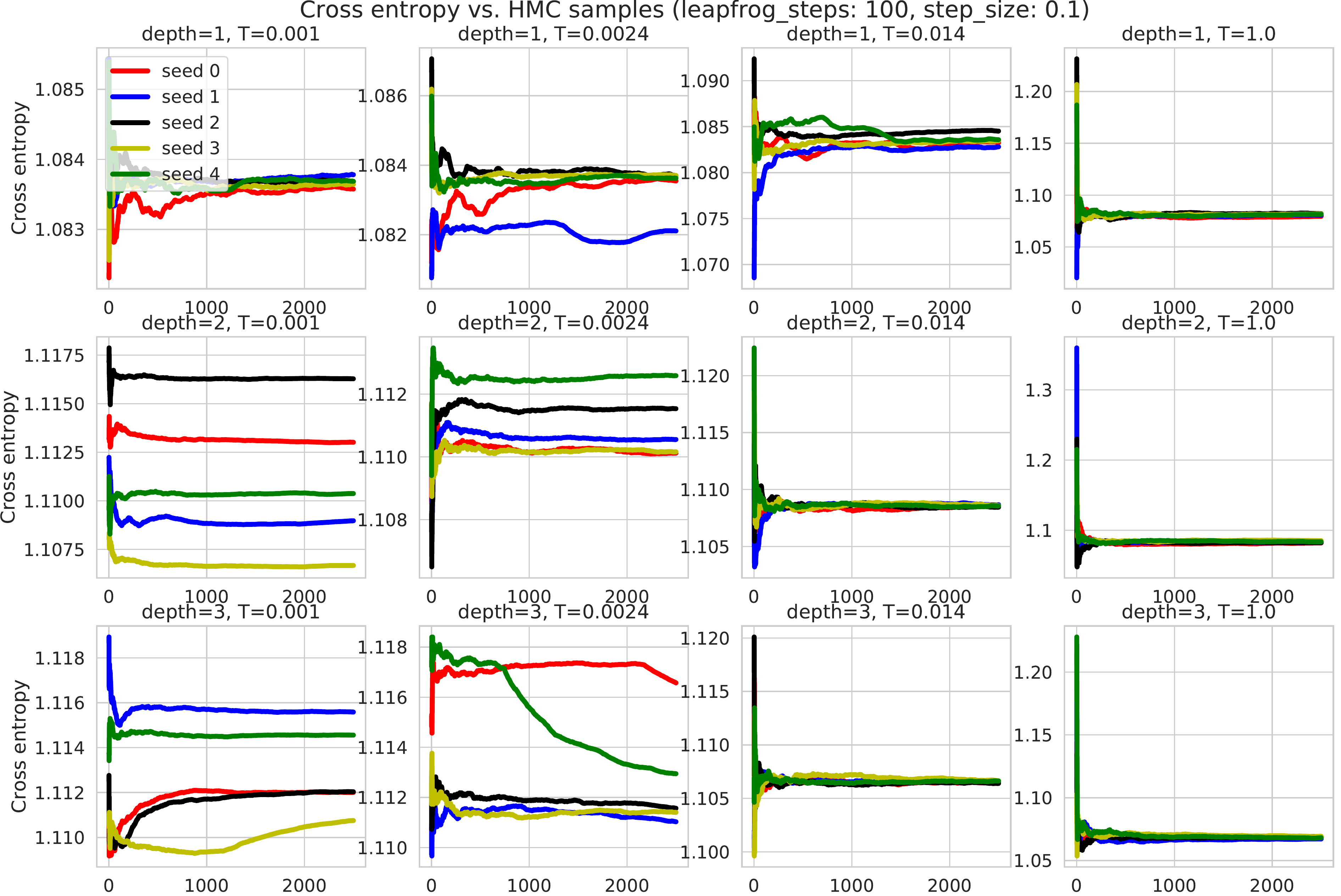}}
\caption{Trace plots of the cross entropy: We display the evolution of 5 different chains with respect to the $S=2500$ HMC samples collected after the burn-in phase, for various depths (rows) and temperatures (columns). Overall, the chains exhibit a converging behavior, with typically more dispersion as the depth and the temperature increase (which is reflected by the ranges of the y-axis that get wider as $T$ and the depth increase).}%
\label{fig:diagnostics_hmc_1}%
\end{figure*}

\begin{figure*}[!t]
\center{\includegraphics[keepaspectratio=true,width=0.75\textwidth]{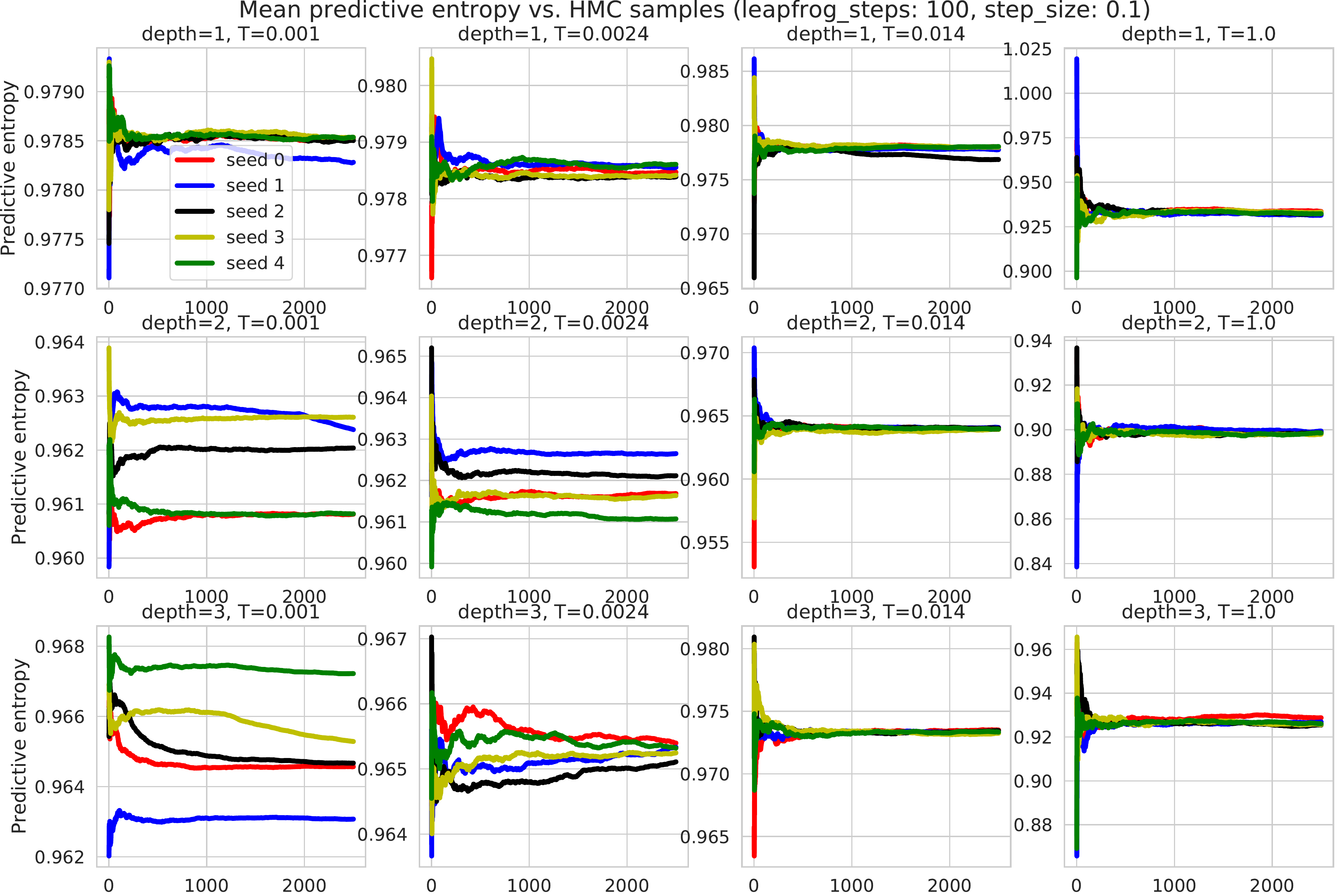}}
\caption{Trace plots of the mean predictive entropy (see definition in Section~\ref{sec:practical_usage_hmc}). We display the evolution of 5 different chains with respect to the $S=2500$ HMC samples collected after the burn-in phase, for various depths (rows) and temperatures (columns). See further discussions in Figure~\ref{fig:diagnostics_hmc_1}.}%
\label{fig:diagnostics_hmc_2}%
\end{figure*}

\clearpage

\begin{figure*}[!t]
\center{\includegraphics[keepaspectratio=true,width=0.75\textwidth]{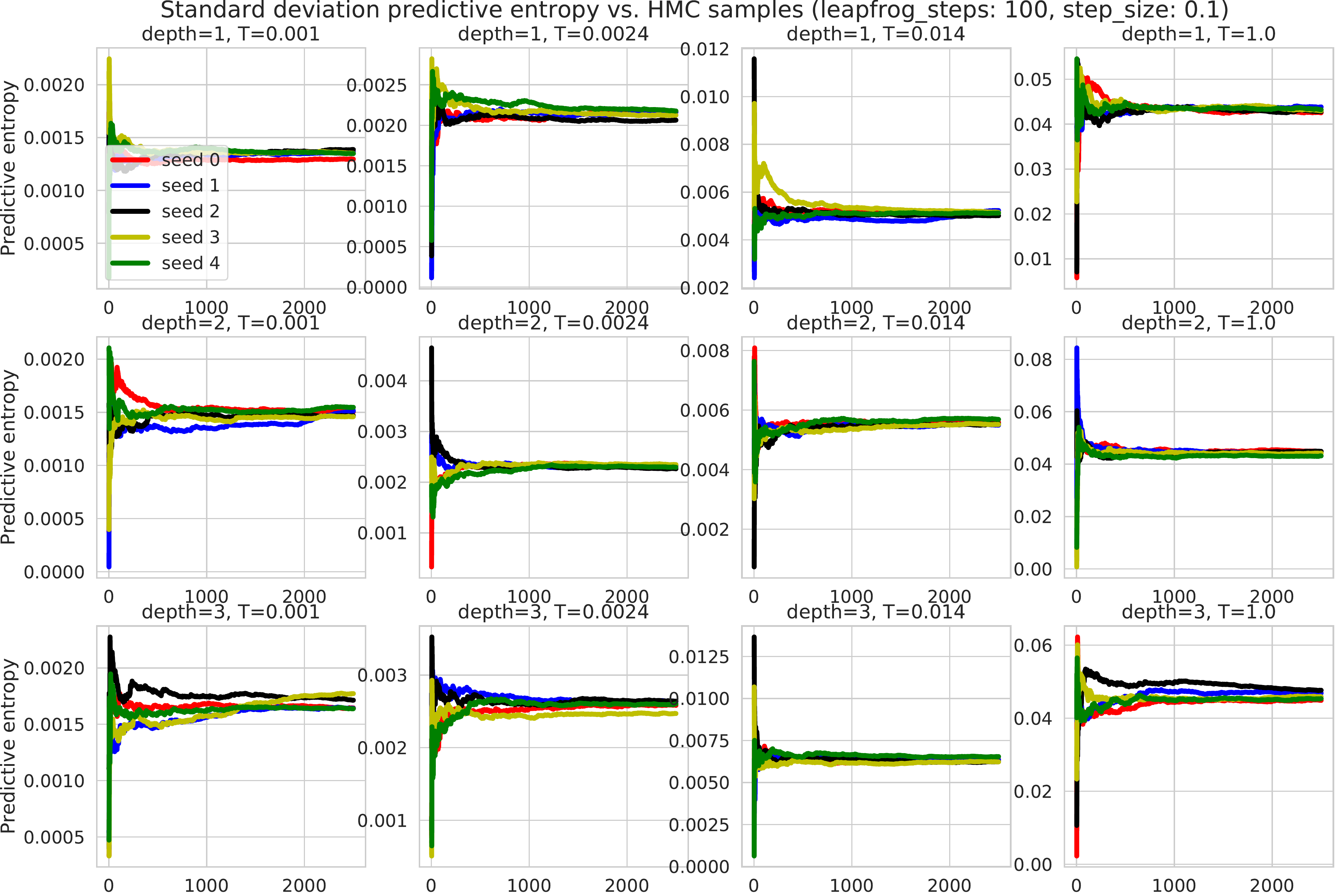}}%
\caption{Trace plots of the standard deviation of the predictive entropy (see definition in Section~\ref{sec:practical_usage_hmc}). We display the evolution of 5 different chains with respect to the $S=2500$ HMC samples collected after the burn-in phase, for various depths (rows) and temperatures (columns). See further discussions in Figure~\ref{fig:diagnostics_hmc_1}.}%
\label{fig:diagnostics_hmc_3}%
\end{figure*}

\clearpage

\bibliography{arxiv.bib}

\begin{thebibliography}{97}
\providecommand{\natexlab}[1]{#1}
\providecommand{\url}[1]{\texttt{#1}}
\expandafter\ifx\csname urlstyle\endcsname\relax
  \providecommand{\doi}[1]{doi: #1}\else
  \providecommand{\doi}{doi: \begingroup \urlstyle{rm}\Url}\fi

\bibitem[Abadi et~al.(2016)Abadi, Barham, Chen, Chen, Davis, Dean, Devin,
  Ghemawat, Irving, Isard, et~al.]{abadi2016tensorflow}
Abadi, M., Barham, P., Chen, J., Chen, Z., Davis, A., Dean, J., Devin, M.,
  Ghemawat, S., Irving, G., Isard, M., et~al.
\newblock Tensorflow: A system for large-scale machine learning.
\newblock In \emph{12th {USENIX} Symposium on Operating Systems Design and
  Implementation ({OSDI} 16)}, pp.\  265--283, 2016.

\bibitem[Ashukha et~al.(2020)Ashukha, Lyzhov, Molchanov, and
  Vetrov]{ashukha2019uncertaintyestimation}
Ashukha, A., Lyzhov, A., Molchanov, D., and Vetrov, D.
\newblock Pitfalls of in-domain uncertainty estimation and ensembling in deep
  learning.
\newblock In \emph{Eigth International Conference on Learning Representations
  ({ICLR} 2020)}, 2020.

\bibitem[Atanov et~al.(2018)Atanov, Ashukha, Molchanov, Neklyudov, and
  Vetrov]{atanov2018stochasticbatchnormalization}
Atanov, A., Ashukha, A., Molchanov, D., Neklyudov, K., and Vetrov, D.
\newblock Uncertainty estimation via stochastic batch normalization.
\newblock \emph{arXiv preprint arXiv:1802.04893}, 2018.

\bibitem[Bae et~al.(2018)Bae, Zhang, and Grosse]{bae2018kfacvb}
Bae, J., Zhang, G., and Grosse, R.
\newblock Eigenvalue corrected noisy natural gradient.
\newblock \emph{arXiv preprint arXiv:1811.12565}, 2018.

\bibitem[Baldock \& Marzari(2019)Baldock and Marzari]{baldock2019bayesiannn}
Baldock, R.~J. and Marzari, N.
\newblock Bayesian neural networks at finite temperature.
\newblock \emph{arXiv preprint, arXiv:1904.04154}, 2019.

\bibitem[Barber \& Bishop(1998)Barber and Bishop]{barber1998ensemblelearning}
Barber, D. and Bishop, C.~M.
\newblock Ensemble learning for multi-layer networks.
\newblock In \emph{Advances in neural information processing systems}, pp.\
  395--401, 1998.

\bibitem[Berger(1985)]{berger1985statisticaldecisiontheory}
Berger, J.~O.
\newblock \emph{Statistical decision theory and {Bayesian} analysis}.
\newblock Springer, 1985.

\bibitem[Betancourt \& Girolami(2015)Betancourt and
  Girolami]{betancourt2015hamiltonian}
Betancourt, M. and Girolami, M.
\newblock {Hamiltonian Monte Carlo} for hierarchical models.
\newblock \emph{Current trends in Bayesian methodology with applications},
  79:\penalty0 30, 2015.

\bibitem[Bhattacharya et~al.(2019)Bhattacharya, Pati, Yang,
  et~al.]{bhattacharya2019fractionalposteriors}
Bhattacharya, A., Pati, D., Yang, Y., et~al.
\newblock Bayesian fractional posteriors.
\newblock \emph{The Annals of Statistics}, 47\penalty0 (1):\penalty0 39--66,
  2019.

\bibitem[Blundell et~al.(2015)Blundell, Cornebise, Kavukcuoglu, and
  Wierstra]{blundell2015weightuncertainty}
Blundell, C., Cornebise, J., Kavukcuoglu, K., and Wierstra, D.
\newblock Weight uncertainty in neural network.
\newblock 37:\penalty0 1613--1622, 2015.

\bibitem[Brier(1950)]{brier1950verification}
Brier, G.~W.
\newblock Verification of forecasts expressed in terms of probability.
\newblock \emph{Monthly weather review}, 78\penalty0 (1):\penalty0 1--3, 1950.

\bibitem[Brooks et~al.(2011)Brooks, Gelman, Jones, and
  Meng]{brooks2011handbook}
Brooks, S., Gelman, A., Jones, G., and Meng, X.
\newblock \emph{Handbook of Markov Chain Monte Carlo}.
\newblock Chapman \& Hall/CRC Handbooks of Modern Statistical Methods. CRC
  Press, 2011.
\newblock ISBN 9781420079425.
\newblock URL \url{https://books.google.de/books?id=qfRsAIKZ4rIC}.

\bibitem[Chen et~al.(2015)Chen, Ding, and Carin]{chen2015convergencesgmcmc}
Chen, C., Ding, N., and Carin, L.
\newblock On the convergence of stochastic gradient {MCMC} algorithms with
  high-order integrators.
\newblock In \emph{Advances in Neural Information Processing Systems}, pp.\
  2278--2286, 2015.

\bibitem[Chen et~al.(2014)Chen, Fox, and Guestrin]{chen2014stochastichmc}
Chen, T., Fox, E., and Guestrin, C.
\newblock Stochastic gradient {Hamiltonian Monte Carlo}.
\newblock In \emph{International Conference on Machine Learning}, pp.\
  1683--1691, 2014.

\bibitem[de~G.~Matthews et~al.(2018)de~G.~Matthews, Hron, Rowland, Turner, and
  Ghahramani]{matthews2018gaussiannetworks}
de~G.~Matthews, A.~G., Hron, J., Rowland, M., Turner, R.~E., and Ghahramani, Z.
\newblock Gaussian process behaviour in wide deep neural networks.
\newblock In \emph{ICLR}, 2018.

\bibitem[Dieng et~al.(2018)Dieng, Ranganath, Altosaar, and
  Blei]{dieng2018noisin}
Dieng, A.~B., Ranganath, R., Altosaar, J., and Blei, D.~M.
\newblock Noisin: Unbiased regularization for recurrent neural networks.
\newblock \emph{arXiv preprint arXiv:1805.01500}, 2018.

\bibitem[Dillon et~al.(2017)Dillon, Langmore, Tran, Brevdo, Vasudevan, Moore,
  Patton, Alemi, Hoffman, and Saurous]{dillon2017tensorflow}
Dillon, J.~V., Langmore, I., Tran, D., Brevdo, E., Vasudevan, S., Moore, D.,
  Patton, B., Alemi, A., Hoffman, M., and Saurous, R.~A.
\newblock Tensorflow distributions.
\newblock \emph{arXiv preprint arXiv:1711.10604}, 2017.

\bibitem[Ding et~al.(2014)Ding, Fang, Babbush, Chen, Skeel, and
  Neven]{ding2014bayesian}
Ding, N., Fang, Y., Babbush, R., Chen, C., Skeel, R.~D., and Neven, H.
\newblock Bayesian sampling using stochastic gradient thermostats.
\newblock In \emph{Advances in neural information processing systems}, pp.\
  3203--3211, 2014.

\bibitem[Duane et~al.(1987)Duane, Kennedy, Pendleton, and
  Roweth]{duane1987hybrid}
Duane, S., Kennedy, A.~D., Pendleton, B.~J., and Roweth, D.
\newblock Hybrid {Monte Carlo}.
\newblock \emph{Physics letters B}, 195\penalty0 (2):\penalty0 216--222, 1987.

\bibitem[Earl \& Deem(2005)Earl and Deem]{earl2005paralleltempering}
Earl, D.~J. and Deem, M.~W.
\newblock Parallel tempering: Theory, applications, and new perspectives.
\newblock \emph{Physical Chemistry Chemical Physics}, 7\penalty0 (23):\penalty0
  3910--3916, 2005.

\bibitem[Flam-Shepherd et~al.(2017)Flam-Shepherd, Requeima, and
  Duvenaud]{flam2017mapping}
Flam-Shepherd, D., Requeima, J., and Duvenaud, D.
\newblock Mapping gaussian process priors to bayesian neural networks.
\newblock In \emph{NIPS Bayesian deep learning workshop}, 2017.

\bibitem[Fushiki et~al.(2005)]{fushiki2005bootstrapprediction}
Fushiki, T. et~al.
\newblock Bootstrap prediction and {Bayesian} prediction under misspecified
  models.
\newblock \emph{Bernoulli}, 11\penalty0 (4):\penalty0 747--758, 2005.

\bibitem[Gal \& Ghahramani(2016)Gal and Ghahramani]{gal2016dropout}
Gal, Y. and Ghahramani, Z.
\newblock Dropout as a {Bayesian} approximation: Representing model uncertainty
  in deep learning.
\newblock In \emph{international conference on machine learning}, pp.\
  1050--1059, 2016.

\bibitem[Garriga-Alonso et~al.(2019)Garriga-Alonso, Rasmussen, and
  Aitchison]{garrigaalonso2019convnetgp}
Garriga-Alonso, A., Rasmussen, C.~E., and Aitchison, L.
\newblock Deep convolutional networks as shallow gaussian processes.
\newblock In \emph{ICLR}, 2019.

\bibitem[Geisser(1993)]{geisser1993predictiveinference}
Geisser, S.
\newblock \emph{An Introduction to Predictive Inference}.
\newblock Chapman and Hall, New York, 1993.

\bibitem[Gelman \& Rubin(1992)Gelman and Rubin]{gelman1992psrf}
Gelman, A. and Rubin, D.~B.
\newblock Inference from iterative simulation using multiple sequences.
\newblock \emph{Statistical science}, 7\penalty0 (4):\penalty0 457--472, 1992.

\bibitem[Gelman et~al.(2013)Gelman, Carlin, Stern, Dunson, Vehtari, and
  Rubin]{gelman2013bayesiandataanalysis}
Gelman, A., Carlin, J.~B., Stern, H.~S., Dunson, D.~B., Vehtari, A., and Rubin,
  D.~B.
\newblock \emph{Bayesian data analysis}.
\newblock Chapman and Hall/CRC, 2013.

\bibitem[Germain et~al.(2016)Germain, Bach, Lacoste, and
  Lacoste-Julien]{germain2016pacbayesian}
Germain, P., Bach, F., Lacoste, A., and Lacoste-Julien, S.
\newblock Pac-bayesian theory meets bayesian inference.
\newblock In \emph{Advances in Neural Information Processing Systems}, pp.\
  1884--1892, 2016.

\bibitem[Glorot \& Bengio(2010)Glorot and Bengio]{glorot2010init}
Glorot, X. and Bengio, Y.
\newblock Understanding the difficulty of training deep feedforward neural
  networks.
\newblock In \emph{Proceedings of the thirteenth international conference on
  artificial intelligence and statistics}, pp.\  249--256, 2010.

\bibitem[Gr{\"u}nwald et~al.(2017)Gr{\"u}nwald, Van~Ommen,
  et~al.]{grunwald2017inconsistency}
Gr{\"u}nwald, P., Van~Ommen, T., et~al.
\newblock Inconsistency of {Bayesian} inference for misspecified linear models,
  and a proposal for repairing it.
\newblock \emph{Bayesian Analysis}, 12\penalty0 (4):\penalty0 1069--1103, 2017.

\bibitem[Hafner et~al.(2018)Hafner, Tran, Lillicrap, Irpan, and
  Davidson]{hafner2018noise}
Hafner, D., Tran, D., Lillicrap, T., Irpan, A., and Davidson, J.
\newblock Noise contrastive priors for functional uncertainty.
\newblock \emph{arXiv preprint arXiv:1807.09289}, 2018.

\bibitem[H{\"a}ggstr{\"o}m \& Rosenthal(2007)H{\"a}ggstr{\"o}m and
  Rosenthal]{haggstrom2007varianceclt}
H{\"a}ggstr{\"o}m, O. and Rosenthal, J.
\newblock On variance conditions for markov chain clts.
\newblock \emph{Electronic Communications in Probability}, 12:\penalty0
  454--464, 2007.

\bibitem[Hayou et~al.(2018)Hayou, Doucet, and
  Rousseau]{hayou2018initializationactivation}
Hayou, S., Doucet, A., and Rousseau, J.
\newblock On the selection of initialization and activation function for deep
  neural networks.
\newblock \emph{arXiv preprint arXiv:1805.08266}, 2018.

\bibitem[He et~al.(2015)He, Zhang, Ren, and Sun]{he2015prelu}
He, K., Zhang, X., Ren, S., and Sun, J.
\newblock Delving deep into rectifiers: Surpassing human-level performance on
  imagenet classification.
\newblock In \emph{Proceedings of the IEEE international conference on computer
  vision}, pp.\  1026--1034, 2015.

\bibitem[He et~al.(2016)He, Zhang, Ren, and Sun]{he2016resnet}
He, K., Zhang, X., Ren, S., and Sun, J.
\newblock Deep residual learning for image recognition.
\newblock In \emph{Proceedings of the IEEE conference on computer vision and
  pattern recognition}, pp.\  770--778, 2016.

\bibitem[Heber et~al.(2019)Heber, Trstanova, and Leimkuhler]{heber2019tati}
Heber, F., Trstanova, Z., and Leimkuhler, B.
\newblock Tati-thermodynamic analytics toolkit: Tensorflow-based software for
  posterior sampling in machine learning applications.
\newblock \emph{arXiv preprint arXiv:1903.08640}, 2019.

\bibitem[Heek \& Kalchbrenner(2020)Heek and Kalchbrenner]{heek2019sgmcmc}
Heek, J. and Kalchbrenner, N.
\newblock Bayesian inference for large scale image classification.
\newblock In \emph{International Conference on Learning Representations (ICLR
  2020)}, 2020.

\bibitem[Hinton \& Van~Camp(1993)Hinton and Van~Camp]{hinton1993mdl}
Hinton, G. and Van~Camp, D.
\newblock Keeping neural networks simple by minimizing the description length
  of the weights.
\newblock In \emph{in Proc. of the 6th Ann. ACM Conf. on Computational Learning
  Theory}, 1993.

\bibitem[Hoffman \& Gelman(2014)Hoffman and Gelman]{hoffman2014no}
Hoffman, M.~D. and Gelman, A.
\newblock The no-u-turn sampler: adaptively setting path lengths in
  {Hamiltonian Monte Carlo}.
\newblock \emph{Journal of Machine Learning Research}, 15\penalty0
  (1):\penalty0 1593--1623, 2014.

\bibitem[Inoue(2019)]{inoue2019multisampledropout}
Inoue, H.
\newblock Multi-sample dropout for accelerated training and better
  generalization.
\newblock \emph{arXiv preprint arXiv:1905.09788}, 2019.

\bibitem[Ioffe \& Szegedy(2015)Ioffe and Szegedy]{ioffe2015batchnormalization}
Ioffe, S. and Szegedy, C.
\newblock Batch normalization: Accelerating deep network training by reducing
  internal covariate shift.
\newblock \emph{arXiv preprint arXiv:1502.03167}, 2015.

\bibitem[Jansen(2013)]{jansen2013misspecification}
Jansen, L.
\newblock Robust {Bayesian} inference under model misspecification, 2013.
\newblock Master thesis.

\bibitem[Jones et~al.(2004)]{jones2004markovchainclt}
Jones, G.~L. et~al.
\newblock On the {Markov} chain central limit theorem.
\newblock \emph{Probability surveys}, 1\penalty0 (299-320):\penalty0 5--1,
  2004.

\bibitem[Kingma et~al.(2015)Kingma, Salimans, and
  Welling]{kingma2015variationaldropout}
Kingma, D.~P., Salimans, T., and Welling, M.
\newblock Variational dropout and the local reparameterization trick.
\newblock In \emph{Advances in Neural Information Processing Systems}, pp.\
  2575--2583, 2015.

\bibitem[Komaki(1996)]{komaki1996predictive}
Komaki, F.
\newblock On asymptotic properties of predictive distributions.
\newblock \emph{Biometrika}, 83\penalty0 (2):\penalty0 299--313, 1996.

\bibitem[Krizhevsky et~al.(2009)Krizhevsky, Hinton,
  et~al.]{krizhevsky2009cifar}
Krizhevsky, A., Hinton, G., et~al.
\newblock Learning multiple layers of features from tiny images.
\newblock Technical report, Citeseer, 2009.

\bibitem[Lakshminarayanan et~al.(2017)Lakshminarayanan, Pritzel, and
  Blundell]{lakshminarayanan2017deepensembles}
Lakshminarayanan, B., Pritzel, A., and Blundell, C.
\newblock Simple and scalable predictive uncertainty estimation using deep
  ensembles.
\newblock In \emph{Advances in Neural Information Processing Systems 30}. 2017.

\bibitem[Langevin(1908)]{langevin1908}
Langevin, P.
\newblock Sur la th{\'e}orie du mouvement brownien.
\newblock \emph{Compt. Rendus}, 146:\penalty0 530--533, 1908.

\bibitem[Lao et~al.(2020)Lao, Suter, Langmore, Chimisov, Saxena, Sountsov,
  Moore, Saurous, Hoffman, and Dillon]{lao2020tfpmcmc}
Lao, J., Suter, C., Langmore, I., Chimisov, C., Saxena, A., Sountsov, P.,
  Moore, D., Saurous, R.~A., Hoffman, M.~D., and Dillon, J.~V.
\newblock tfp.mcmc: Modern {Markov} chain {Monte Carlo} tools built for modern
  hardware, 2020.

\bibitem[Lee et~al.(2018)Lee, Bahri, Novak, Schoenholz, Pennington, and
  Sohl-Dickstein]{lee2018dnngp}
Lee, J., Bahri, Y., Novak, R., Schoenholz, S.~S., Pennington, J., and
  Sohl-Dickstein, J.
\newblock Deep neural networks as gaussian processes.
\newblock In \emph{ICLR}, 2018.

\bibitem[Leimkuhler \& Matthews(2016)Leimkuhler and
  Matthews]{leimkuhler2016molecular}
Leimkuhler, B. and Matthews, C.
\newblock \emph{Molecular Dynamics}.
\newblock Springer, 2016.

\bibitem[Leimkuhler et~al.(2019)Leimkuhler, Matthews, and
  Vlaar]{leimkuhler2019partitionedlangevin}
Leimkuhler, B., Matthews, C., and Vlaar, T.
\newblock Partitioned integrators for thermodynamic parameterization of neural
  networks.
\newblock \emph{arXiv preprint arXiv:1908.11843}, 2019.

\bibitem[Li et~al.(2016)Li, Chen, Carlson, and Carin]{li2016preconditionedsgld}
Li, C., Chen, C., Carlson, D., and Carin, L.
\newblock Preconditioned stochastic gradient {Langevin} dynamics for deep
  neural networks.
\newblock In \emph{Thirtieth AAAI Conference on Artificial Intelligence}, 2016.

\bibitem[Liao \& Berg(2019)Liao and Berg]{liao2019sharpeningjensen}
Liao, J. and Berg, A.
\newblock Sharpening jensen's inequality.
\newblock \emph{The American Statistician}, 73\penalty0 (3):\penalty0 278--281,
  2019.

\bibitem[Ma et~al.(2015)Ma, Chen, and Fox]{ma2015complete}
Ma, Y.-A., Chen, T., and Fox, E.
\newblock A complete recipe for stochastic gradient {MCMC}.
\newblock In \emph{Advances in Neural Information Processing Systems}, pp.\
  2917--2925, 2015.

\bibitem[MacKay et~al.(1995)]{mackay1995ensemblelearning}
MacKay, D.~J. et~al.
\newblock Ensemble learning and evidence maximization.
\newblock In \emph{Proc. Nips}, volume~10, pp.\  4083. Citeseer, 1995.

\bibitem[Mandt et~al.(2016)Mandt, McInerney, Abrol, Ranganath, and
  Blei]{DBLP:conf/aistats/MandtMARB16}
Mandt, S., McInerney, J., Abrol, F., Ranganath, R., and Blei, D.~M.
\newblock Variational tempering.
\newblock In \emph{Proceedings of the 19th International Conference on
  Artificial Intelligence and Statistics, {AISTATS}}, {JMLR} Workshop and
  Conference Proceedings, 2016.

\bibitem[Mandt et~al.(2017)Mandt, Hoffman, and Blei]{mandt2017stochastic}
Mandt, S., Hoffman, M.~D., and Blei, D.~M.
\newblock Stochastic gradient descent as approximate {Bayesian} inference.
\newblock \emph{The Journal of Machine Learning Research}, 18\penalty0
  (1):\penalty0 4873--4907, 2017.

\bibitem[Masegosa(2019)]{masegosa2019misspecification}
Masegosa, A.~R.
\newblock Learning under model misspecification: Applications to variational
  and ensemble methods.
\newblock \emph{arXiv preprint, arXiv:19012.08335}, 2019.

\bibitem[Masters \& Luschi(2018)Masters and Luschi]{masters2018batchsize}
Masters, D. and Luschi, C.
\newblock Revisiting small batch training for deep neural networks.
\newblock \emph{arXiv preprint arXiv:1804.07612}, 2018.

\bibitem[Murphy(2012)]{murphy2012machine}
Murphy, K.~P.
\newblock \emph{Machine learning: a probabilistic perspective}.
\newblock MIT press, 2012.

\bibitem[Naeini et~al.(2015)Naeini, Cooper, and
  Hauskrecht]{naeini2015obtaining}
Naeini, M.~P., Cooper, G., and Hauskrecht, M.
\newblock Obtaining well calibrated probabilities using bayesian binning.
\newblock In \emph{Twenty-Ninth AAAI Conference on Artificial Intelligence},
  2015.

\bibitem[Nalisnick et~al.(2019)Nalisnick, Hernandez-Lobato, and
  Smyth]{nalisnick2019dropout}
Nalisnick, E., Hernandez-Lobato, J.~M., and Smyth, P.
\newblock Dropout as a structured shrinkage prior.
\newblock In \emph{International Conference on Machine Learning}, pp.\
  4712--4722, 2019.

\bibitem[Neal(1995)]{neal1995bayesianneuralnetworks}
Neal, R.~M.
\newblock \emph{Bayesian learning for neural networks}.
\newblock PhD thesis, University of Toronto, 1995.

\bibitem[Neal et~al.(2011)]{neal2011mcmc}
Neal, R.~M. et~al.
\newblock {MCMC} using {Hamiltonian} dynamics.
\newblock \emph{Handbook of {Markov} chain {Monte Carlo}}, 2\penalty0
  (11):\penalty0 2, 2011.

\bibitem[Nemenman et~al.(2002)Nemenman, Shafee, and
  Bialek]{nemenman2002entropy}
Nemenman, I., Shafee, F., and Bialek, W.
\newblock Entropy and inference, revisited.
\newblock In \emph{Advances in neural information processing systems}, pp.\
  471--478, 2002.

\bibitem[Noh et~al.(2017)Noh, You, Mun, and Han]{noh2017regularizing}
Noh, H., You, T., Mun, J., and Han, B.
\newblock Regularizing deep neural networks by noise: Its interpretation and
  optimization.
\newblock In \emph{Advances in Neural Information Processing Systems}, pp.\
  5109--5118, 2017.

\bibitem[Novak et~al.(2019)Novak, Xiao, Bahri, Lee, Yang, Hron, Abolafia,
  Pennington, and Sohl-Dickstein]{novak2019bayesiandeepcnngp}
Novak, R., Xiao, L., Bahri, Y., Lee, J., Yang, G., Hron, J., Abolafia, D.~A.,
  Pennington, J., and Sohl-Dickstein, J.
\newblock Bayesian deep convolutional networks with many channels are gaussian
  processes.
\newblock In \emph{ICLR}, 2019.

\bibitem[Nowozin(2018)]{nowozin2018jvi}
Nowozin, S.
\newblock Debiasing evidence approximations: On importance-weighted
  autoencoders and jackknife variational inference.
\newblock In \emph{Sixth International Conference on Learning Representations
  ({ICLR} 2018)}, 2018.

\bibitem[Osawa et~al.(2019)Osawa, Swaroop, Jain, Eschenhagen, Turner, Yokota,
  and Khan]{osawa2019practicalbdl}
Osawa, K., Swaroop, S., Jain, A., Eschenhagen, R., Turner, R.~E., Yokota, R.,
  and Khan, M.~E.
\newblock Practical deep learning with {Bayesian} principles.
\newblock \emph{arXiv preprint arXiv:1906.02506}, 2019.

\bibitem[Ovadia et~al.(2019)Ovadia, Fertig, Ren, Nado, Sculley, Nowozin,
  Dillon, Lakshminarayanan, and Snoek]{ovadia2019modeluncertainty}
Ovadia, Y., Fertig, E., Ren, J., Nado, Z., Sculley, D., Nowozin, S., Dillon,
  J.~V., Lakshminarayanan, B., and Snoek, J.
\newblock Can you trust your model's uncertainty? evaluating predictive
  uncertainty under dataset shift.
\newblock In \emph{Advances in Neural Information Processing Systems (NeurIPS
  2019)}, 2019.

\bibitem[Pearce et~al.(2019)Pearce, Zaki, Brintrup, and
  Neely]{pearce2019expressive}
Pearce, T., Zaki, M., Brintrup, A., and Neely, A.
\newblock Expressive priors in bayesian neural networks: Kernel combinations
  and periodic functions.
\newblock \emph{arXiv preprint arXiv:1905.06076}, 2019.

\bibitem[Perez \& Wang(2017)Perez and Wang]{perez2017dataaugmentation}
Perez, L. and Wang, J.
\newblock The effectiveness of data augmentation in image classification using
  deep learning.
\newblock \emph{arXiv preprint arXiv:1712.04621}, 2017.

\bibitem[Ramamoorthi et~al.(2015)Ramamoorthi, Sriram, and
  Martin]{ramamoorthi2015misspecification}
Ramamoorthi, R.~V., Sriram, K., and Martin, R.
\newblock On posterior concentration in misspecified models.
\newblock \emph{Bayesian Anal.}, 10\penalty0 (4):\penalty0 759--789, 12 2015.
\newblock \doi{10.1214/15-BA941}.

\bibitem[S{\"a}rkk{\"a} \& Solin(2019)S{\"a}rkk{\"a} and
  Solin]{sarkka2019appliedsde}
S{\"a}rkk{\"a}, S. and Solin, A.
\newblock \emph{Applied stochastic differential equations}, volume~10.
\newblock Cambridge University Press, 2019.

\bibitem[Schoenholz et~al.(2017)Schoenholz, Gilmer, Ganguli, and
  Sohl-Dickstein]{schoenholz2017deepinformationpropagation}
Schoenholz, S.~S., Gilmer, J., Ganguli, S., and Sohl-Dickstein, J.
\newblock Deep information propagation.
\newblock In \emph{ICLR}, 2017.

\bibitem[Schucany et~al.(1971)Schucany, Gray, and
  Owen]{schucany1971biasreduction}
Schucany, W., Gray, H., and Owen, D.
\newblock On bias reduction in estimation.
\newblock \emph{Journal of the American Statistical Association}, 66\penalty0
  (335):\penalty0 524--533, 1971.

\bibitem[Shang et~al.(2015)Shang, Zhu, Leimkuhler, and
  Storkey]{shang2015covariancethermostat}
Shang, X., Zhu, Z., Leimkuhler, B., and Storkey, A.~J.
\newblock Covariance-controlled adaptive {Langevin} thermostat for large-scale
  {Bayesian} sampling.
\newblock In \emph{Advances in Neural Information Processing Systems}, pp.\
  37--45, 2015.

\bibitem[Shekhovtsov \& Flach(2018)Shekhovtsov and
  Flach]{shekhovtsov2018stochasticnormalizations}
Shekhovtsov, A. and Flach, B.
\newblock Stochastic normalizations as {Bayesian} learning.
\newblock \emph{arXiv preprint arXiv:1811.00639}, 2018.

\bibitem[Simsekli et~al.(2019)Simsekli, Sagun, and
  Gurbuzbalaban]{simsekli2019minibatchnoise}
Simsekli, U., Sagun, L., and Gurbuzbalaban, M.
\newblock A tail-index analysis of stochastic gradient noise in deep neural
  networks.
\newblock \emph{arXiv preprint arXiv:1901.06053}, 2019.

\bibitem[Singh \& Krishnan(2019)Singh and
  Krishnan]{singh2019filterresponsenormalization}
Singh, S. and Krishnan, S.
\newblock Filter response normalization layer: Eliminating batch dependence in
  the training of deep neural networks.
\newblock \emph{arXiv preprint arXiv:1911.09737}, 2019.

\bibitem[Srivastava et~al.(2014)Srivastava, Hinton, Krizhevsky, Sutskever, and
  Salakhutdinov]{srivastava2014dropout}
Srivastava, N., Hinton, G., Krizhevsky, A., Sutskever, I., and Salakhutdinov,
  R.
\newblock Dropout: a simple way to prevent neural networks from overfitting.
\newblock \emph{The journal of machine learning research}, 15\penalty0
  (1):\penalty0 1929--1958, 2014.

\bibitem[Sugita \& Okamoto(1999)Sugita and Okamoto]{sugita1999replicaexchange}
Sugita, Y. and Okamoto, Y.
\newblock Replica-exchange molecular dynamics method for protein folding.
\newblock \emph{Chemical physics letters}, 314\penalty0 (1-2):\penalty0
  141--151, 1999.

\bibitem[Sun et~al.(2019)Sun, Zhang, Shi, and Grosse]{sun2019functionalvb}
Sun, S., Zhang, G., Shi, J., and Grosse, R.
\newblock Functional variational {Bayesian} neural networks.
\newblock \emph{arXiv preprint arXiv:1903.05779}, 2019.

\bibitem[Sutskever et~al.(2013)Sutskever, Martens, Dahl, and
  Hinton]{sutskever2013momentum}
Sutskever, I., Martens, J., Dahl, G., and Hinton, G.
\newblock On the importance of initialization and momentum in deep learning.
\newblock In \emph{International conference on machine learning}, pp.\
  1139--1147, 2013.

\bibitem[Swendsen \& Wang(1986)Swendsen and
  Wang]{swendsen1986replicamontecarlo}
Swendsen, R.~H. and Wang, J.-S.
\newblock Replica {Monte Carlo} simulation of spin-glasses.
\newblock \emph{Physical review letters}, 57\penalty0 (21):\penalty0 2607,
  1986.

\bibitem[Tieleman \& Hinton(2012)Tieleman and Hinton]{tieleman2012rmsprop}
Tieleman, T. and Hinton, G.
\newblock {Lecture 6.5---{RmsProp}: Divide the gradient by a running average of
  its recent magnitude}.
\newblock Coursera: Neural Networks for Machine Learning, 2012.

\bibitem[Welling \& Teh(2011)Welling and Teh]{welling2011bayesian}
Welling, M. and Teh, Y.~W.
\newblock Bayesian learning via stochastic gradient {Langevin} dynamics.
\newblock In \emph{Proceedings of the 28th international conference on machine
  learning (ICML-11)}, pp.\  681--688, 2011.

\bibitem[Wen et~al.(2019)Wen, Tran, and Ba]{wen2019batchensemble}
Wen, Y., Tran, D., and Ba, J.
\newblock {BatchEnsemble}: Efficient ensemble of deep neural networks via
  rank-1 perturbation.
\newblock 2019.
\newblock {Bayesian} deep learning workshop 2019.

\bibitem[Wilson(2019)]{wilson2019bayesian}
Wilson, A.~G.
\newblock The case for {B}ayesian deep learning.
\newblock \emph{NYU Courant Technical Report}, 2019.
\newblock Accessible at \url{https://cims.nyu.edu/~andrewgw/caseforbdl.pdf}.

\bibitem[Wilson \& Izmailov(2020)Wilson and
  Izmailov]{wilson2020bayesianperspective}
Wilson, A.~G. and Izmailov, P.
\newblock Bayesian deep learning and a probabilistic perspective of
  generalization.
\newblock \emph{arXiv preprint arXiv:2002.08791}, 2020.

\bibitem[Wolpert \& Wolf(1995)Wolpert and Wolf]{wolpert1995estimatingentropy}
Wolpert, D.~H. and Wolf, D.~R.
\newblock Estimating functions of probability distributions from a finite set
  of samples.
\newblock \emph{Physical Review E}, 52\penalty0 (6):\penalty0 6841, 1995.

\bibitem[Yaida(2018)]{yaida2018fluctuationdissipationsgd}
Yaida, S.
\newblock Fluctuation-dissipation relations for stochastic gradient descent.
\newblock \emph{arXiv preprint arXiv:1810.00004}, 2018.

\bibitem[Yang(2019)]{yang2019wideneuralnetworksgp}
Yang, G.
\newblock Scaling limits of wide neural networks with weight sharing: Gaussian
  process behavior, gradient independence, and neural tangent kernel
  derivation.
\newblock \emph{arXiv preprint arXiv:1902.04760}, 2019.

\bibitem[Zhang et~al.(2018)Zhang, Sun, Duvenaud, and
  Grosse]{zhang2018noisynaturalgradient}
Zhang, G., Sun, S., Duvenaud, D., and Grosse, R.
\newblock Noisy natural gradient as variational inference.
\newblock \emph{International Conference on Machine Learning}, 2018.

\bibitem[Zhang et~al.(2020)Zhang, Li, Zhang, Chen, and
  Wilson]{zhang2019cyclicalsgmcmc}
Zhang, R., Li, C., Zhang, J., Chen, C., and Wilson, A.~G.
\newblock Cyclical stochastic gradient {MCMC} for {Bayesian} deep learning.
\newblock In \emph{International Conference on Learning Representations (ICLR
  2020)}, 2020.

\bibitem[Zhu et~al.(2019)Zhu, Wu, Yu, Wu, and Ma]{zhu2019anisotropicsgdnoise}
Zhu, Z., Wu, J., Yu, B., Wu, L., and Ma, J.
\newblock The anisotropic noise in stochastic gradient descent: Its behavior of
  escaping from sharp minima and regularization effects.
\newblock In \emph{Proceedings of the 36th International Conference on Machine
  Learning (ICML 2019)}, 2019.

\end{thebibliography}
\bibliographystyle{icml2020}

\end{document}